\def\eqref#1{equation~\ref{#1}}
\def\1{\bm{1}}
\def\vzero{{\bm{0}}}
\def\vb{{\bm{b}}}
\def\vc{{\bm{c}}}
\def\vh{{\bm{h}}}
\def\vr{{\bm{r}}}
\def\vs{{\bm{s}}}
\def\vx{{\bm{x}}}
\def\vy{{\bm{y}}}
\def\vz{{\bm{z}}}
\def\mA{{\bm{A}}}
\def\mB{{\bm{B}}}
\def\mI{{\bm{I}}}
\def\mN{{\bm{N}}}
\DeclareMathAlphabet{\mathsfit}{\encodingdefault}{\sfdefault}{m}{sl}
\SetMathAlphabet{\mathsfit}{bold}{\encodingdefault}{\sfdefault}{bx}{n}
\def\gD{{\mathcal{D}}}
\def\gF{{\mathcal{F}}}
\def\gG{{\mathcal{G}}}
\def\gH{{\mathcal{H}}}
\def\gL{{\mathcal{L}}}
\def\sQ{{\mathbb{Q}}}
\DeclareMathOperator*{\argmax}{arg\,max}
\newtheorem{definition}{Definition}
\newtheorem{lemma}{Lemma}
\newtheorem{proposition}{Proposition}
\definecolor{pandacolor}{RGB}{0,128,0}
\newcommand{\hy}[1]{\textcolor{pandacolor}{#1}}
\title{DIG-MILP: a \underline{D}eep \underline{I}nstance \underline{G}enerator \\for Mixed-Integer Linear Programming\\ with Feasibility Guarantee}
\author{Haoyu Wang \\
Georgia Tech\\
\scriptsize{\texttt{haoyu.wang@gatech.edu}} \\
\And
Jialin Liu \\
Damo Academy, Alibaba US \\
\scriptsize{\texttt{jialin.liu@alibaba-inc.com}} \\
\And
Xiaohan Chen \\
Damo Academy, Alibaba US \\
\scriptsize{\texttt{xiaohan.chen@alibaba-inc.com}} \\
\And
Xinshang Wang\\
Damo Academy, Alibaba US \\
\scriptsize{\texttt{xinshang.w@alibaba-inc.com}}
\And
Pan Li\\
Georgia Tech\\
\scriptsize{\texttt{panli@gatech.edu}}
\And 
Wotao Yin\\
Damo Academy, Alibaba US \\
\scriptsize{\texttt{wotao.yin@alibaba-inc.com}}
}
\begin{document}

\newcommand{\dgm}{DIG-MILP}  
\newcommand{\jl}[1]{\textcolor{cyan}{[JL: #1]}}
\newcommand{\pan}[1]{\textcolor{blue}{[P: #1]}}

\maketitle

\begin{abstract}
Mixed-integer linear programming (MILP) stands as a notable NP-hard problem pivotal to numerous crucial industrial applications. The development of effective algorithms, the tuning of solvers, and the training of machine learning models for MILP resolution all hinge on access to extensive, diverse, and representative data. Yet compared to the abundant naturally occurring data in image and text realms, MILP is markedly data deficient, underscoring the vital role of synthetic MILP generation. We present \dgm, a deep generative framework based on variational auto-encoder (VAE), adept at extracting deep-level structural features from highly limited MILP data and producing instances that closely mirror the target data. Notably, by leveraging the MILP duality, \dgm{} guarantees a correct and complete generation space as well as ensures the boundedness and feasibility of the generated instances. Our empirical study highlights the novelty and quality of the instances generated by \dgm{} through two distinct downstream tasks: (S1) Data sharing, where solver solution times correlate highly positive between original and \dgm-generated instances, allowing data sharing for solver tuning without publishing the original data; (S2) Data Augmentation, wherein the \dgm-generated instances bolster the generalization performance of machine learning models tasked with resolving MILP problems\footnote{code is available at \url{https://github.com/Graph-COM/DIG_MILP.git}}.
\end{abstract}

\section{Introduction}
\label{sec:intro}
Mixed integer linear programming (MILP) is a prominent problem central to operations research (OR)~\citep{achterberg2013mixed,wolsey2020integer}. It forms the basis for modeling numerous crucial industrial applications, including but not limited to supply chain management~\citep{hugos2018essentials}, production scheduling~\citep{branke2015automated}, financial portfolio optimization~\citep{mansini2015linear}, and network design~\citep{al2017technologies,radosavovic2020designing}. 
This article aims to answer the question: \textit{How can one produce a series of high-quality MILP instances?} The motivation behind this inquiry is illustrated through the subsequent scenarios:

\textbf{(Scenario I).} In industry, clients from real-world business seek specialized companies to develop or fine-tune intricate solver systems~\citep{cplex2009v12,bestuzheva2021scip,gurobi} for solving MILP problems. The empirical success of the systems heavily depends on well-tuned hyper-parameters for the solvers, which demands ample and representative testing cases that accurately reflect the actual cases. However, real data is often scarce during the early stages of a business. In addition, clients are typically reluctant to publish data that might encompass some specific information (e.g., schedules or contract stipulations for flight arrangement~\citep{richards2002aircraft, roling2008optimal}, platform costs or audience data for ad placements~\citep{rodriguez2016automatic}). These scenarios intensify the emergent need for generating instances that closely mirror the target data.

\textbf{(Scenario II).} In academia, beyond the improvement of algorithms~\citep{lawler1966branch, gamrath2015progress} for solving MILP, recent efforts have explored the use of machine learning (ML), which bypasses the need for expert knowledge and instead leverages historical data to foster accelerated resolutions~\citep{khalil2016learning,khalil2017learning,nair2020solving}. Notably, the efficacy of ML-driven approaches relies on high-quality, large-capacity, and representative training data~\citep{lu2022roco}.

\begin{wrapfigure}{r}{0.5\textwidth}
    \centering
    \includegraphics[width = 0.5\textwidth]{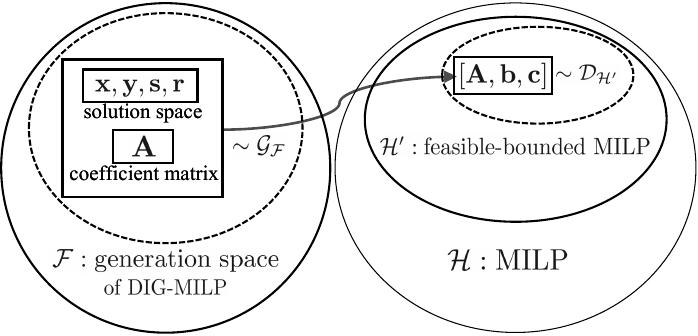}
    \vspace{-0.6cm}
    \caption{\dgm{} generates feasible-bounded instances that resemble the target MILP data from distribution $\mathcal{D}_{\mathcal{H'}}$ by learning to sample the coefficient matrix along with a set of feasible solutions for both the primal format and dual format of the linear relaxation from the corresponding distribution $\mathcal{G}_{\mathcal{F}}$. See detailed explanations in Section.~\ref{sec:methodology}.}
    \label{fig:dgm_overview}
    \vspace{-0.1cm}
\end{wrapfigure}

Given the scarce availability of real-world datasets~\citep{gleixner2021miplib}, the scenarios mentioned above underscore the motivation to synthetically generate novel instances that resemble the limited existing MILP data. To meet the requirements of both the industrial and academic sectors, the challenge in synthetic MILP generation lies in ensuring feasibility-boundedness, representativeness, and diversity. 
``Feasibility-boundedness'' refers to the general expectation in business scenarios that MILP problems should be bounded and feasible, where, otherwise, the applicability of the modeling and the corresponding real-world problem would diminish significantly. ``Representativeness'' means that the generated data should closely mirror the original data in terms of the problem scale and modeling logic (the structure of objective and constraints). ``Diversity'' implies that the generation method should be capable of catering to different problem formulations and encompassing extreme cases such as large dynamic ranges or degeneracy~\citep{gamrath2020exploratory}. Existing methods for MILP generation fall short of fulfilling the criteria above: Some are tailored to specific problems (e.g., knapsack~\citep{hill2011test} and quadratic assignment~\citep{drugan2013instance}), requiring substantial expert effort for domain knowledge, hence struggling to generalize across different problems and failing in diversity; The others sample new instances in an embedding space by manipulating certain statistics~\citep{smith2015generating,bowly2020generation,bowly2019stress}. The latter methods, which model MILPs' coefficients with simple distributions such as Gaussian distributions, generate instances with very limited structural characters, leading to not being representative enough.

With this in mind, we introduce \dgm{}, a deep generative framework for MILP based on variational auto-encoder (VAE)~\citep{kingma2013auto,kipf2016variational}. By employing deep neural networks (NNs) to extract the structural information, \dgm{} enables the generation of ``representative'' data that resembles the original samples without expert knowledge. \dgm{} leverages the MILP duality theories to ensure the feasibility and boundedness of each generated instance by controlling its primal format and the dual format of its linear relaxation having at least a feasible solution, which achieves the ``feasibility-boundedness''  of the generated data.
Moreover, any feasible-bounded MILP is inside the generation space of \dgm, meeting the demand for ``diversity''. An illustration of \dgm's generation strategy is shown in Figure.~\ref{fig:dgm_overview}. Recognizing the limited original data along with the requirements on scalability and numerical precision in MILP generation, instead of generating from scratch, \dgm{} iteratively modifies parts of existing MILPs, allowing control on the degree of structural similarity towards the original data. 

We conduct two downstream tasks to validate the quality and novelty of \dgm-generated instances, corresponding to the motivation of data generation in industry and in academia respectively. Specifically, the first task involves MILP problem sharing for solver hyper-parameter tuning without publishing original data. Across four distinct problems, the solution time of solver SCIP~\citep{bestuzheva2021scip} exhibits a highly positive correlation between the \dgm-generated instances and the original data w.r.t. different hyper-parameter sets.
The other task is envisioned as data augmentation, where the generated instances assist in training NNs to predict the optimal objective values for MILP problems~\citep{chen2022representing}. Models trained on datasets augmented with \dgm-generated instances demonstrate enhanced generalization capabilities.

\section{Related Work}
In the following, we discuss works on MILP generation.
In light of Hooker's proposals~\citep{hooker1994needed,hooker1995testing}, research on MILP generation diverges into two paths. The first focuses on leveraging expert domain knowledge to create generators for specific problems such as set covering~\citep{balas1980set}, traveling sales person~\citep{pilcher1992partial,vander1995heuristic}, graph colouring~\citep{culberson2002graph}, knapsack~\citep{hill2011test}, and quadratic assignment~\citep{drugan2013instance}. This specificity causes poor generalization across different problems and thus fails diversity.
In contrast, the second path aims at generating general MILPs. \cite{asahiro1996random} propose to generate completely random instances, which is inadequate for producing instances with specific distributional features~\citep{hill2000effects}. \cite{bowly2019stress,bowly2020generation} attempt to sample feasible instances similar to target data by manually controlling distributions in an embedding space. The formulation used in~\citep{bowly2019stress} to guarantee feasibility is similar to our method, however, its manual feature extraction and statistic control by simple distributions leads to instances with too limited structural characteristics to be representative enough.
Inspired by~\cite{bowly2019stress}, \dgm{} generates instances from the solution space and uses DNNs to dig out more details, aiming to delineate the structural attributes more precisely.

\section{Methodology}
\label{sec:methodology}
We start by providing a preliminary background on MILP generation. Subsequently, we discuss the theoretical foundation based on which \dgm's generation strategy ensures the feasibility and boundedness of its generated instances. Finally, we delve into the training and inference process of \dgm{} along with its neural network architecture.

\subsection{Preliminaries}
Given a triplet of coefficient matrix $\mA \in \mathbb{R}^{m\times n}$, right-hand side constant $\vb \in \mathbb{R}^m$, and objective coefficient $\vc \in \mathbb{R}^n$, an MILP is defined as:
\begin{equation}
\label{equ:primal_format}
\textbf{MILP}(\mA,\vb,\vc): \quad \max_{\vx} \vc^\top \vx, \quad  \text{s.t. } \mA \ \vx \leq \vb, \ \vx \in \mathbb{Z}^{n}_{\geq 0}.
\end{equation}
To solve MILP is to identify a set of non-negative integer variables that maximize the objective function while satisfying a series of linear constraints. Merely finding a set of feasible solutions to such a problem could be NP-hard. Within the entire MILP space $\mathcal{H} =\{[\mA, \vb, \vc]: \mA \in \mathbb{R}^{m \times n}, \vb \in \mathbb{R}^{m}, \vc \in \mathbb{R}^{n}\}$, the majority of MILP problems are infeasible or unbounded. However, \textit{In real-world business scenarios, MILPs derived from practical issues are often expected to be feasible, bounded, and yield an optimal solution\footnote{Definitions of boundedness, feasibility, and optimal solution of MILP in Definition.~\ref{def:feasibility_milp}~\ref{def:boundedness_milp}~\ref{def:optimal_solution_milp} in the appendix.}}, otherwise the modeling for the practical problem would be meaningless. Therefore, we are particularly interested in MILPs from the following space that corresponds to feasible-bounded instances only: \footnote{Narrowing from the real domain to the rational domain is common in MILP studies to avoid cases where an MILP is feasible and bounded but lacks an optimal solution~\cite{schrijver1998theory}. For example, $\min \sqrt{3} x_1 - x_2, \ \text{s.t.} \ \sqrt{3} x_1 - x_2 \geq 0, x_1 \geq 1, \vx \in \mathbb{Z}^2_{\geq 0}$. No feasible solution has objective equal to zero, but there are feasible solutions with objective arbitrarily close to zero.}
\[\mathcal{H'}:=\{ [\mA, \vb, \vc]: \mA \in \sQ^{m \times n}, \vb \in \sQ^{m}, \vc \in \sQ^{n} \text{ and MILP}(\mA, \vb, \vc) \text{ is feasible and bounded.} \}.\]

Suppose a target MILP dataset $D$ that models a particular business scenario is sampled from a distribution $\mathcal{D}_{\mathcal{H'}}(\mA, \vb, \vc)$ defined on $\gH'$, the task of MILP instance generation is to approximate the distribution $\mathcal{D}_{\mathcal{H'}}$ and sample novel MILP instances from it.

\subsection{\dgm{} with Feasibility Guarantee}

\label{sec:feasibility_guarantee}

 An intuitive idea for MILP generation is to directly sample $[\mA, \vb, \vc]$ from $\gD_{\gH'}$, which is practically hard to implement as it's hard to guarantee the generated instance to be feasible-bounded. 

According to MILP duality theories, we observe that as long as \dgm{} could ensure that a generated instance's primal format $\text{MILP}(\mA, \vb, \vc)$ and the dual format of its linear relaxation $\text{DualLP}(\mA, \vb, \vc)$ (as defined in Equation.~\ref{equ:dual_format}) both have at least one set of feasible solutions, then the newly generated instance will be guaranteed to be feasible-bounded (as proved in Proposition.~\ref{thm:main}). 
\begin{equation}
\label{equ:dual_format}
\begin{aligned}
\textbf{DualLP}(\mA,\vb,\vc): \quad \min_{\vy} \vb^\top \vy, \quad \text{s.t. }  \ \mA^\top \vy \geq \vc, \  \vy \geq 0,
\end{aligned}
\end{equation}
To guarantee the existence of feasible solutions to both problems, inspired by \citep{bowly2019stress}, 
we propose to sample the instances from another space $\mathcal{F}$, where
\begin{equation}
\label{equ:generation_space}
\begin{aligned}
\mathcal{F} := \{[\mA, \vx, \vy, \vs, \vr]: \mA \in \mathbb{Q}^{m \times n}, \vx \in \mathbb{Z}^{n}_{\geq 0}, \vy \in \mathbb{Q}^{m}_{\geq 0}, \vs \in \mathbb{Q}^n_{\geq 0}, \vr \in \mathbb{Q}^{m}_{\geq 0}\}.
\end{aligned}
\end{equation}
$\mathcal{F}$ defines an alternative space to represent feasible-bounded MILPs, with each element $[\mA, \vx, \vy, \vs, \vr]$ consisting of the coefficient matrix $\mA$ along with a set of feasible solutions $\vx, \vy$ to $\text{MILP}(\mA, \vb, \vc)$ and $\text{DualLP}(\mA, \vb, \vc)$, respectively, where $\vb, \vc$ are determined by the corresponding slacks $\vs, \vr$ via the equalities defined in Equation.~\ref{equ:slack_variables}. By leveraging this idea, \dgm{} aims to learn a distribution $\mathcal{G}_{\mathcal{F}}$ over the space of $\mathcal{F}$ to sample $[\mA, \vx, \vy, \vs, \vr]$, which can be further transformed into  $[\mA, \vb, \vc]$ that defines an MILP problem based on Equation.~\ref{equ:slack_variables}.
\begin{equation}
\label{equ:slack_variables}
\begin{aligned}
    \textbf{Slack Variables:} \quad \mA \vx + \vr = \vb, \mA^\top \vy - \vs = \vc, \quad \text{where} \ \vr \in \mathbb{Q}^{m}_{\geq 0}, \vs \in \mathbb{Q}^{n}_{\geq 0}
\end{aligned}
\end{equation}
Such a generation strategy offers theoretical guarantees on the boundedness and feasibility of the generated instances, ensuring the ``feasibility-boundedness'' of the produced data. Moreover, all the feasible and bounded MILPs in $\gH'$ correspond to at least a tuple $[\mA, \vx, \vy, \vs, \vr]$. Therefore, this procedure also offers theoretical assurances for the capability to produce ``diverse'' instances. These points are formally stated in Proposition.~\ref{thm:main}. See detailed proof in~\ref{app:thm_main_proof} in the appendix.

\begin{proposition}[\textbf{Boundedness and Feasibility Guarantee of \dgm{}}]
\label{thm:main}
    \dgm{} guarantees to produce feasible-bounded MILP instances only, and any feasible-bounded MILP could be generated by \dgm{}. In other words, it holds that 
        $\gH' = \Big\{ [\mA,\vb,\vc]: \vb = \mA \vx + \vr, \vc = \mA^\top \vy - \vs, [\mA, \vx, \vy, \vs, \vr] \in \gF \Big\}.$
\end{proposition}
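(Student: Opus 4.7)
The proof proposal is to establish the set equality by double inclusion, with the forward direction relying on weak LP duality and the reverse direction relying on LP strong duality together with a standard rationality argument.

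For the inclusion $\supseteq$, the plan is to fix any $[\mA,\vx,\vy,\vs,\vr] \in \gF$ and define $\vb = \mA\vx + \vr$, $\vc = \mA^\top\vy - \vs$. Rationality of $[\mA,\vb,\vc]$ is immediate from closure of $\sQ$ under addition/multiplication together with $\mathbb{Z} \subset \sQ$. For feasibility of $\textbf{MILP}(\mA,\vb,\vc)$, I would exhibit $\vx$ itself as a feasible point: since $\vr \geq 0$, we have $\mA\vx = \vb - \vr \leq \vb$, and by definition $\vx \in \sZ^{n}_{\geq 0}$. For boundedness, the key step is to observe that $\vy$ is feasible for $\textbf{DualLP}(\mA,\vb,\vc)$ since $\mA^\top \vy = \vc + \vs \geq \vc$ and $\vy \geq 0$; then weak LP duality applied to the LP relaxation gives $\vc^\top \vx' \leq \vb^\top \vy$ for every feasible $\vx'$ of the LP relaxation, hence for every feasible integer $\vx'$ of the MILP, so the MILP is bounded.

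For the inclusion $\subseteq$, I would start from an arbitrary $[\mA,\vb,\vc] \in \gH'$ and construct the needed tuple. Feasibility of the MILP yields some integer $\vx \in \sZ^n_{\geq 0}$ with $\mA\vx \leq \vb$; setting $\vr := \vb - \mA\vx$ gives $\vr \in \sQ^m_{\geq 0}$ by rationality of $\mA,\vb$. The more delicate step is producing the dual witness $\vy$. Here I would first upgrade ``MILP is bounded'' to ``LP relaxation is bounded''. The standard argument is contrapositive: if the LP relaxation were unbounded, there would exist a recession direction $\vd \geq 0$ with $\mA\vd \leq 0$ and $\vc^\top\vd > 0$; by rationality of $\mA,\vc$ such a $\vd$ can be chosen in $\sQ^n_{\geq 0}$ and scaled to $\sZ^n_{\geq 0}$, and then $\vx + k\vd$ is MILP-feasible with unbounded objective, contradicting boundedness of the MILP. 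Once the LP relaxation is known to be feasible and bounded, strong LP duality (in its rational form) gives a rational optimal dual solution $\vy \in \sQ^m_{\geq 0}$ with $\mA^\top \vy \geq \vc$; then $\vs := \mA^\top\vy - \vc \in \sQ^n_{\geq 0}$ completes the tuple.

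The only step that is not essentially a bookkeeping exercise is the passage from MILP-boundedness to LP-relaxation-boundedness, which is where rationality of $[\mA,\vb,\vc]$ is genuinely used (this is the reason $\gH'$ was defined over $\sQ$ rather than $\sR$ in the first place, as footnoted in the excerpt). Everything else is a direct application of weak duality, strong duality for rational LPs, and nonnegativity of the slacks $\vr,\vs$. I would present the two inclusions as two short paragraphs and isolate the LP-relaxation boundedness claim as a one-line sub-argument so the reader can see where rationality enters.
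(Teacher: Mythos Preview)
Your proposal is correct and follows the same double-inclusion skeleton as the paper: the $\supseteq$ direction via exhibiting $\vx$ as a primal feasible point and $\vy$ as a dual feasible point, then invoking weak duality, matches the paper exactly. The $\subseteq$ direction also has the same architecture (extract $\vx,\vr$ from MILP feasibility; argue the LP relaxation is bounded; extract a rational dual $\vy$ and set $\vs = \mA^\top\vy - \vc$), but the two sub-steps are handled differently. For ``MILP bounded $\Rightarrow$ LP relaxation bounded'' you give a self-contained recession-direction argument (rational ray, clear denominators, translate by $k\vd$), whereas the paper packages this into a lemma that cites \cite{byrd1987recognizing}; your version is more elementary and makes the role of rationality fully explicit. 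Conversely, for the existence of a \emph{rational} dual feasible $\vy$ you invoke ``strong duality in its rational form'' as a black box, while the paper spells this out: it rewrites $\textbf{DualLP}$ in standard equality form and observes that a basic optimal solution satisfies $\vy_B = (\mB')^{-1}\vc$ with $\mB'$ a rational submatrix, hence $\vy$ is rational. Both routes are valid; the paper's is more concrete on the rationality point, yours on the boundedness point.
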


\subsection{Generation Process and Architecture}
Having shown the equivalence between sampling from space $\gF$ and $\gH'$, we then present how \dgm{} learns a distribution $\gG_{\gF}$ to sample $[\mA, \vx, \vy, \vx, \vr]$ from. 
We encode each $[\mA, \vx, \vy, \vs, \vr]$ as a variable-constraint (VC) bipartite graph $G(\mathcal{V}, \mathcal{C}, \mathcal{E})$: On side $\mathcal{V}$, each node in $\{v_1,...,v_m\}$ corresponds to a variable, while on $\mathcal{C}$ side, each node in $\{c_1,...,c_m\}$ represents a constraint. Edges in $\mathcal{E}$ connect constraints to variables according to the non-zero entries in the coefficient matrix $\mA$, implying that $\mA$ serves as the adjacency matrix of graph $G$. The input features of nodes and edges are detailed in Table.~\ref{tab:vc_encoding}. With this graph representation, we transform the MILP generation challenge into a graph generation task. \dgm{} iteratively modifies part of the original graph to produce new graphs. 
\begin{figure}[t]
    \centering
    \includegraphics[width = 0.8 \textwidth]{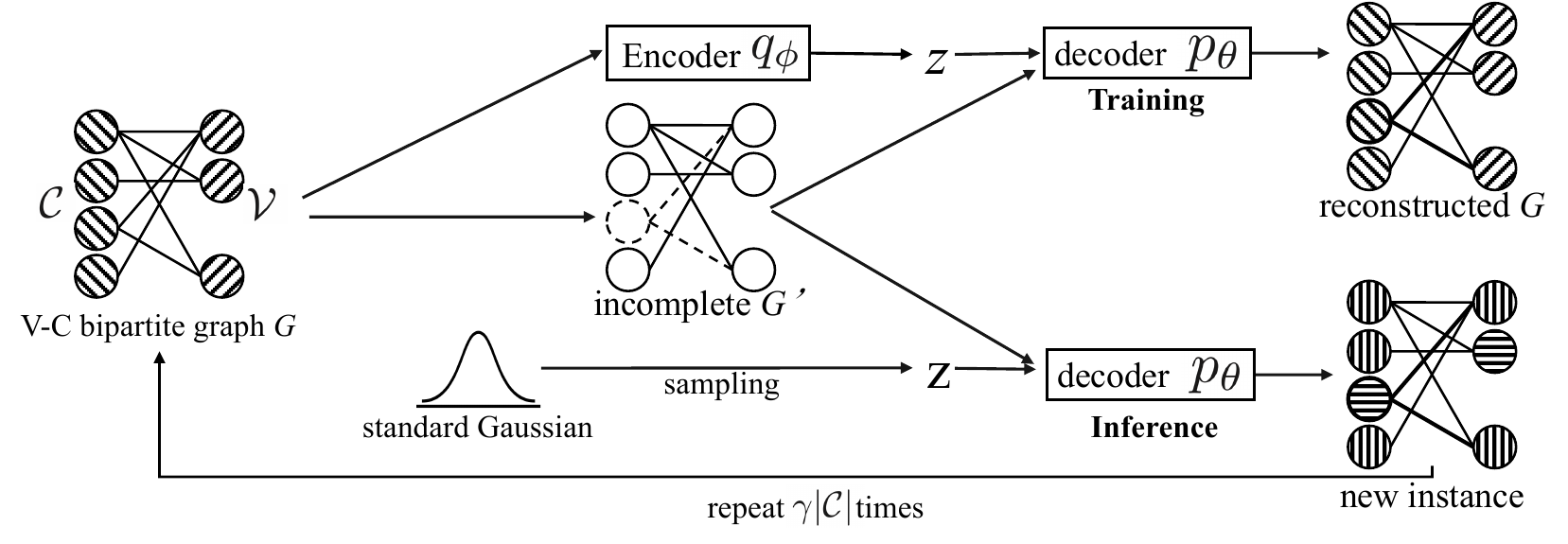}
    \vspace{-0.3cm}
    \caption{The training and inference pipeline of \dgm. In each training step, \dgm{} removes a random constraint node, its connected edges, along with the solution and slack features on all the nodes, resulting in an incomplete graph $G'$. The training objective of \dgm{} is to reconstruct $G$ from $G'$ and $\vz$ sampled by the encoder $q_{\phi}$. As to inference, \dgm{} employs an auto-regressive approach, generating new instances by iteratively modifying the existing MILPs.}
    \label{fig:dgm_pipeline}
    \vspace{-0.3cm}
\end{figure}


\begin{wraptable}{r}{0.43 \textwidth}
\renewcommand{\arraystretch}{1.1} 
\setlength{\tabcolsep}{4pt}
\vspace{-0.1cm}
\centering
\captionsetup{font=small} 
\caption{The input encoding into $G$ from MILP.}
\vspace{-0.3cm}
\label{tab:vc_encoding}
\begin{tabular}{cc}
\hline
object & feature \\ \hline
\multirow{3}{*}{\begin{tabular}[c]{@{}c@{}}constraint-\\ nodes:\\ $\mathcal{C} = \{c_1...c_m\}$\end{tabular}} & all 0's \\
 & $\vy = [y_1,...,y_m]^\top$ \\
 & $\vr = [r_1,...,r_m]^\top$ \\ \hline
\multirow{3}{*}{\begin{tabular}[c]{@{}c@{}}variable-\\ nodes:\\ $\mathcal{V}=\{v_1...v_n\}$\end{tabular}} & all 1's \\
 & $\vx = [x_1,...,x_n]^\top$ \\
 & $\vs = [s_1,...,s_n]^\top$ \\  \hline
edge $\mathcal{E}$ & non-zero weights in $\mA$ \\ \hline
\end{tabular}
\vspace{-0.5cm}
\end{wraptable}

\textbf{Generation pipeline}
We display the training and inference pipeline in Figure.~\ref{fig:dgm_pipeline}. 
As illustrated in Algorithm.~\ref{alg:dgm_train}, on each training step of \dgm, we randomly select and remove a constraint node $c_i$ (corresponding to the $i$-th constraint) from the bipartite graph, along with all its connected edges $\mathcal{E}_G(c_i)$. Concurrently, we erase the features of the solution space $\vx, \vy, \vs, \vr$ on all the nodes, resulting in an incomplete graph $G'({\mathcal{C} \backslash c_i}_{- \vy, \vs}; \mathcal{V}_{-\vx, \vr}; \mathcal{E} \backslash \mathcal{E}_G(c_i))$. The training objective is to learn \dgm{} to reconstruct $G$ from the given $G'$ by maximizing the log likelihood:

\begin{equation}
\centering
\label{equ:max_likelihhod}
\argmax_{\theta,\phi} \mathbb{E}_{G \sim D} \mathbb{E}_{G' \sim p(G'|G)} \log \mathbb{P}(G|G';\theta,\phi),
\end{equation}
where $p(G'|G)$ refers to randomly removing structures along with features to produce the incomplete graph, $\theta$ and $\phi$ denote the NN parameters. To address the dependency issues and foster diversity into generation, we adhere to the standard procedure in VAEs~\citep{kingma2013auto,kipf2016variational} by introducing a latent variable $\vz = [z_1,...,z_{m+n}]$ with the assumption that $\vz$ is independent with $G'$. Utilizing the principles of the variational evidence lower bound (ELBO), we endeavor to maximize the training objective through the optimization of the ensuing loss function:
\begin{small}
\begin{equation}
\label{equ:loss}
    \min_{\theta,\phi} \mathcal{L}_{\theta,\phi} = \mathbb{E}_{G \sim D} \mathbb{E}_{G' \sim p(G'|G)} \left[ \alpha \mathbb{E}_{\vz \sim q_{\phi}(\vz|G)} [- \log p_{\theta}(G| G',\vz)] + \mathcal{D}_{KL}[q_{\phi}(\vz|G) \Vert \mathcal{N}(0,I)] \right], 
\end{equation}
\end{small}where the decoder parameterized by $\theta$ is to adeptly reconstruct graph $G$ based on the latent variables $\vz$ and the incomplete graph $G'$; the encoder parameterized by $\phi$ is to depict the posterior distribution of $\vz$ which is required to align with the prior standard Gaussian. The hyper-parameter $\alpha$ functions as a balancing factor between the two parts of the loss. See detailed derivation of the loss in ~\ref{app:derivation_loss} in the appendix. During training, \dgm{} modifies only one constraint of the data at a time. In the inference phase, the graph rebuilt after removing a constraint can be fed back as an input, allowing iterative modifications to the original data. The number of iterations controls the degree of structural similarity to the original problem. The inference procedure is shown in Algorithm.~\ref{alg:dgm_inference}, where $\gamma |\mathcal{C}$ denotes the number of iterations to remove a constraint.

\vspace{-0.3cm}
\begin{minipage}{0.48\textwidth}
\begin{algorithm}[H]
\caption{\dgm{} Training}
\label{alg:dgm_train}
\begin{algorithmic}[1]
\Require: dataset $D$, epoch $N$, batch size $B$
\State Solve MILPs for $\{[\vx,\vy,\vs,\vr]\}$ over $D$
\State Encode MILPs into graphs $\{G(\mathcal{V}, 
\mathcal{C}, \mathcal{E})\}$
\For{epoch=1,...,N}
\State Allocate empty batch $\mathcal{B} \gets \emptyset$
\For{idx=1,...,$B$}
\State $G \sim D$; \  $G' \sim p(G'|G)$
\State $\mathcal{B} \gets \mathcal{B} \cup \{(G, G')\}$ 
\State Encode $\vz \sim q_{\phi}(\vz |G)$
\State Decode $G \sim  p_{\theta}(G|G',\vz)$
\State Calculate $\mathcal{L}_{\theta, \phi}(G,G')$
\EndFor
\State $\mathcal{L}_{\theta, \phi}$ $\gets$ $\frac{1}{B} \sum_{(G,G')\in \mathcal{B}} \mathcal{L}_{\theta, \phi}(G,G')$
\State Update $\phi, \theta$ by minimizing $\mathcal{L}_{\theta, \phi}$
\EndFor
\State \textbf{return} $\theta, \phi$
\end{algorithmic}
\end{algorithm}
\end{minipage}
\hfill
\begin{minipage}{0.48\textwidth}
\begin{algorithm}[H]
\caption{\dgm{} Inference}
\label{alg:dgm_inference}
\begin{algorithmic}[1]
\Require: dataset $D$, batch size $B$, constraint replace rate $\gamma$
\vspace{0.15cm}
\State Solve MILPs for $\{[\vx,\vy,\vs,\vr]\}$ over $D$
\State Encode MILPs into graphs $\{G(\mathcal{V}, \mathcal{C}, \mathcal{E})\}$
\State Allocate empty batch $\mathcal{B} \gets \emptyset$
\For{id=1,...,$B$}
\State $G \sim D$
\For{t=1,...,$\gamma |\mathcal{C}|$}
\State $G' \sim p(G'|G)$
\State $\vz \sim \mathcal{N}(0,I)$
\State Decode $\tilde{G} \sim  p_{\theta}(\tilde{G}|G', \vz)$
\State $G \gets \tilde{G}$
\EndFor
\State $\mathcal{B} \gets \mathcal{B} \cup G$
\EndFor
\State \textbf{return} new instance batch $\mathcal{B}$
\end{algorithmic}
\end{algorithm}
\end{minipage}

\textbf{Neural Network Architecture} 
For both the encoder and decoder, we employ the same bipartite graph neural network (GNN) as delineated in ~\citep{gasse2019exact} as the backbone. The encoder encodes the graph into the distribution of the latent variable $\vz$, as depicted in the following equation:
\begin{equation}
    q_{\phi}(\vz|G) = \prod_{u \in \mathcal{C} \cup \mathcal{V}} q_{\phi}(\vz_u|G), \quad \quad \quad  q_{\phi}(\vz_u|G) = \mathcal{N} (\mu_{\phi}(\vh_u^G), \Sigma_{\phi}(\vh_u^G)),
\end{equation}
where $\vz_u$ is conditionally independent with each other on $G$, $\vh^G = \text{GNN}_{\phi}(G)$ denotes the node embeddings of $G$ outputted by the encoder backbone, $\mu_\phi$ and $\Sigma_{\phi}$ are two MLP layers that produce the mean and variance for the distribution of $\vz$. 
The decoder connects seven parts conditionally independent on the latent variable and node representations, with detailed structure as follows:
\begin{equation}
\resizebox{0.93\textwidth}{!}{$
\begin{aligned}
    p_{\theta}(G|G',\vz) = & \ p_{\theta}(d_{c_i}|\vh^{G'}_{c_i},\vz_{c_i}) \cdot \prod_{u \in \mathcal{V}} p_{\theta}(e(c_i,u)|\vh^{G'}_{\mathcal{V}}, \vz_{\mathcal{V}}) \cdot \prod_{u \in \mathcal{V}:e(c_i,u)=1}  p_{\theta}(w_{c_i}|\vh^{G'}_{\mathcal{V}},\vz_{\mathcal{V}})\\
    & \cdot \prod_{u \in \mathcal{C}}p_{\theta}(\vy_{u}|\vh^{G'}_{\mathcal{C}},\vz_{\mathcal{C}}) p_{\theta}(\vr_u|\vh^{G'}_{\mathcal{C}},\vz_{\mathcal{C}})  \cdot \prod_{u \in \mathcal{V}}p_{\theta}(\vx_u|\vh^{G'}_{\mathcal{V}},\vz_{\mathcal{V}})p_{\theta}(\vs_u|\vh^{G'}_{\mathcal{V}},\vz_{\mathcal{V}}),
\end{aligned}$}
\end{equation}
where $\vz_{\mathcal{C}}, \vh^{G'}_{\mathcal{C}}$ denotes the latent variable and node representations on side $\mathcal{C}$ outputted by the decoder backbone, while $\vz_{\mathcal{V}}, \vh^{G'}_{\mathcal{V}}$ signifies those on side $\mathcal{V}$; $d_{c_i}$ predicts the degree of the deleted node $c_i$; $e(c_i,\cdot)$ denotes the probability of an edge between $c_i$ and a node on side $\mathcal{V}$; $w_{c_i}$ is the edge weights connected with $c_i$; $\vx, \vy, \vs, \vr$ are value of the solution and slacks. We use separate layers of MLP to model each part's prediction as a regression task. We optimize each part of the decoder with the Huber Loss~\citep{huber1992robust}.
See Section.~\ref{app:implemnentation_dgm} in the appendix for more details.

\vspace{-0.3cm}
\section{Numerical Evaluations}
\vspace{-0.3cm}
In this section, we first delineate the experimental setup. Then we calculate the structural statistical similarity between generated and original instances. Subsequently, we evaluate \dgm{} with two downstream tasks: \emph{(i)} MILP data sharing for solver tuning and \emph{(ii)} MILP data augmentation for ML model training.
\vspace{-0.3cm}
\subsection{Settings}
\vspace{-0.1cm}
\textbf{Datasets:} We perform \dgm{} on four MILP datasets, encompassing scenarios involving simple and complex instances, a mix of small and large problem scale, varying instance quantities, and generation/collection from both synthetic and real-world sources. Specifically, we include two manually generated datasets, namely the set covering (SC) and the combinatorial auctions (CA), following the generation methodologies outlined in ~\citep{gasse2019exact}. 
The remaining two datasets, namely CVS and IIS, are from the MIPLIB2017 benchmark~\citep{gleixner2021miplib}\footnote{\url{https://miplib.zib.de/tag_benchmark.html}}, which comprises challenging instances from a large pool of problem-solving contexts. CVS pertains to the capacitated vertex separator problem on hypergraphs, while IIS mirrors real-world scenarios and resembles the set covering problems. Details are elaborated in Table.~\ref{tab:dataset_metadata}. It's worth emphasizing that for CVS and IIS, we exclusively employ the `training' data during the training of \dgm{} and all downstream models. The `testing' data is used only for downstream task evaluation.

\begin{table}[t]
\caption{Datasets Meta-data . For CVS and IIS, `training' (non-bold) instances are for \dgm{} or downstream model training, `testing' (bold) instances are used in downstream testing only.}
\vspace{-0.35cm}
\label{tab:dataset_metadata}
\renewcommand{\arraystretch}{1.3}
\resizebox{\linewidth}{!}{\begin{tabular}{c|c|c|ccccc|cc}
\hline
 & SC & CA & \multicolumn{5}{c|}{CVS} & \multicolumn{2}{c}{IIS} \\ \hline
\multirow{2}{*}{\# data} & \multirow{2}{*}{1000} & \multirow{2}{*}{1000} & \multicolumn{3}{c|}{training} & \multicolumn{2}{c|}{testing} & \multicolumn{1}{c|}{training} & testing \\ \cline{4-10} 
 &  &  & cvs08r139-94 & cvs16r70-62 & \multicolumn{1}{c|}{cvs16r89-60} & \textbf{cvs16r106-72} & \textbf{cvs16r128-89} & \multicolumn{1}{c|}{iis-glass-cov} & \textbf{iis-hc-cov} \\ \hline
\# variable & 400 & 300 & 1864 & 2112 & \multicolumn{1}{c|}{2384} & 2848 & 3472 & \multicolumn{1}{c|}{214} & 297 \\ \hline
\# constraint & 200 & $\sim$10\textasciicircum{}2 & 2398 & 3278 & \multicolumn{1}{c|}{3068} & 3608 & 4633 & \multicolumn{1}{c|}{5375} & 9727 \\ \hline
difficulty & easy & easy & \multicolumn{5}{c|}{hard} & \multicolumn{2}{c}{hard} \\ \hline
\end{tabular}}
\vspace{-0.4cm}
\end{table}

\textbf{Downstream Tasks:}
We devise two downstream applications, tailored to address distinct motivations. One motivation pertains to generating and sharing data that can substitute target instances. The other motivation involves data augmentation for better training ML models.

\textit{(S1): Data Sharing for Solver Configuration Tuning} We simulate the process where clients utilize \dgm{} to generate new instances and hand over to companies specializing in MILP solver tuning. In particular, we calculate the Pearson positive correlation of the solution times required by the SCIP~\citep{bestuzheva2021scip} solver between the generated examples and the original testing data across various hyper-parameter configurations. Should the solution time consistently demonstrate a positive correlation between the original and generated problems across varied parameter settings, it implies a consistent level of the effectiveness on the original and new instances under the same parameter configuration, which facilitates sharing data for parameter tuning.

\textit{(S2): Optimal Value Prediction via ML} Following the settings presented in ~\citep{chen2022representing}, this supervised regression task employs GNNs to express the optimal value of the objective function in an MILP. We utilize newly generated instances as a means of augmentation to formulate training datasets for ML models. For more detailed implementation, see ~\ref{app:implementation_downstream2} in the appendix.

\textbf{Solvers and Baselines:} We use the open source solver SCIP~\citep{bestuzheva2021scip} with its Python interface, namely PySCIPOpt~\citep{maher2016pyscipopt} for all the experiments. We consider two approaches as our baselines. The first, named `Bowly', aligns with~\cite{bowly2019stress} that generates MILP instances from scratch by sampling in an embedding space based on manually designed distributions. The second baseline `random' employs identical NN architectures to \dgm{} but randomizes the network's outputs, further validating the importance and efficacy of model training. For more implementation details of the baselines, please refer to ~\ref{app:implementation_baseline} in the appendix.

\vspace{-0.2cm}
\subsection{Results and Analysis}
\vspace{-0.2cm}
\subsubsection{Statistical Characteristics of the Generated Instances}
We compare the statistical metrics between the generated instances and the original instances on the SC and CA datasets. We do not calculate the statistics on the CVS and IIS due to their limited size that prevents meaningful statistical comparisons. We count nine statistic metrics in total, see Table.~\ref{app:implementation_statistic_metric} in the appendix for details. The similarity score is derived from the Jensen-Shannon (JS) divergence (the lower the better) between each metric of the generated and original data, as shown in Table.~\ref{tab:js_score}. `Bowly' shows the least similarity. As the the constraint replacement ratio $\gamma$ increases from $0.01$ to $0.50$, the table shows a decreasing similarity between new and original instances for both \dgm{} and `random', aligning with our expectation of controlling structural similarity by adjusting the number of constraint nodes to replace. Instances generated by \dgm{} more closely mirror the target data in structural statistical metrics across all $\gamma$. For detailed calculations of the similarity score and the specific values of each statistic metric, see ~\ref{app:implementation_statistic_metric} and ~\ref{app:results_statistic_metric} in the appendix.

\subsubsection{downstream task \#1: Data Sharing for Solver Configuration Tuning}

\begin{table}[t]
\centering
\caption{The similarity score $\uparrow$ between the original and generated data .}
\vspace{-0.35cm}
\label{tab:js_score}
\renewcommand{\arraystretch}{1.0}
\begin{tabular}{cccccccc}
\hline
\multicolumn{2}{c}{constraint replace rates $\gamma$} & - & 0.01 & 0.05 & 0.10 & 0.20 & 0.50 \\ \hline
\multirow{3}{*}{SC} & Bowly & 0.337 & - & - & - & - & - \\
 & random & - & 0.701 & 0.604 & 0.498 & 0.380 & 0.337 \\
 & ours & - & \textbf{0.856} & \textbf{0.839} & \textbf{0.773} & \textbf{0.652} & \textbf{0.570} \\ \hline
\multirow{3}{*}{CA} & Bowly & 0.386 & - & - & - & - & - \\
 & random & - & 0.630 & 0.566 & 0.508 & 0.432 & 0.306 \\
 & ours & - & \textbf{0.775} & \textbf{0.775} & \textbf{0.768} & \textbf{0.733} & \textbf{0.630} \\ \hline
\end{tabular}
\vspace{-0.4cm}
\end{table}

\begin{figure}[t]
  \centering
  \begin{minipage}{\textwidth}
    \centering
    \begin{subfigure}[b]{0.20\textwidth}
      \centering
      \includegraphics[width=\textwidth]{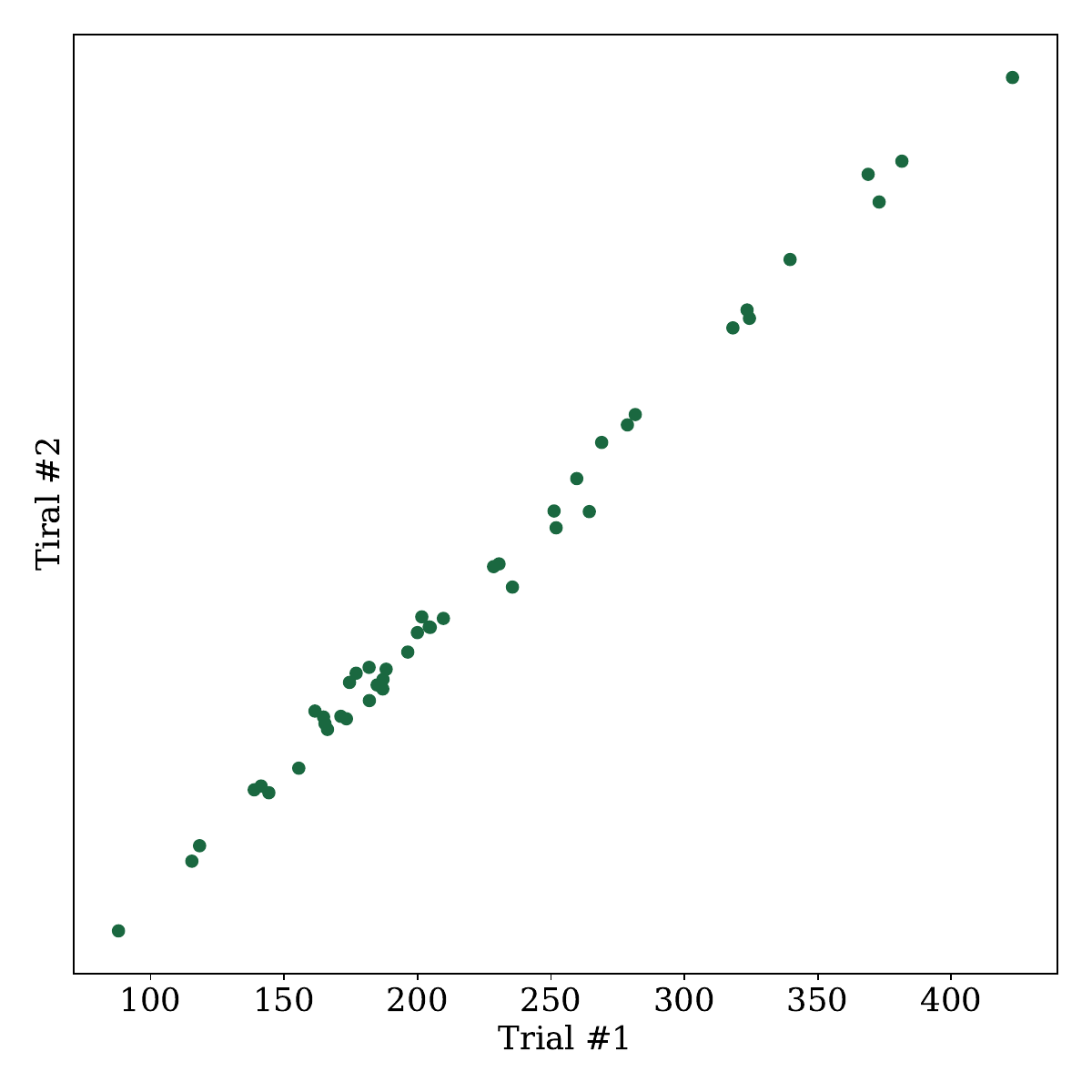}
      \vspace{-0.6cm}
      \caption{two trials}
      \label{fig:downstream1_cvs_cvs}
    \end{subfigure}
    \centering
    \hspace{0.2cm}
    \begin{subfigure}[b]{0.20\textwidth}
      \centering
      \includegraphics[width=\textwidth]{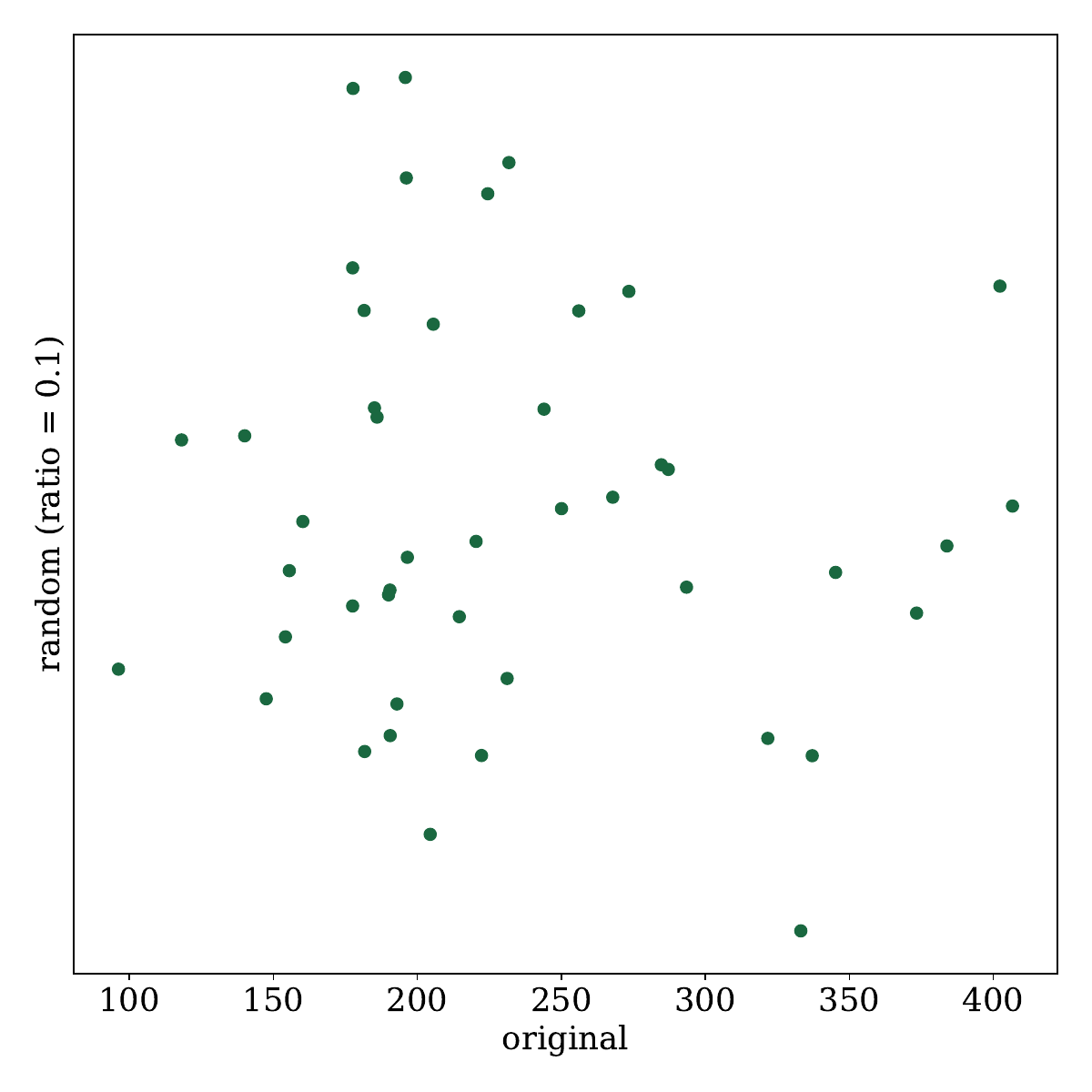}
      \vspace{-0.6cm}
      \caption{random ($\gamma$ = 0.1)}
      \label{fig:downstream1_cvs_random010}
    \end{subfigure}
    \hspace{0.2cm}
    \begin{subfigure}[b]{0.20\textwidth}
      \centering
      \includegraphics[width=\textwidth]{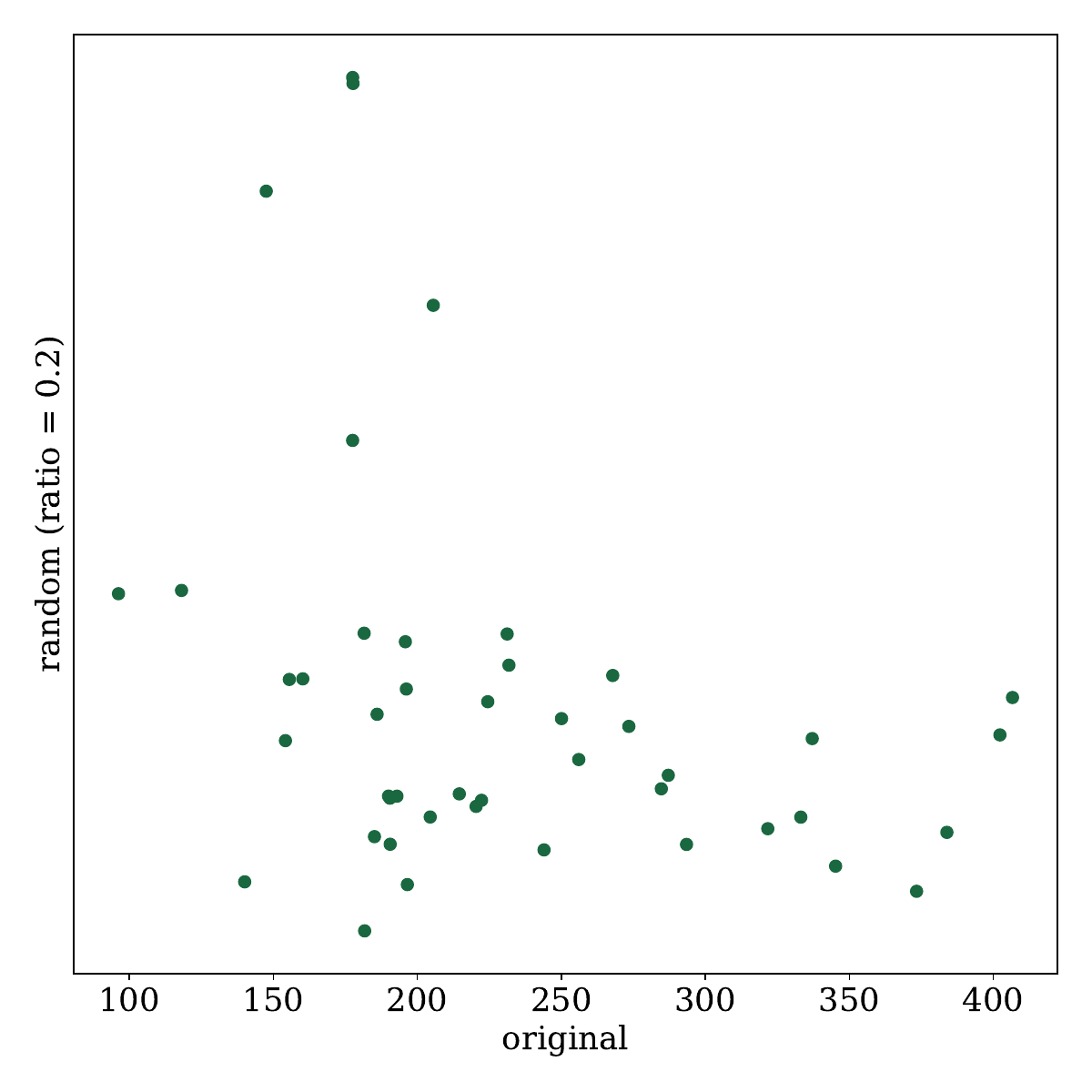}
      \vspace{-0.6cm}
      \caption{random ($\gamma$ = 0.2)}
      \label{fig:downstream1_cvs_random020}
    \end{subfigure}
    \hspace{0.2cm}
    \begin{subfigure}[b]{0.20\textwidth}
      \centering
      \includegraphics[width=\textwidth]{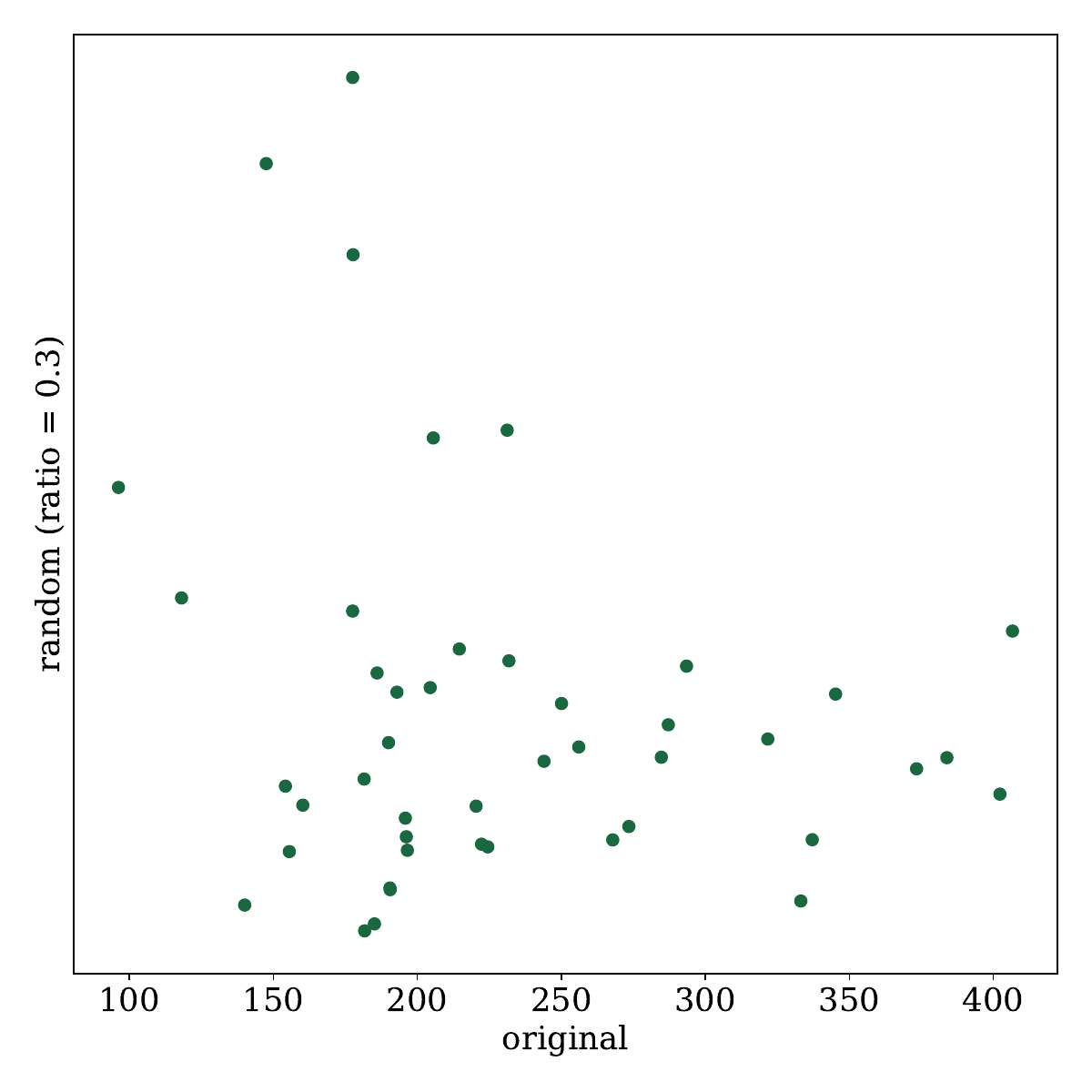}
      \vspace{-0.6cm}
      \caption{random ($\gamma$ = 0.3)}
      \label{fig:downstream1_cvs_random030}
    \end{subfigure}
    \vspace{0.0cm} 
    \vfill
    \begin{subfigure}[b]{0.20\textwidth}
      \centering
      \includegraphics[width=\textwidth]{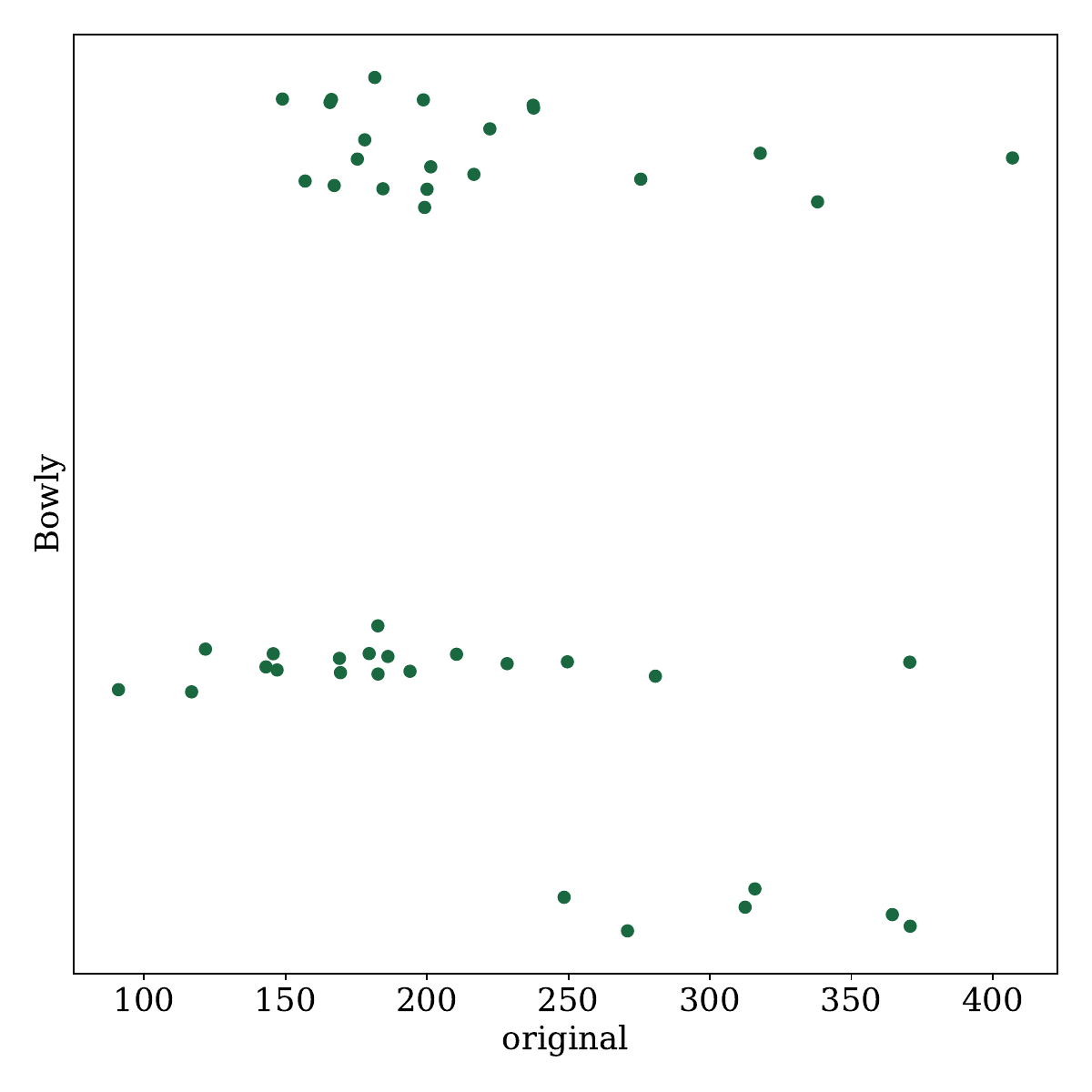}
      \vspace{-0.6cm}
      \caption{Bowly}
      \label{fig:downstream1_cvs_bowly}
    \end{subfigure}
    \hspace{0.2cm}
    \begin{subfigure}[b]{0.20\textwidth}
      \centering
      \includegraphics[width=\textwidth]{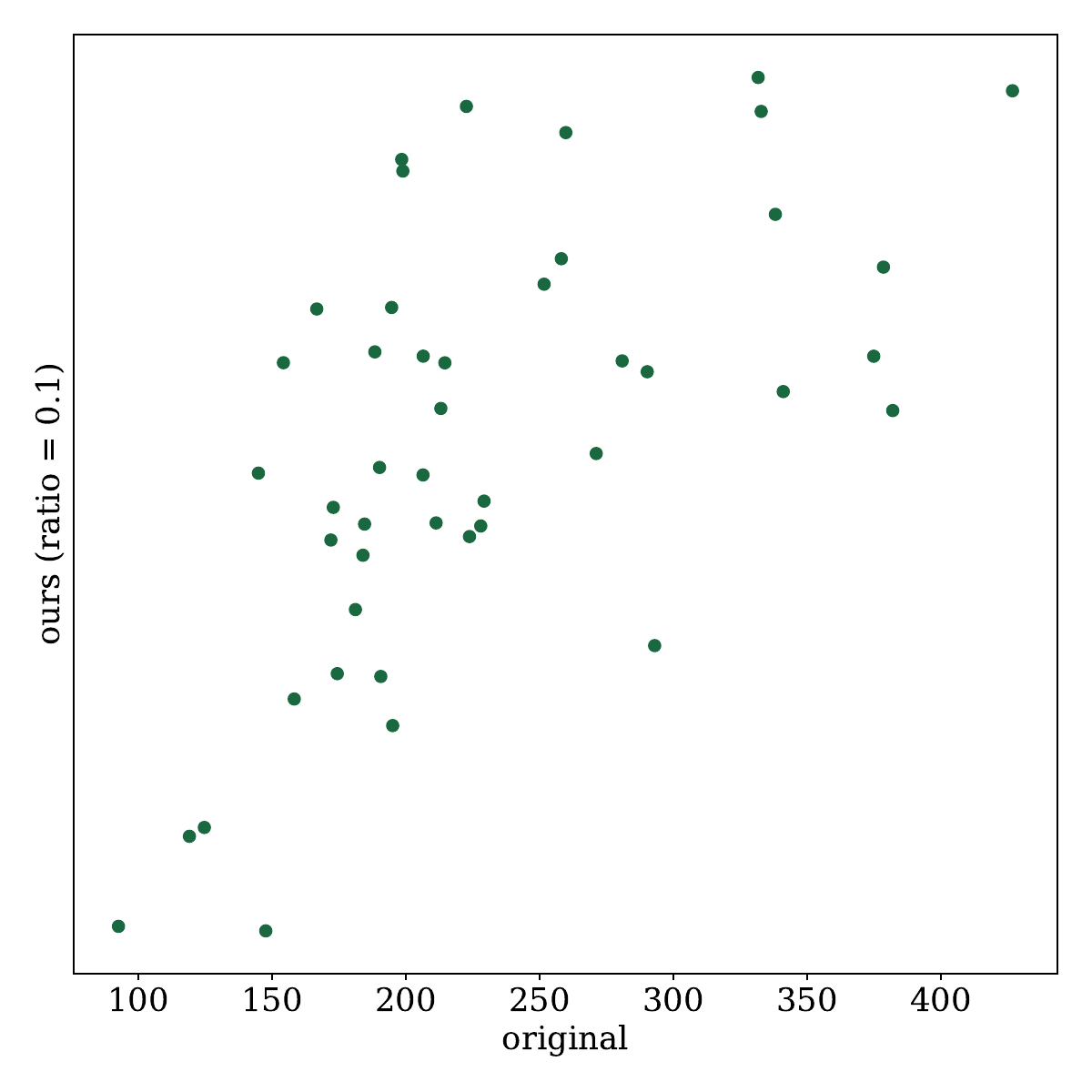}
      \vspace{-0.6cm}
      \caption{ours ($\gamma$ = 0.1)}
      \label{fig:downstream1_cvs_ours010}
    \end{subfigure}
     \hspace{0.2cm}
    \begin{subfigure}[b]{0.20\textwidth}
      \centering
      \includegraphics[width=\textwidth]{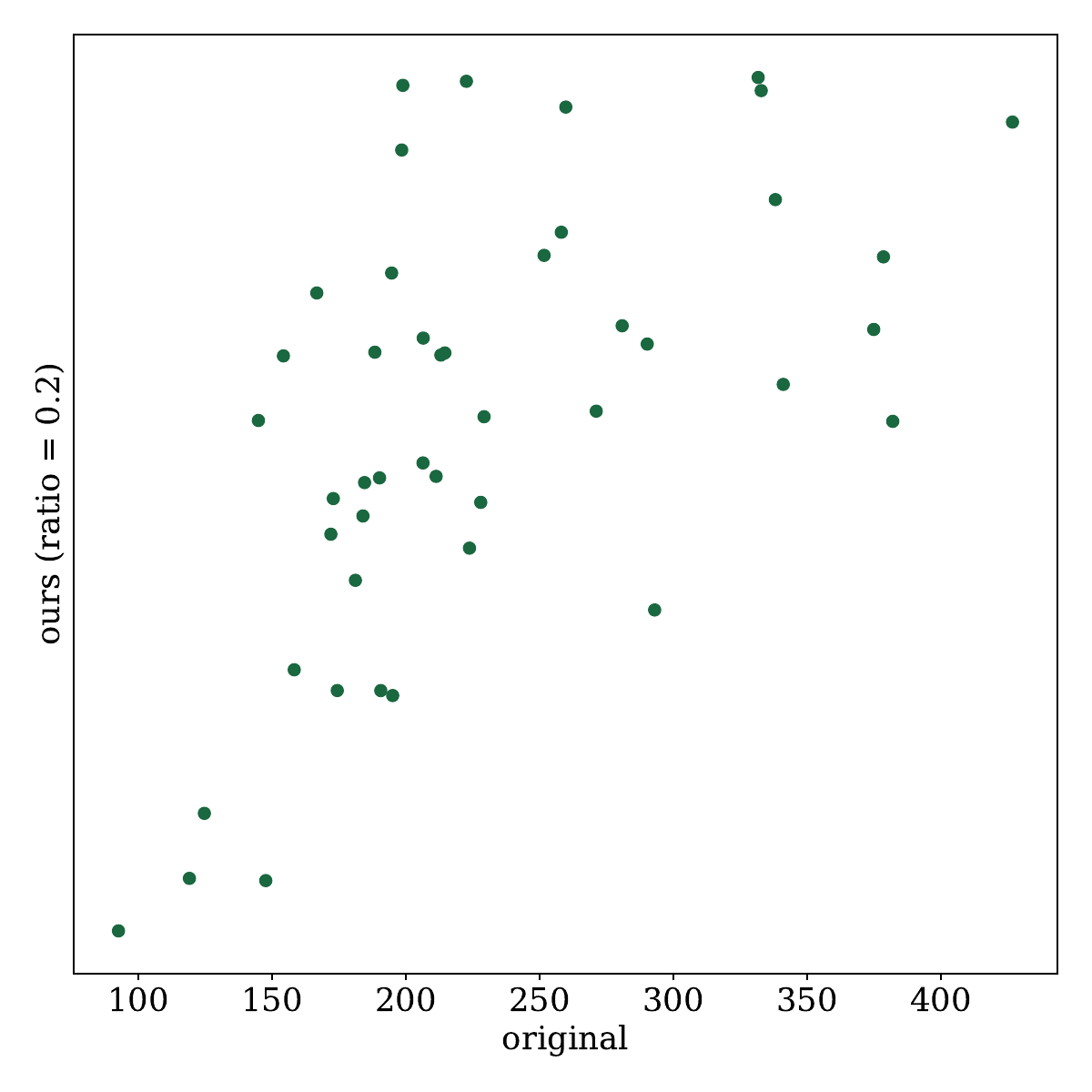}
      \vspace{-0.6cm}
      \caption{ours ($\gamma$ = 0.2)}
      \label{fig:downstream1_cvs_ours020}
    \end{subfigure}
     \hspace{0.2cm}
    \begin{subfigure}[b]{0.20\textwidth}
      \centering
      \includegraphics[width=\textwidth]{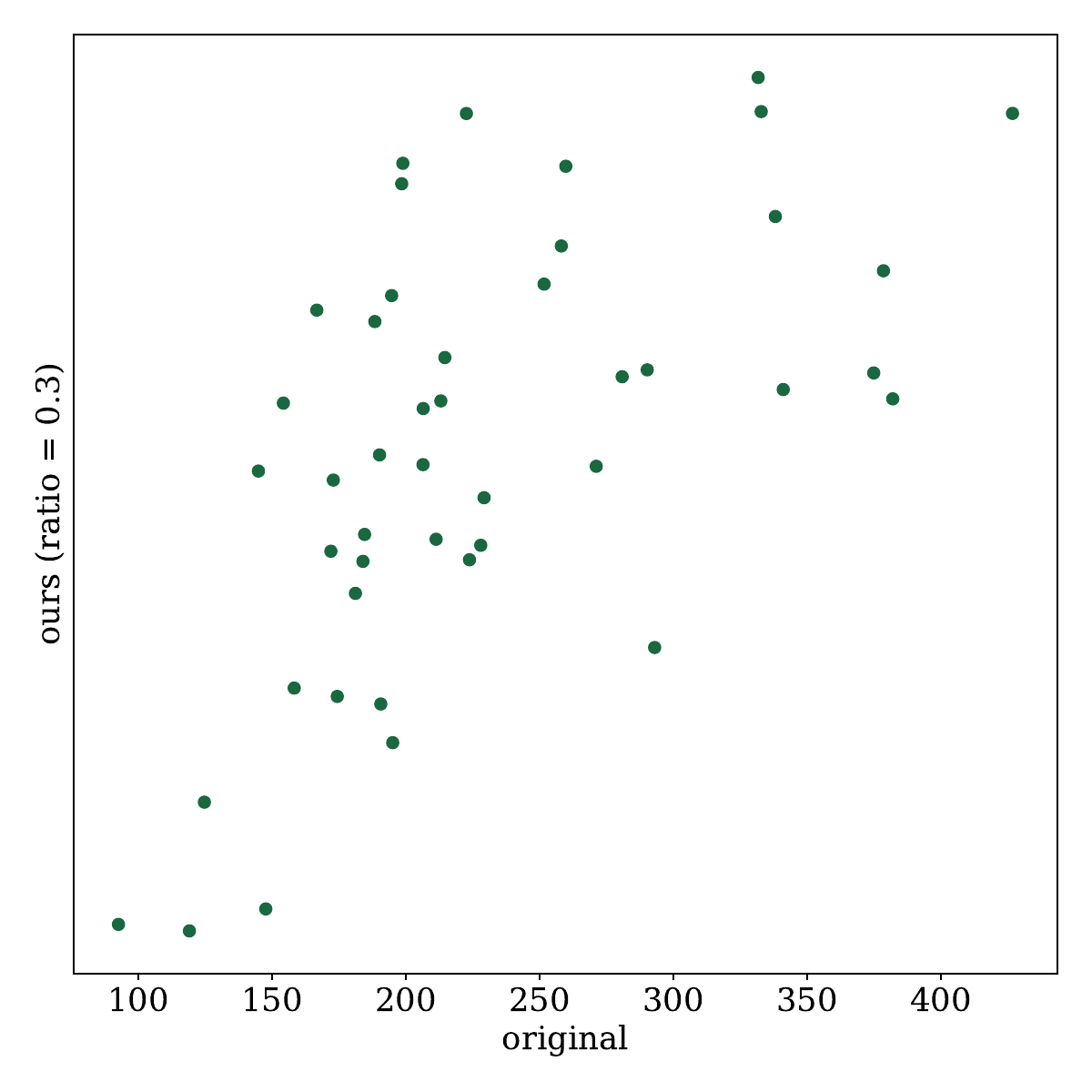}
      \vspace{-0.6cm}
      \caption{ours ($\gamma$ = 0.3)}
      \label{fig:downstream1_cvs_ours030}
    \end{subfigure}
  \end{minipage}
  \vspace{-0.2cm}
  \caption{The solution time (second) of SCIP on CVS with $45$ different hyper-parameter sets.}
    \label{fig:downstream1_visualization_cvs}
    \vspace{-0.4cm}
\end{figure}

We conduct experiments on all the four datasets. SCIP boasts an extensive array of parameters, rendering a tuning across the entire range impractical. Therefore, we adopt the reduced parameter space consistent with mainstream research on SCIP solver tuning~\citep{hutter2011sequential,lindauer2018warmstarting, lindauer2022smac3}. See Table.~\ref{tab:scip_hyper_parameters} in the appendix for detailed parameter space selection.
We employ random seed $0 - 44$ to generate $45$ distinct parameter configurations. To validate the impact of randomness on SCIP, we initiate two independent trials on the same original testing data and compare the Pearson score of solution time. As illustrated in the diagonal of Table.~\ref{tab:downstream1_original}, it clearly demonstrates a very high positive correlation for two independent trials on the same data. For subsequent experiments, each is run three times independently, with results averaged to mitigate randomness effects. We then compare the correlation of solution time on the original data across different datasets, as presented in the upper triangle of Table.~\ref{tab:downstream1_original}. We observe a certain degree of positive correlation between synthetic datasets SC and CA, as well as between MIPLIB datasets CVS and IIS, which reveals that the effectiveness of parameters may naturally exhibit some degree of generalization across similar problems. However, the correlation between synthetic and MIPLIB datasets tends to be much lower, underscoring the necessity of generating new instances for solver tuning on specific problems. Finally, we compare the positive correlation of solution time between the generated instances and the original testing instances of the same datasets, as shown in Table.~\ref{tab:downstream1_performance}. Across all four datasets, the \dgm-generated instances, exhibit the highest correlation with the testing data compared to the baselines, with the lowest p-value of significance. 
On the MIPLIB test set, \dgm-generated instances exhibits a slightly lower correlation, primarily due to the very few samples in these datasets. We visualize the correlation of solution time between the original testing data and the generated data on the CVS in Figure.~\ref{fig:downstream1_visualization_cvs}. More detailed implementation and the visualization of the other datasets can be found in ~\ref{app:implementation_downstream1} and 
Figure.~\ref{fig:downstream1_visualization_cross_datasets}-\ref{fig:downstream1_visualization_iis} in the appendix.

\begin{table}[t]
\centering
\caption{The Pearson correlation coefficient (`r') and the significance value (`p') of the SCIP solution time under $45$ different hyper-parameters on dataset-pairs.}
\vspace{-0.35cm}
\label{tab:downstream1_original}
\renewcommand{\arraystretch}{0.5}
\begin{tabular}{@{}cccccc@{}}
\toprule
 &  & SC & CA & CVS & IIS \\ \midrule
\multirow{2}{*}{SC} & r & 0.732 & 0.599 & 0.115 & 0.088 \\
 & p & 1.058e-8 & 1.351e-5 & 0.449 & 0.561 \\ \midrule
\multirow{2}{*}{CA} & r & - & 0.952 & 0.021 & 0.092 \\
 & p & - & 0.762e-24 & 0.890 & 0.545 \\ \midrule
\multirow{2}{*}{CVS} & r & - & - & 0.997 & 0.550 \\
 & p & - & - & 4.723e-53 & 9.033e-5 \\ \midrule
\multirow{2}{*}{IIS} & r & - & - & - & 0.988 \\
 & p & - & - & - & 1.563e-36 \\ \bottomrule
\end{tabular}
\end{table}

\begin{table}[t]
\vspace{-0.2cm}
\caption{The Pearson correlation coefficient (`r') and the significance value (`p') of the SCIP solution time between generated data and the original testing data under 45 different hyper-parameters on the SC, CA, CVS, and IIS problem.}
\vspace{-0.3cm}
\label{tab:downstream1_performance}
\renewcommand{\arraystretch}{1.2}
\resizebox{\linewidth}{!}{\begin{tabular}{cc|ccc|ccc|ccc|ccc}
\hline
 &  & \multicolumn{3}{c|}{CA} & \multicolumn{3}{c|}{SC} & \multicolumn{3}{c|}{CVS} & \multicolumn{3}{c}{IIS} \\ \hline
\multirow{2}{*}{Bowly} & r & \multicolumn{3}{c|}{-0.048} & \multicolumn{3}{c|}{0.683} & \multicolumn{3}{c|}{-0.158} & \multicolumn{3}{c}{0.292} \\
 & p & \multicolumn{3}{c|}{0.751} & \multicolumn{3}{c|}{2.295e-7} & \multicolumn{3}{c|}{0.298} & \multicolumn{3}{c}{0.051} \\ \hline
\multicolumn{2}{c|}{ratio} & 0.10 & 0.20 & 0.30 & 0.10 & 0.20 & 0.30 & 0.10 & 0.20 & 0.30 & 0.10 & 0.20 & 0.30 \\ \hline
\multirow{2}{*}{random} & r & 0.723 & 0.563 & 0.515 & 0.542 & 0.568 & 0.609 & -0.085 & -0.337 & -0.201 & 0.114 & 0.182 & 0.149 \\
 & p & 1.971e-8 & 5.522e-5 & 2.942e-4 & 1.174e-4 & 4.535e-5 & 9.028e-6 & 0.578 & 0.023 & 0.184 & 0.452 & 0.228 & 0.327 \\ \hline
\multirow{2}{*}{ours} & r & \textbf{0.728} & \textbf{0.771} & \textbf{0.780} & \textbf{0.747} & \textbf{0.717} & \textbf{0.665} & \textbf{0.609} & \textbf{0.590} & \textbf{0.607} & \textbf{0.542} & \textbf{0.300} & \textbf{0.551} \\
 & p & \textbf{1.446e-8} & \textbf{5.371e-10} & \textbf{2.544e-10} & \textbf{3.646e-9} & \textbf{2.908e-8} & \textbf{6.353e-7} & \textbf{8.834e-6} & \textbf{1.986e-5} & \textbf{9.581e-6} & \textbf{1.187e-4} & \textbf{0.044} & \textbf{8.497e-5} \\ \hline
\end{tabular}
}
\vspace{-0.3cm}
\end{table}

\subsubsection{downstream task \#2: Optimal Value Prediction via machine learning}
We conduct experiments for the second downstream task on all four datasets. 

\begin{table}[t]
\caption{The relative mean square error (MSE) of the optimal objective value task on the set covering (SC) problem. The $500$ original instances in training dataset $\#2 - \#14$ are identical.}
\label{tab:downstream2_setcover}
\vspace{-0.3cm}
\resizebox{\linewidth}{!}{\begin{tabular}{cccccccc|ccccc}
\hline
\multirow{2}{*}{dataset} & \multirow{2}{*}{\#original} & \multirow{2}{*}{\#generated} & \multirow{2}{*}{replace ratio} & \multicolumn{4}{c|}{out-of-distribution} & \multicolumn{5}{c}{in-distribution} \\ \cline{5-13} 
 &  &  &  & \textbf{0.03} & \textbf{0.04} & \textbf{0.05} & \textbf{0.10} & 0.15 & 0.20 & 0.25 & 0.30 & 0.35 \\ \hline
1 & 1000 & 0 & - & 0.792 & 0.640 & 0.488 & \textbf{0.022} & \textbf{0.009} & \textbf{0.009} & \textbf{0.010} & \textbf{0.011} & \textbf{0.015} \\
2 & 500 & 500 (Bowly) & - & 3.498 & 17.671 & 43.795 & 81.408 & 0.037 & 0.052 & 0.052 & 0.065 & 0.045 \\
3 & 500 & 500 (random) & 0.10 & 0.449 & 4.176 & 12.624 & 86.592 & 0.048 & 0.064 & 0.053 & 0.069 & 0.045 \\
4 & 500 & 500 (\dgm) & 0.01 & 0.505 & 0.280 & 0.142 & 0.032 & 0.032 & 0.040 & 0.044 & 0.044 & 0.040 \\
5 & 500 & 500 (\dgm) & 0.05 & 0.575 & 0.329 & 0.155 & 0.080 & 0.036 & 0.044 & 0.046 & 0.056 & 0.056 \\
6 & 500 & 500 (\dgm) & 0.10 & \textbf{0.362} & \textbf{0.141} & \textbf{0.045} & 0.065 & 0.017 & 0.012 & 0.012 & 0.010 & 0.015 \\
7 & 500 & 500 (\dgm) & 0.20 & 0.625 & 0.418 & 0.265 & 0.034 & 0.059 & 0.083 & 0.077 & 0.099 & 0.069 \\
8 & 500 & 500 (\dgm) & 0.50 & 0.884 & 0.822 & 0.769 & 0.285 & 0.017 & 0.025 & 0.033 & 0.047 & 0.032 \\ \hline
9 & 500 & 0 & - & 0.868 & 0.758 & 0.637 & 0.072 & \textbf{0.016} & 0.014 & 0.014 & 0.017 & 0.027 \\
10 & 500 & 50 (\dgm) & 0.10 & 0.693 & 0.497 & 0.327 & 0.031 & 0.035 & 0.039 & 0.046 & 0.039 & 0.052 \\
11 & 500 & 100 (\dgm) & 0.10 & 0.603 & 0.361 & 0.179 & 0.096 & 0.031 & 0.033 & 0.038 & 0.042 & 0.038 \\
12 & 500 & 200 (\dgm)& 0.10 & 0.628 & 0.396 & 0.215 & 0.086 & 0.038 & 0.035 & 0.039 & 0.043 & 0.039 \\
13 & 500 & 500 (\dgm)& 0.10 & \textbf{0.362} & \textbf{0.141} & \textbf{0.045} & \textbf{0.065} & 0.017 & \textbf{0.012} & \textbf{0.012} & \textbf{0.010} & \textbf{0.015} \\
14 & 500 & 1000 (\dgm)& 0.10 & 0.473 & 0.211 & 0.063 & 0.339 & 0.013 & 0.014 & 0.014 & 0.014 & 0.024 \\ \hline
\end{tabular}}
\end{table}

\textbf{Set Covering (SC)} One of the hyper-parameters of the SC instances is `density', representing the number of sets to be covered within a constraint. The training set (for both \dgm{} and the downstream predictor) comprises data with densities ranging from $0.15$ to $0.35$ only. We not only present test sets for each in-distribution density ($0.15$ to $0.35$) but also design the test sets with densities falling within the unexplored range of $0.03$ to $0.10$, to reflect the predictor's ability to generalize across distribution shift. The relative mean squared error (MSE) values of the models' predictions are presented in Table.~\ref{tab:downstream2_setcover}. 
In the first part (Datasets \#1-\#8), we curate datasets with a fixed training set size of $1000$. Dataset \#1 consist of $1000$ original data, dataset \#2 generates instance via the `Bowly', \#3 uses the `random' baseline. Datasets \#4-\#8 comprise a combination of $500$ original instances and $500$ \dgm-generated instances, with varying constraint node replacement ratios $\gamma$ ranging from $0.01$ to $0.50$. Models trained exclusively on in-distribution data exhibit superior fitting and predictive accuracy within the in-distribution test sets. However, models trained on a combination of original and \dgm-generated instances display significantly enhanced prediction accuracy on out-of-distribution testing data. We attribute this phenomenon to the increased structural and label diversity in the newly generated instances, mitigating over-fitting on in-distribution data and consequently bolstering the model's cross-distribution capabilities. It's worth noting that `Bowly' or `random' neither enhances the model's in-distribution nor out-of-distribution performance. We believe this is due to the less precise representation of the target distribution by the manually-designed `Bowly' baseline and the excessively high randomness in `random', causing the generated instances to deviate substantially from the original problems in both solution space and structure. 
In the second part (Datasets \#9-\#14), we investigate the impact of progressively incorporating \dgm-generated instances into the dataset, initially starting with $500$ original instances. We observe a consistent improvement in model performance with the gradual inclusion of additional newly generated instances, with peak performance achieved when augmenting the dataset with $500$ newly generated instances.

\textbf{Combinatorial Auctions (CA)} One of the hyper-parameters for the CA is the number of bid/item pair, which determines the quantity of variables and constraints. Our training set exclusively comprises examples with bid/item values ranging from $40/200$ to $80/400$. With the setting similar to the SC, our testing set not only has in-distribution bid/item value pairs, but also introduces instances with bid/item values ranging from $40/200$ to $160/800$, allowing us to assess the model's ability of cross-scale generalization. The relative mean squared error (MSE) of the model's predictions is provided in Table.~\ref{tab:downstream2_cauctions}. The experiments are also divided into two parts. The first part (Datasets \#1-\#8) yields similar conclusions, where models trained solely on original data excel in fitting within in-distribution test sets, models trained on a mixture of half original and half \dgm-generated instances perform better on test sets at scales never encountered during training (bid/item ranging from $100/500$ to $160/800$). This observation is attributed to the diversity introduced by the generated instances, in terms of both the problem structure and optimal objective labels, that prevents the models from over-fitting and thereby enhance their generalization across scales. Consistent with the SC, the second part demonstrates the impact of gradually increasing the new instances as training data and also achieves the peak performance with $500$ newly generated instances.

\textbf{CVS and IIS} Experiments on CVS and IIS show similar insights, see Appendix.~\ref{app:results_downstream2_cvs_iis} for details.

\begin{table}[t]
\vspace{-0.2cm}
\caption{The relative mean square error (MSE) of the optimal objective value task on the combinatorial auction (CA) problem. The $500$ original instances in training dataset $\#2 - \#14$ are identical.}
\label{tab:downstream2_cauctions}
\vspace{-0.3cm}
\resizebox{\linewidth}{!}{\begin{tabular}{ccccccc|cccc}
\hline
\multirow{2}{*}{dataset} & \multirow{2}{*}{\#original} & \multirow{2}{*}{\#generated} & \multirow{2}{*}{replace ratio} & \multicolumn{3}{c|}{in-distribution} & \multicolumn{4}{c}{out-of-distribution} \\ \cline{5-11} 
 &  &  &  & 40/200 & 60/300 & 80/400 & \textbf{100/500} & \textbf{120/600} & \textbf{140/700} & \textbf{160/800} \\ \hline
1 & 1000 & 0 & - & 0.246 & \textbf{0.003} & 0.060 & 0.155 & 0.239 & 0.312 & 0.379 \\
2 & 500 & 500 (Bowly) & - & \textbf{0.202} & 0.004 & 0.080 & 0.183 & 0.272 & 0.346 & 0.410 \\
3 & 500 & 500 (random) & 0.10 & 0.242 & 0.006 & 0.077 & 0.179 & 0.269 & 0.347 & 0.409 \\
4 & 500 & 500 (\dgm) & 0.01 & 0.346 & 0.008 & 0.043 & 0.131 & 0.219 & 0.292 & 0.359 \\
5 & 500 & 500 (\dgm) & 0.05 & 0.345 & 0.009 & 0.041 & 0.125 & 0.211 & 0.284 & 0.352 \\
6 & 500 & 500 (\dgm) & 0.10 & 0.385 & 0.015 & 0.036 & \textbf{0.118} & \textbf{0.201} & \textbf{0.276} & \textbf{0.340} \\
7 & 500 & 500 (\dgm) & 0.20 & 0.428 & 0.019 & \textbf{0.035} & 0.116 & 0.203 & 0.275 & 0.344 \\
8 & 500 & 500 (\dgm) & 0.30 & 0.381 & 0.012 & 0.040 & 0.126 & 0.215 & 0.289 & 0.356 \\
9 & 500 & 500 (\dgm) & 0.50 & 0.398 & 0.014 & 0.035 & 0.117 & 0.203 & 0.276 & 0.344 \\ \hline
10 & 500 & 0 & - & \textbf{0.216} & \textbf{0.004} & 0.068 & 0.165 & 0.249 & 0.324 & 0.388 \\
11 & 500 & 50 (\dgm) & 0.10 & 0.382 & 0.006 & 0.040 & 0.130 & 0.218 & 0.293 & 0.361 \\
12 & 500 & 100 (\dgm) & 0.10 & 0.446 & 0.014 & \textbf{0.031} & 0.116 & 0.201 & 0.275 & 0.344 \\
13 & 500 & 500 (\dgm) & 0.10 & 0.385 & 0.015 & 0.036 & \textbf{0.118} & \textbf{0.201} & \textbf{0.276} & \textbf{0.340} \\
14 & 500 & 1000 (\dgm) & 0.10 & 0.359 & 0.009 & 0.039 & 0.126 & 0.212 & 0.285 & 0.351 \\ \hline
\end{tabular}}
\vspace{-0.3cm}
\end{table}

\vspace{-0.1cm}
\section{Conclusion}
\vspace{-0.2cm}
This paper introduces \dgm, a deep generative framework for MILP. Contrasting with conventional MILP generation techniques, \dgm{} does not rely on domain-specific expertise. Instead, it employs DNNs to extract profound structural information from limited MILP data, generating ``representative'' instances. Notably, \dgm{} guarantees the feasibility and boundedness of generated data, ensuring the data's ``autheticity''. The generation space of \dgm{} encompasses any feasible-bounded MILP, providing it with the capability of generating ``diverse'' instances. Experiment evaluations highlights \dgm's potential in (S1) MILP data sharing for solver hyper-parameter tuning without publishing the original data and (S2) data augmentation to enhance the generalization capacity of ML models tasked with solving MILPs.

\bibliographystyle{iclr2024_conference}
\bibliography{iclr2024_conference}
\newpage
\appendix
\newpage

\section{supplementary theoretical results}

\subsection{Proof of Proposition \ref{thm:main}}
\label{app:thm_main_proof}

To validate Proposition \ref{thm:main}, we follow the methodology outlined in~\cite{bowly2019stress}.
Before the proof of Proposition \ref{thm:main}, we first give the definition of boundedness, feasibility and optimal solutions of MILP, then we discuss the existence of optimal solutions of LP and MILP. 

\begin{definition}[Feasibility of MILP]
\label{def:feasibility_milp}
    An $\text{MILP}(\mA,\vb,\vc)$ is \textit{feasible} if there exists an $\vx$ such that all the constraints are satisfied: $\vx \in \mathbb{Z}^n_{\geq 0}, \mA \vx \leq \vb$. Such an $\vx$ is named a feasible solution.
\end{definition}

\begin{definition}[Boundedness of MILP]
\label{def:boundedness_milp}
    An $\text{MILP}(\mA,\vb,\vc)$ is \textit{bounded} if there's an upper bound on $\vc^\top \vx$ across all feasible solutions.
\end{definition}

\begin{definition}[Optimal Solution for MILP]
\label{def:optimal_solution_milp}
    A vector $\vx^\star$ is recognized as an optimal solution if it's a feasible solution and it is no worse than all other feasible solutions: $\vc^\top \vx^\star \geq \vc^\top \vx$, given $\vx$ is feasible.
\end{definition}

All LPs must fall into one of the following cases~\cite{bertsimas1997introduction}:
\begin{itemize}
    \item Infeasible. 
    \item Feasible but unbounded. 
    \item Feasible and bounded. Only in this case, the LP yields an optimal solution.
\end{itemize}
However, general MILP will be much more complicated. Consider a simple example: $\min \sqrt{3} x_1 - x_2, \ \text{s.t.} \ \sqrt{3} x_1 - x_2 \geq 0, x_1 \geq 1, \vx \in \mathbb{Z}^2_{\geq 0}$. No feasible solution has objective equal to zero, but there are feasible solutions with objective arbitrarily close to zero. In other words, \textit{an MILP might be bounded but with no optimal solutions.} Such a pathological phenomenon is caused by the irrational number $\sqrt{3}$ in the coefficient. Therefore, we only consider MILP with rational data: \[\mA \in \sQ^{m\times n}, \vb \in \sQ^{m}, \vc \in \sQ^{m}.\] 
Such an assumption is regularly adopted in the research of MILP. 

Without requiring $\vx$ to be integral, \eqref{equ:primal_format} will be relaxed to an LP, named its \textit{LP relaxation}:
\[\textbf{LP}(\mA,\vb,\vc): \quad \max_{\vx} \vc^\top \vx, \quad  \text{s.t. } \mA \ \vx \leq \vb, \ \vx \geq 0.\]
The feasibility, boundedness, and existence of optimal solutions, along with the relationship with its LP relaxation, are summarized in the following lemma.
\begin{lemma}
\label{lemma:milp-lp}
Given $\mA \in \sQ^{m\times n}, \vb \in \sQ^{m}, \vc \in \sQ^{m}$, it holds that
    \begin{itemize}
    \item (I) If $\text{LP}(\mA,\vb,\vc)$ is infeasible, $\text{MILP}(\mA,\vb,\vc)$ must be infeasible.
    \item (II) If $\text{LP}(\mA,\vb,\vc)$ is feasible but unbounded, then $\text{MILP}(\mA,\vb,\vc)$ must be either infeasible or unbounded. 
    \item (III) If $\text{LP}(\mA,\vb,\vc)$ is feasible and bounded, $\text{MILP}(\mA,\vb,\vc)$ might be infeasible or feasible. If we further assume $\text{MILP}(\mA,\vb,\vc)$ is feasible, it must yield an optimal solution. 
\end{itemize}
\end{lemma}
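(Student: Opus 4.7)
The plan is to handle the three claims in order; Parts (I) and (II) are short, and the substantive content is the existence of an optimal solution in Part (III), where the rational-data assumption is essential.

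For Part (I), the argument is a single set inclusion: any feasible $\vx \in \mathbb{Z}^n_{\geq 0}$ of $\text{MILP}(\mA,\vb,\vc)$ automatically satisfies $\vx \geq 0$ and $\mA \vx \leq \vb$, so it is LP-feasible; hence LP-infeasibility forces MILP-infeasibility. For Part (II), assume the MILP is feasible, so I only need to prove it is unbounded. Unboundedness of the LP, combined with standard polyhedral theory over $\mathbb{Q}$, produces a rational recession direction $\vd \in \mathbb{Q}^n$ with $\vd \geq 0$, $\mA \vd \leq 0$, and $\vc^\top \vd > 0$ (take an extreme ray of the recession cone $\{\vd : \vd \geq 0,\ \mA \vd \leq 0\}$ along which the objective strictly improves; rationality is inherited from $\mA \in \mathbb{Q}^{m\times n}$ and $\vc \in \mathbb{Q}^n$). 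Clearing denominators, replace $\vd$ by a positive integer multiple so that $\vd \in \mathbb{Z}^n_{\geq 0}$ without disturbing any sign condition. Picking any MILP-feasible $\vx_0$, the ray $\{\vx_0 + k\vd : k \in \mathbb{Z}_{\geq 0}\}$ stays in $\mathbb{Z}^n_{\geq 0}$, satisfies $\mA(\vx_0+k\vd) \leq \mA \vx_0 \leq \vb$, and has objective $\vc^\top \vx_0 + k(\vc^\top \vd) \to \infty$, so the MILP is unbounded.

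For Part (III), the first sentence (that both infeasible and feasible cases occur) is illustrated by two trivial two-dimensional rational examples (a unit-box LP with vs.\ without interior lattice points), so the real work is showing that a feasible rational MILP whose LP relaxation is bounded attains its optimum. The plan is a discreteness-of-objective-values argument. Let $q \in \mathbb{Z}_{>0}$ be a common denominator for the entries of $\vc$; then $\vc^\top \vx \in \frac{1}{q}\mathbb{Z}$ for every $\vx \in \mathbb{Z}^n$. Let $v^\star$ be the (finite) optimal value of $\text{LP}(\mA,\vb,\vc)$. Any MILP-feasible $\vx$ obeys $\vc^\top \vx \leq v^\star$, so the set of attained MILP objective values
\[
V := \{\vc^\top \vx : \vx \in \mathbb{Z}^n_{\geq 0},\ \mA \vx \leq \vb\} \ \subseteq\ \tfrac{1}{q}\mathbb{Z} \cap (-\infty, v^\star]
\]
is a nonempty subset of a discrete set bounded above, and therefore contains its supremum; any $\vx^\star$ attaining $\sup V$ is an MILP-optimal solution.

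The main obstacle is Part (III): the entire reason for restricting to rational data is precisely to block the footnote pathology $\min \sqrt{3} x_1 - x_2$, so I want the discreteness argument to be airtight. The three ingredients I must verify carefully are finiteness of $v^\star$ (which is where LP boundedness is used), existence of a common denominator $q$ of $\vc$ (which uses $\vc \in \mathbb{Q}^n$), and the elementary fact that $\frac{1}{q}\mathbb{Z}\cap(-\infty, v^\star]$ has a maximum element. Part (II) carries only a smaller subtlety, namely extracting an integer-valued recession direction from the unboundedness of the LP; this is routine once one invokes the Minkowski--Weyl decomposition of the LP-feasible polyhedron into vertices plus extreme rays over $\mathbb{Q}$.
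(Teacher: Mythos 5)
Your proof is correct, but it takes a genuinely different route from the paper, which disposes of the lemma almost entirely by citation: (I) is declared trivial, (II) is quoted as Theorem~1 of \cite{byrd1987recognizing}, and (III) is obtained by rewriting the problem with slacks in equality form and invoking Theorem~2.1 of \cite{meyer1974existence}. You instead give self-contained arguments: for (II), a rational recession direction $\vd\geq 0$ with $\mA\vd\leq 0$, $\vc^\top\vd>0$, cleared to an integer vector and added repeatedly to a feasible point (this is essentially the content of the cited Byrd--Goldman--Heller result, and your derivation is sound because the recession cone $\{\vd:\vd\geq 0,\ \mA\vd\leq 0\}$ is pointed and rational, so it is generated by rational extreme rays, one of which must improve the objective if the LP is unbounded); for (III), the observation that with $\vc\in\sQ^n$ and $\vx\in\sZ^n$ the attainable objective values lie in $\tfrac{1}{q}\sZ$, so a nonempty set of them bounded above by the finite LP value $v^\star$ attains its supremum. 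The comparison is instructive: your discreteness argument is more elementary and makes transparent exactly where rationality of $\vc$ enters (it is what kills the $\sqrt{3}$ pathology in the paper's footnote), but it relies crucially on the fact that \emph{every} variable in the paper's formulation (Equation~\ref{equ:primal_format}) is integer-constrained; for a genuinely mixed problem with continuous variables the objective values need not form a discrete set, and one would be forced back to Meyer's theorem, which is why the paper's citation-based route is the more general one. Within the pure-integer setting of this paper, your proof is complete and arguably preferable for self-containedness; the only points to state explicitly if you write it up are that a pointed rational polyhedral cone has rational generators and that LP feasibility plus boundedness yields a finite optimal value $v^\star$, both of which you correctly flag as the ingredients to verify.
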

\begin{proof} Conclusion (I) is trivial. Conclusion (II) is exactly \cite[Theorem 1]{byrd1987recognizing}. Conclusion (III) is a corollary of \cite[Theorem 2.1]{meyer1974existence}. To obtain (III), we first write $\text{MILP}(\mA,\vb,\vc)$ into the following form:
\[ \max_{\vx,\vr} \vc^\top \vx \quad \text{s.t. }\mA\vx + \vr = \vb, ~ \vx \geq \vzero, ~\vr \geq \vzero,~ \vx \text{ is integral} \]
Then the condition (v) in \cite[Theorem 2.1]{meyer1974existence} can be directly applied. Therefore, the feasibility and boundedness of $\text{MILP}(\mA,\vb,\vc)$ imply the existence of optimal solutions, which concludes the proof.
\end{proof}

With Lemma \ref{lemma:milp-lp}, we could prove Proposition \ref{thm:main} now.

\begin{proof}[Proof of Proposition \ref{thm:main}] At the beginning, we define the space of $[\mA,\vb,\vc]$ generated based on $\gF$ as $\gH''$ for simplicity.
\[ \gH'' :=  \Big\{ [\mA,\vb,\vc]: \vb = \mA \vx + \vr, \vc = \mA^\top \vy - \vs, [\mA, \vx, \vy, \vs, \vr] \in \gF \Big\}\]
Then it's enough to show that $\gH' \subset \gH''$ and $\gH'' \subset \gH'$.

 We first show  $\gH'' \subset \gH'$: For any $[\mA,\vb,\vc] \in \gH''$, it holds that $[\mA,\vb,\vc] \in \gH'$. In another word, we have to show  $\text{MILP}(\mA,\vb,\vc)$ to be feasible and bounded for all $[\mA,\vb,\vc] \in \gH''$. The feasibility can be easily verified. The boundedness can be proved by ``weak duality." For the sake of completeness, we provide a detailed proof here. Define the Lagrangian as
\[ \gL(\vx, \vy) := \vc^\top \vx + \vy^\top \left( \vb - \mA \vx \right)\]
Inequalities $\mA\vx \leq \vb$ and $\vy \geq \vzero$ imply
\[ \gL(\vx, \vy) \geq \vc^\top \vx \]
Inequalities $\mA^\top \vy \geq \vc$ and $\vx \geq \vzero$ imply
\[ \gL(\vx, \vy) \leq \vb^\top \vy \]
Since $\vx \in \sQ^{n}_{\geq 0}$ and $\vy \in \sQ^{m}_{\geq 0}$, it holds that 
\[ -\infty < \vc^\top\vx \leq \vb^\top \vy < +\infty \]
which concludes the boundedness of $\text{MILP}(\mA,\vb,\vc)$.

We then show $\gH' \subset \gH''$: For any $\text{MILP}(\mA,\vb,\vc)$ that is feasible and bounded, there must be $[\mA, \vx, \vy, \vs, \vr] \in \gF$ such that 
\begin{align}
    \vb =& \mA \vx + \vr, \label{equ:slack-1} \\
    \vc =& \mA^\top \vy - \vs. \label{equ:slack-2}
\end{align}
The existence of $\vx,\vr$, along with \eqref{equ:slack-1}, is a direct conclusion of the feasibility of $\text{MILP}(\mA,\vb,\vc)$. Now let's prove the existence of rational vectors $\vy,\vs$, along with \eqref{equ:slack-2}. Since $\text{MILP}(\mA,\vb,\vc)$ is feasible and bounded, according to Lemma \ref{lemma:milp-lp}, $\text{LP}(\mA,\vb,\vc)$ must be feasible and bounded. Thanks to the weak duality discussed above, we conclude that $\text{DualLP}(\mA,\vb,\vc)$ must be feasible and bounded. As long as $\text{DualLP}(\mA,\vb,\vc)$ has an optimal solution $\vy^\star$ that is rational, one can obtain \eqref{equ:slack-2} by regarding $[\vy^\star, \mA^\top\vy^\star - \vc]$ as $[\vy,\vs]$. Therefore, it's enough to show $\text{DualLP}(\mA,\vb,\vc)$ has a rational optimal solution.

Define:
\[
\begin{aligned}
    \mA' =& [\mA^\top, -\mI]\\
    \vy' =& [\vy^\top, \vs^\top]^\top \\
    \vb' =& [\vb^\top, \vzero^\top]
\end{aligned}
\]
Then DualLP can be written as a standard-form LP:
\begin{equation}
    \label{equ:dual-lp-standard}
    \min_{\vy'} (\vb')^\top \vy' \quad \text{s.t. }\mA'\vy' = \vc,~\vy'\geq\vzero
\end{equation}
As long as an LP has an optimal solution, it must have a basic optimal solution~\cite{bertsimas1997introduction}. Specifically, we can split $\mA'$ in column-based fashion as $\mA' = [\mB',\mN']$ and split $\vy'$ as $\vy' = [\vy^\top_{B}, \vy^\top_{N}]^\top$, where $\vy_{N} = \vzero$. Such a $\vy'$ is termed a \textit{basic optimal solution} to the LP presented in \eqref{equ:dual-lp-standard}. Therefore,
\[ \mA'\vy' = \mB'\vy_{B} + \mN'\vy_{N} = \mB'\vy_{B} = \vc \implies \vy_{B} = (\mB')^{-1}\vc\]
Since $\mB'$ is a sub-matrix of $\mA'$, $\mB'$ is rational. Therefore, $(\mB')^{-1}$ and $\vy_{B}$ are rational, which implies $\vy'$ is rational. This concludes the existence of rational optimal solutions of DualLP, which finishes the entire proof.
\end{proof}

\subsection{Derivation of the loss function}
\label{app:derivation_loss}
Here we show the derivation from the training objective in Equation.~\ref{equ:max_likelihhod} towards the loss function in Equation.~\ref{equ:loss}.
\begin{equation}
\resizebox{0.93\textwidth}{!}{
$\begin{aligned}
    \log \mathbb{P}(G|G';\theta,\phi) & = \mathbb{E}_{\vz \sim q_{\phi}(\vz|G)} \log \mathbb{P}(G|G';\theta,\phi) \\
    & = \mathbb{E}_{\vz \sim q_{\phi}(\vz|G)} [\log \frac{p_{\theta}(G|G',\vz) p(\vz)}{q_{\phi}(\vz|G)}\frac{q_{\phi}(\vz|G)}{p(\vz|G)}] \\
    & = \mathbb{E}_{\vz \sim q_{\phi}(\vz|G)} \log \frac{p_{\theta}(G|G', \vz)p(\vz)}{q_{\phi}(\vz|G)} + \mathbb{E}_{\vz \sim q_{\phi}(\vz|G)} [\log \frac{q_{\phi}(\vz|G)}{p(\vz|G)}] \\
    & = \mathbb{E}_{\vz \sim q_{\phi}(\vz|G)}[\log p_{\theta}(G|\vz, G')] - \mathcal{D}_{KL}[q_{\phi}(\vz|G)||p(\vz)] + \mathcal{D}_{KL}[q_{\phi}(\vz|G)||p(\vz|G)] \\
    & \geq \mathbb{E}_{\vz \sim q_{\phi}(\vz|G)}[\log p_{\theta}(G|G', \vz)] - \mathcal{D}_{KL}[q_{\phi}(\vz|G)||\mathcal{N}(0,I)]
\end{aligned}$},
\end{equation}and thus we have
\begin{equation}
    \mathbb{E}_{G\sim \mathcal{G}} \mathbb{E}_{G' \sim p_{G'|G)}} \log \mathbb{P}(G|G';\theta,\phi) \geq -\mathcal{L}_{\theta, \phi}
\end{equation}

\section{supplementary implementation details}
\label{app:implementation_details}
\subsection{Hardware, Software and Platforms}
At the hardware level, we employ an Intel Xeon Gold 6248R CPU and a Nvidia quadro RTX 6000 GPU. For tasks that exclusively run on the CPU, we utilize a single core, for tasks that run on the GPU, we set the upper limit to $10$ cores. 
On the software side, we utilize PyTorch version $2.0.0+$cu$117$~\citep{paszke2019pytorch} and PyTorch Geometric version $2.0.3$~\citep{fey2019fast}. We utilize PySCIPOpt solver version $3.5.0$~\citep{MaherMiltenbergerPedrosoRehfeldtSchwarzSerrano2016} for optimization purposes with default configurations.

\subsection{Implementation of \dgm}
\label{app:implemnentation_dgm}
For both the encoder and the decoder, we adopt the bipartite GNN exactly the same as that in ~\cite{gasse2019exact} as their backbones, the original codes for the backbone is publicly available\footnote{\url{https://github.com/ds4dm/learn2branch/blob/master/models/baseline/model.py}}.

\textbf{Encoder} To obtain the latent variable samples, we feed the encoder with $G$ encoded as per the method in Table.~\ref{tab:vc_encoding}, we then incorporate two distinct multi-layer perceptron (MLP) layers following the backbone to output the mean and log variance of the latent variable $\vz$. During the training process, we use the re-parametrization trick~\citep{bengio2013estimating, maddison2016concrete, jang2016categorical} to render the process of sampling $\vz$ from the mean and variance differentiable. During inference, we directly sample $\vz \sim \mathcal{N}(0,I)$.

\begin{wraptable}{r}{0.4\textwidth}
\centering
\captionsetup{font=small} 
\vspace{-0.4cm}
\caption{The last layer design of decoder.}
\label{tab:layer_design}
\vspace{-0.3cm}
\begin{tabular}{cc}
\hline
prediction & embeddings \\ \hline
$d,e,w,\vx, \vr$ &  $\vh_v, \vz_v, v \in \mathcal{V}$\\
$\vy, \vs$ &  $\vh_c, \vz_c, c \in \mathcal{C}$\\
\hline
\end{tabular}
\vspace{-0.3cm}
\end{wraptable}
\textbf{Decoder} We feed the backbone of the decoder with the incomplete graph $G'$ to obtain the latent node representations $\vh^{G'} = \{\vh^{G'}_c, \vh^{G'}_v\}$.
The backbone is then followed by seven distinct heads conditionally independent on $\vh$ and $\vz$, each corresponding to the prediction of: 1) the degree of the removed node $d_{c_i}$, 2) the edges $e(c_i,u)$ between the constraint node $c_i$ and the nodes in the other side, 3) the edge weights $w_{c_i}$, and 4) - 7) the value of $\vx, \vy, \vr, \vs$ of the new graph $\tilde{G}$. Each head is composed of layers of MLP, and takes different combinations $\vh^{G'}, \vz^{G'}$ as inputs, which is illustrated in Table.~\ref{tab:layer_design}. We perform min-max normalization on all the variables to predict according to their maximum and minimum value occurred in the training dataset. Each part is modeled as a regression task, where we use the Huber Loss~\cite{huber1992robust} as the criterion for each part and add them together as the total loss for decoder.

For the case of binary MILP problems, their primal, dual and slack variables could be written in the form as Equation.~\ref{equ:primal_format_binary}~\ref{equ:dual_format_binary}~\ref{equ:slack_variables_binary}:

\vspace{-0.5cm}
\begin{minipage}{0.3\textwidth}
\begin{equation}
\label{equ:primal_format_binary}
\begin{aligned}
    & \textbf{Primal (Binary)} \\
   & \quad \max_{\vx}  ~~\vc^\top \vx \\
 & \quad ~ \text{s.t.}   ~~ \textbf{A} \vx \leq \vb \\
  & \quad \quad \quad ~~ \vx \leq 1 \\
  &  \quad \quad \quad ~~ \vx \geq 0 \\
  & \quad \quad \quad ~~ \vx \in \mathbb{Z}
\end{aligned}
\end{equation}
\end{minipage}
\hspace{0.5cm}
\begin{minipage}{0.3\textwidth}
\begin{equation}
\label{equ:dual_format_binary}
\vspace{0.5cm}
\begin{aligned}
    & \quad \textbf{Dual (Binary)}\\
    &\textbf{(Linear Relaxation)} \\
    & \\
  & \quad \max_{y}   ~~ [\vb^\top, 1^\top] \vy \\
 & ~~~ \text{s.t.}   ~~ [\textbf{A}^\top, I] \vy \geq \vc \\
 & ~~~ \quad \quad \quad \quad ~~ \vy \geq 0
\end{aligned}
\end{equation}
\end{minipage}
\hspace{0.5cm}
\begin{minipage}{0.3\textwidth}
\begin{equation}
\label{equ:slack_variables_binary}
\vspace{0.1cm}
\begin{aligned}
    & ~ ~ \textbf{Slack (Binary)} \\
    & \\
  & \quad \textbf{A} \vx + \vr = \vb \\
 & [\textbf{A}^\top, I] \vy - \vs = \vc \\
 & \qquad \qquad ~~~ \vr \geq 0 \\
 & \qquad \qquad ~~~\vs \geq 0 
\end{aligned}
\end{equation}
\end{minipage}

Considering the inherent structure of binary MILP, we can further decompose the dual solution $\vy$ into two parts: $\vy_1$ (corresponding to regular constraints $\mA \vx \leq \vb$) and $\vy_2$ (corresponding to constraints $\vx \leq 1$). The encoding of binary MILP problem into a bipartite VC graph is illustrated in Table.~\ref{tab:vc_encoding_binary}. And the decoder could be models as Equation.~\ref{equ:decoder_binary}.

\begin{equation}
\resizebox{0.90\textwidth}{!}{$
\label{equ:decoder_binary}
\begin{aligned}
    p_{\theta}(G|G',\vz) = & \ p_{\theta}(d_{c_i}|\vh^{G'}_{c_i},\vz_{c_i}) \cdot \prod_{u \in \mathcal{V}} p_{\theta}(e(c_i,u)|\vh^{G'}_{\mathcal{V}}, \vz_{\mathcal{V}}) \cdot \prod_{u \in \mathcal{V}:e(c_i,u)=1}  p_{\theta}(w_{c_i}|\vh^{G'}_{\mathcal{V}},\vz_{\mathcal{V}})\\
    & \cdot \prod_{u \in \mathcal{C}}p_{\theta}(\vy_{1u}|\vh^{G'}_{\mathcal{C}},\vz_{\mathcal{C}}) p_{\theta}(\vr_u|\vh^{G'}_{\mathcal{C}},\vz_{\mathcal{C}})  \cdot \prod_{u \in \mathcal{V}}p_{\theta}(\vx_u|\vh^{G'}_{\mathcal{V}},\vz_{\mathcal{V}})p_{\theta}(\vs_u|\vh^{G'}_{\mathcal{V}},\vz_{\mathcal{V}})p_{\theta}(\vy_{2u}|\vh^{G'}_{\mathcal{V}},\vz_{\mathcal{V}}),
\end{aligned}$} 
\end{equation}
where the decoder of \dgm{} specifically designed for binary MILP partitions the predicted dual solution $\vy$ into two segments $\vy_1, \vy_2$ and predict each segment separately.

\begin{table}[h]
\renewcommand{\arraystretch}{1.1} 
\setlength{\tabcolsep}{4pt}
\vspace{-0.1cm}
\centering
\captionsetup{font=small} 
\caption{V-C encoding for binary MILP.}
\vspace{-0.3cm}
\label{tab:vc_encoding_binary}
\begin{tabular}{cc}
\hline
object & feature \\ \hline
\multirow{3}{*}{\begin{tabular}[c]{@{}c@{}}constraint\\ node\\ $\mathcal{C} = \{c_1...c_m\}$\end{tabular}} & all 0's \\
 & $\vy_1 = \{\vy_{11}...\vy_{1m}\}$ \\
 & $\vr = \{\vs_1...\vs_m$\} \\ \hline
\multirow{4}{*}{\begin{tabular}[c]{@{}c@{}}variable\\ node\\ $\mathcal{V}=\{v_1...v_n\}$\end{tabular}} & all 1's \\
 & $\vx = \{\vx_1...\vx_n\}$ \\
 & $\vs = \{\vr_1...\vr_n\}$ \\  
  & $\vy_2 = \{\vy_{21}...\vy_{2n}\}$ \\  \hline
edge $\mathcal{E}$ & non-zero weights in $\mA$ \\ \hline
\end{tabular}
\vspace{-0.3cm}
\end{table}

\textbf{Hyper-parameters} Across the four datasets, we set the same learning rate for \dgm{} as $1e-3$. We use the Adam optimizer~\citep{kingma2014adam}. For the SC, we set the $\alpha$ in $\mathcal{L}_{\theta,\phi}$ as $5$, for the CA, the CVS, and the IIS, we set $\alpha$ as $150$. We use the random seed as $123$ for \dgm{} training across all the four datasets.

\subsection{Implementation of baseline}
\label{app:implementation_baseline}
\textbf{`Bowly'} Here we show the implementation of generating instances from scratch with the baseline Bowly~\citep{bowly2019stress}. The generation of matrix $\textbf{A}$ is illustrated in the algorithm.~\ref{alg:bowly_A}. With the generated adjacency matrix $\textbf{A}$, where we manipulate the hyper-parameters during the generation process to ensure that the statistical properties of $\textbf{A}$ align as closely as possible with the original dataset. Specifically, we keep the size of graph ($m, n$) the same as the original dataset and uniformly sample $p_v, p_c$ from $[0,1]$ for all the four datasets. For the other hyper-parameter settings, see Tables.~\ref{tab:bowly_hyperparameter}.

\begin{table}[h]
\centering
\caption{The hyper-parameter selection of the Bowly baseline.}
\vspace{-0.3cm}
\label{tab:bowly_hyperparameter}
\begin{tabular}{@{}cccc@{}}
\toprule
 & density & $\mu_{\textbf{A}}$ & $\sigma_{\textbf{A}}$ \\ \midrule
SC & $\mathcal{U}\{0.15,0.20,0.25,0.30,0.35\}$ & -1 & 0 \\
CA & 0.05 & 1 & $\mathcal{U}(0.1,0.3)$ \\
CVS & 0.0013 & 0.2739 & 0.961 \\
IIS & 0.0488 & -1 & 0 \\ \bottomrule
\end{tabular}
\end{table}

Then we uniformly sample the solution space $\vx, \vy, \vs, \vr$ with intervals defined by their corresponding maximum and minimum from the training dataset. Then we deduce $\vb, \vc$ to get the new MILP instances.

\begin{algorithm}
\caption{Bowly - generation of matrix $\textbf{A}$}
\label{alg:bowly_A}
\begin{algorithmic}[1]
\Require $n \in [1,\infty), m \in [1,\infty), \rho \in (0,1], p_v \in [0,1], p_c \in [0,1], \mu_A \in (-\infty, \infty), \sigma_A \in (0, \infty)$
\Ensure $\text{Constraint matrix} \textbf{A} \in \mathbb{Q}^{m \times n}$
\State Set target variable degree $d_(u_i) = 1$ for randomly selected i,0 for all others
\State Set target constraint degree $d_(u_i) = 1$ for randomly selected i,0 for all others
\State $e \leftarrow 1$
\While{$e < \rho m n$}
\State $s \leftarrow$ draw $n$ values from $U(0,1)$
\State $t \leftarrow$ draw $m$ values from $U(0,1)$
\State Increment the degree of variable node $i$ with maximum $p_v \frac{d(u_i)}{e} + s_i$
\State Increment the degree of constraint node $j$ with maximum $p_c \frac{d(v_j)}{e} + t_j$
\State $e \leftarrow e + 1$
\EndWhile
\For{$i = 1,...,n$}
\For{$j = 1,...,m$}
\State $r \leftarrow$ draw from $U(0,1)$
\If{$r < \frac{d(u_i)d(v_j)}{e}$}
\State Add edge $(i,j)$ to VC
\EndIf
\EndFor
\EndFor
\While{$\min((d(u_i),d(v_j)) = 0$}
\State Choose $i$ from $\{i|d(u_i) = 0\}$, or randomly if all $d(u_i) > 0$
\State Choose $j$ from $\{j|d(v_j) = 0\}$, or randomly if all $d(v_j) > 0$
\State Add edge $(i,j)$ to VC
\EndWhile
\For{$(i,j) \in E(VC)$}
\State $a_{ij} = \mathcal{N}(\mu_{\textbf{A}}, \sigma_{\textbf{A}})$
\EndFor \\
\Return $\textbf{A}$
\end{algorithmic}
\end{algorithm}

\textbf{`Random'} We use exactly the same network architecture and generation process as \dgm. The key difference is that instead of utilizing the trained NN, we uniformly sample the variables $d_{c_i}, e(c_i,u), w_{c_i}, \vy_1, \vs, \vx, \vr, \vy_2$ required for decoder prediction within intervals delineated by the maximum and minimum values of each variable from the training set, simulating the random parameters of an untrained neural network.

\subsection{Implementation of the structural statistical characteristics}
\label{app:implementation_statistic_metric}
The explanation of various statistical metrics used for comparing the structural similarity of MILP problem instances is detailed as shown in Table.~\ref{tab:explanation_statistic}. Specific numerical values for different metrics for the SC and CA problems can be found in Table.~\ref{tab:setcover_statistics} and Table.~\ref{tab:cauctions_statistics}, respectively.

\begin{table}[h]
\centering
\caption{Explanation of the statistic metrics of the MILP instances}
\vspace{-0.35cm}
\label{tab:explanation_statistic}
\begin{tabular}{@{}cc@{}}
\toprule
name & explanation \\ \midrule
density mean & the average number of non zero values in the constraint matrix \\
cons degree mean & the average number of constraint node degree \\
cons degree std & the standard variance of constraint node degree \\
var degree mean & the average number of variable node degree \\
var degree std & the standard variance of variable node degree \\
$\vb$ mean & the average $\vb$ value \\
$\vb$ std & the standard variance of $\vb$ value \\
$\vc$ mean & the average value of $\vc$ \\
$\vc$ std & the standard variance of $\vc$ value \\ \bottomrule
\end{tabular}
\end{table}

For each statistic metric $i$ shown in Table.~\ref{tab:explanation_statistic}, we begin by collecting lists of the values from four data sources: the original dataset, the data generated by the `Bowly' baseline, the data generated by the `random' baseline, and data generated by \dgm. Each data source contains $1000$ instances. We then employ the lists from the four data sources to approximate four categorical distributions. Utilizing the \textit{numpy.histogram} function, we set the number of bins to the default value of $10$, with the min and max values derived from the collective minimum and maximum of a given metric across the four data sources, respectively.
Next, we employ Jensen-Shannon (JS) divergence $D_{js}^i$ via the function $\textit{scipy.spatial.distance.jensenshannon}$~\citep{2020SciPy-NMeth} to quantify the divergence between the original samples and the rest three data sources, resulting in $\text{score}_i$ for each statistical metric.

\begin{equation}
    \text{score}_i = (\max(D_{js}) - D_{js}^i) / (\max(D_{js}) - \min(D_{js})),
\end{equation}where $\max(D_{js}), \min(D_{js})$ are the maximum and minimum of JS divergence across all the metrics.

Then we average the score for each statistic metric to obtain the final similarity score, as is shown in Table.~\ref{tab:js_score}:
\begin{equation}
    \text{score} = \frac{1}{9} \sum_{i=1}^9 \text{score}_i.
\end{equation}

\subsection{Implementation of Data Sharing for Solver Configuration Tuning}
\label{app:implementation_downstream1}
Below are the hyper-parameters that we randomly sample to test the positive-correlation of different dataset pairs. We adhere to the configuration established in mainstream solver tuning literature to select the parameters requiring adjustment~\cite{hutter2011sequential, lindauer2018warmstarting, lindauer2022smac3}, . For a detailed explanation of each parameter, please refer to the SCIP documentation\footnote{https://www.scipopt.org/doc/html/PARAMETERS.php}.

\begin{table}[h]
\centering
\captionsetup{font=small}
\caption{The selected SCIP hyper-parameters and the range to randomly select from.}
\vspace{-0.35cm}
\label{tab:scip_hyper_parameters}
\begin{tabular}{@{}cccc@{}}
\toprule
params & whole range/choice & default & our range/choice \\ \midrule
branching/scorefunc & s, p, q & s & s, p, q \\
branching/scorefac & {[}0, 1{]} & 0.167 & {[}0, 1{]} \\
branching/preferbinary & True, False & False & True, False \\
branching/clamp & {[}0,0.5{]} & 0.2 & {[}0,0.5{]} \\
branching/midpull & {[}0,1{]} & 0.75 & {[}0,1{]} \\
branching/midpullreldomtrig & {[}0,1{]} & 0.5 & {[}0,1{]} \\
branching/lpgainnormalize & d, l, s & s & d, l, s \\
lp/pricing & l, a, f, p, s, q, d & l & l, a, f, p, s, q, d \\
lp/colagelimit & {[}-1,2147483647{]} & 10 & {[}0,100{]} \\
lp/rowagelimit & {[}-1,2147483647{]} & 10 & {[}0,100{]} \\
nodeselection/childsel & d, u, p, I, l, r, h & h & d, u, p, I, l, r, h \\
separating/minortho & {[}0,1{]} & 0.9 & {[}0,1{]} \\
separating/minorthoroot & {[}0,1{]} & 0.9 & {[}0,1{]} \\
separating/maxcuts & {[}0,2147483647{]} & 100 & {[}0,1000{]} \\
separating/maxcutsroot & {[}0,2147483647{]} & 2000 & {[}0,10000{]} \\
separating/cutagelimit & {[}-1,2147483647{]} & 80 & {[}0,200{]} \\
separating/poolfreq & {[}-1,65534{]} & 10 & {[}0,100{]} \\ \bottomrule
\end{tabular}
\end{table}

\subsection{Implementation of Optimal Value Prediction via ML}
\label{app:implementation_downstream2}
\textbf{Neural Network Architecture} In this downstream task, 
We also use the bipartite GNN backbone which is exactly the same as that in ~\cite{gasse2019exact}. We use an MLP layer and global mean pooling to produce the optimal objective value prediction. The learning rate is set as $1e-3$.

\section{Supplementary Experiment Results}
\subsection{statistical characteristics of the generated instances}
\label{app:results_statistic_metric}
We show the specific value of each statistic metric of the original dataset, and the datasets generated by the baselines as well as \dgm{} on the SC and the CA problem in Table.~\ref{tab:setcover_statistics} and Table.~\ref{tab:cauctions_statistics} respectively.
\begin{table}[t]
\caption{Statistic value comparison across the original dataset and the generated datasets with different constraints replacement rates on the set covering (SC) problem. `resolving time' calculates under default configuration of pySCIPopt. `density' represents the ratio of non zero entries in the constraint matrix. `cons degree' denotes the degree of constraint nodes, `var degree' stands for the degree of variable nodes. $\vb$ denotes the right hand side vector of the MILP, and $\vc$ is the objective coefficient vector.}
\label{tab:setcover_statistics}
\vspace{-0.35cm}
\resizebox{\linewidth}{!}{\begin{tabular}{cccccccccccc}
\hline
 & replace ratio & \begin{tabular}[c]{@{}c@{}}resolving\\ time (s)\end{tabular} & \begin{tabular}[c]{@{}c@{}}density\\ mean\end{tabular} & \begin{tabular}[c]{@{}c@{}}cons degree\\ mean\end{tabular} & \begin{tabular}[c]{@{}c@{}}cons degree\\ std\end{tabular} & \begin{tabular}[c]{@{}c@{}}var degree\\ mean\end{tabular} & \begin{tabular}[c]{@{}c@{}}var degree\\ std\end{tabular} & b mean & b std & c mean & c std \\ \hline
original & - & 0.821 & 0.251 & 100.700 & 8.447 & 50.350 & 6.854 & -1.0 & 0.0 & 50.490 & 28.814 \\
Bowly & - &  & 0.205 & 82.312 & 35.131 & 41.305 & 21.628 & 1.484 & 3.504 & 403.208 & 198.571 \\ \hline
random & 0.01 & 127.723 & 0.251 & 100.774 & 9.853 & 50.387 & 6.841 & 1.294 & 3.045 & 422.65 & 65.078 \\
random & 0.05 & 143.883 & 0.253 & 101.039 & 14.070 & 50.519 & 6.787 & 1.218 & 3.123 & 431.422 & 66.082 \\
random & 0.10 & 187.851 & 0.253 & 101.357 & 17.706 & 50.678 & 6.727 & 1.164 & 3.210 & 441.696 & 67.250 \\
random & 0.20 & 304.216 & 0.255 & 101.900 & 22.808 & 50.950 & 6.607 & 1.062 & 3.351 & 460.696 & 69.379 \\
random & 0.50 & 1312.595 & 0.258 & 103.348 & 31.305 & 51.674 & 6.375 & 0.664 & 3.629 & 509.337 & 74.864 \\ \hline
ours & 0.01 & 83.681 & 0.251 & 100.700 & 8.876 & 50.350 & 7.431 & -0.515 & 1.351 & 44.863 & 0.939 \\
ours & 0.05 & 70.476 & 0.251 & 100.712 & 10.202 & 50.356 & 9.977 & -0.456 & 1.386 & 44.958 & 0.984 \\
ours & 0.10 & 54.650 & 0.251 & 100.738 & 11.365 & 50.369 & 13.354 & -0.413 & 1.441 & 45.057 & 1.032 \\
ours & 0.20 & 54.830 & 0.251 & 100.754 & 12.872 & 50.377 & 19.992 & -0.368 & 1.576 & 45.112 & 1.071 \\
ours & 0.50 & 22.462 & 0.252 & 100.830 & 14.433 & 50.415 & 37.017 & -0.005 & 1.271 & 44.967 & 1.872 \\ \hline
\end{tabular}}
\end{table}

\begin{table}[t]
 \caption{Statistic value comparison across the original dataset and the generated datasets with different constraints replacement rates on the combinatorial auction (CA) problem. `resolving time' calculates under default configuration of pySCIPopt. `density' represents the ratio of non zero entries in the constraint matrix. `cons degree' denotes the degree of constraint nodes, `var degree' stands for the degree of variable nodes. $\vb$ denotes the right hand side vector of the MILP, and $\vc$ is the objective coefficient vector.}
\label{tab:cauctions_statistics}
\vspace{-0.35cm}
\resizebox{\linewidth}{!}{\begin{tabular}{cccccccccccc}
\hline
 & replace ratio & \begin{tabular}[c]{@{}c@{}}resolving\\ time (s)\end{tabular} & \begin{tabular}[c]{@{}c@{}}density\\ mean\end{tabular} & \begin{tabular}[c]{@{}c@{}}cons degree\\ mean\end{tabular} & \begin{tabular}[c]{@{}c@{}}cons degree\\ std\end{tabular} & \begin{tabular}[c]{@{}c@{}}var degree\\ mean\end{tabular} & \begin{tabular}[c]{@{}c@{}}var degree\\ std\end{tabular} & b mean & b std & c mean & c std \\ \hline
original & - & 1.360 & 0.050 & 14.538 & 13.834 & 5.578 & 3.253 & 1.0 & 0.0 & 330.999 & 234.444 \\
Bowly & - & 0.281 & 0.048 & 14.415 & 13.633 & 5.544 & 7.262 & 1.668 & 1.617 & 510.211 & 1101.065 \\ \hline
random & 0.01 & 0.416 & 0.051 & 14.664 & 13.970 & 5.634 & 3.240 & 1.748 & 1.602 & 524.961 & 563.436 \\
random & 0.05 & 0.502 & 0.054 & 15.225 & 14.531 & 5.878 & 3.201 & 1.792 & 1.647 & 560.369 & 561.074 \\
random & 0.10 & 0.555 & 0.056 & 15.877 & 15.088 & 6.152 & 3.161 & 1.855 & 1.706 & 598.047 & 555.956 \\
random & 0.20 & 0.821 & 0.061 & 17.098 & 15.953 & 6.658 & 3.106 & 1.966 & 1.797 & 669.168 & 552.853 \\
random & 0.30 & 1.056 & 0.065 & 18.186 & 16.527 & 7.105 & 3.070 & 2.053 & 1.850 & 735.284 & 548.606 \\
random & 0.50 & 2.353 & 0.072 & 19.959 & 17.222 & 7.837 & 3.006 & 2.267 & 1.972 & 841.971 & 545.471 \\ \hline
ours & 0.01 & 0.361 & 0.050 & 14.490 & 13.776 & 5.565 & 3.253 & 1.645 & 1.348 & 361.711 & 264.798 \\
ours & 0.05 & 0.360 & 0.050 & 14.361 & 13.609 & 5.535 & 3.286 & 1.609 & 1.325 & 351.417 & 261.927 \\
ours & 0.10 & 0.301 & 0.050 & 14.205 & 13.401 & 5.500 & 3.366 & 1.589 & 1.329 & 342.702 & 261.313 \\
ours & 0.20 & 0.217 & 0.049 & 13.819 & 12.854 & 5.412 & 3.586 & 1.525 & 1.315 & 324.282 & 260.848 \\
ours & 0.30 & 0.140 & 0.047 & 13.454 & 12.330 & 5.344 & 3.847 & 1.454 & 1.280 & 304.911 & 260.949 \\
ours & 0.50 & 0.055 & 0.045 & 12.869 & 11.379 & 5.254 & 4.282 & 1.350 & 1.233 & 271.474 & 255.515 \\ \hline
\end{tabular}}
\end{table}

\subsection{Data Sharing for Solver configuration Tuning}
\label{app:results_downstream2_cvs_iis}
\textbf{CVS and IIS} There are five total instances in CVS, comprising three for training \dgm{} and the downstream predictor and two for testing. The IIS has two instances, one for training and one for testing (with allocation based on alphabetical order). Please refer to Table.~\ref{tab:downstream2_cvs_iis} for the model's performance. `ground truth' corresponds to the true values of the optimal objectives for each problem. Models trained exclusively on the `original' training set exhibit superior fitting and more accurate predictions on the training set itself.
However, models trained on the datasets where we introduce $20$ additional newly generated instances by \dgm{} with varying constraint replacement ratio $\gamma$ not only demonstrate minimal gap in prediction on the training set towards the models trained solely on the original data compared with the baselines, but also showcase improved predictive performance on previously unseen test sets. This underscores the notion that the \dgm-generated data can indeed increase structural and solution label diversity to a certain extent, thereby enhancing the generalization capability and overall performance of the models. Again, similar to the previous two experiments, `Bowly' degrades the predictive performance of the model, `random' results in marginal improvement in out-of-distribution prediction accuracy.

\begin{table}[t]
\caption{The predicted value and relative mean square error (MSE) of the optimal objective value on the CVS and the IIS problem. In the CVS, `cvs08r139-94',`cvs16r70-62',`cvs16r89-60' are used as training data, `cvs16r106-72',`cvs16r128-89' are used as testing data. In the IIS, `iis-glass-cov' is used as the training data, `iis-hc-cov' is used as the testing data. `original' shows the performance of the model trained merely on the three (CVS) or single (IIS) original training instances.}
\vspace{-0.3cm}
\label{tab:downstream2_cvs_iis}
\renewcommand{\arraystretch}{1.2}
\resizebox{\textwidth}{!}{\begin{tabular}{cc|cccccc|cccc|cc|cc}
\hline
\multirow{2}{*}{} & \multirow{2}{*}{} & \multicolumn{6}{c|}{in-distribution} & \multicolumn{4}{c|}{out-of-distributio} & \multicolumn{2}{c|}{in-distribution} & \multicolumn{2}{c}{out-of-distribution} \\ \cline{3-16} 
 &  & \multicolumn{2}{c}{cvs08r139-94} & \multicolumn{2}{c}{cvs16r70-62} & \multicolumn{2}{c|}{cvs16r89-60} & \multicolumn{2}{c}{\textbf{cvs16r106-72}} & \multicolumn{2}{c|}{\textbf{cvs16r128-89}} & \multicolumn{2}{c|}{iis-glass-cov} & \multicolumn{2}{c}{\textbf{iis-hc-cov}} \\ \hline
dataset & ratio & value & msre & value & msre & value & msre & value & msre & value & msre & value & msre & value & msre \\ \hline
ground truth & - & 116 & 0 & 42 & 0 & 65 & 0 & 81 & 0 & 97 & 0 & -17 & 0 & -21 & 0 \\ \hline
original & - & \textbf{115.994} & \textbf{2e-9} & \textbf{41.998} & \textbf{1e-9} & \textbf{64.997} & \textbf{1e-9} & 77.494 & 0.001 & 89.258 & 0.006 & \textbf{-20.999} & \textbf{3e-10} & -94.451 & 20.756 \\ \hline
Bowly & - & 65.712 & 0.187 & 82.353 & 0.923 & 66.858 & 8e-6 & 61.504 & 0.057 & 66.045 & 0.101 & -88.756 & 17.816 & -88.756 & 17.816 \\ \hline
random & 0.01 & 138.459 & 0.037 & 45.312 & 0.006 & 67.875 & 0.001 & 58.754 & 0.075 & 68.192 & 0.088 & -22.263 & 3e-4 & -83.146 & 15.139 \\
random & 0.05 & 163.412 & 0.167 & 34.571 & 0.031 & 45.605 & 0.089 & 41.110 & 0.242 & 24.952 & 0.551 & -20.695 & 2e-4 & -82.297 & 14.753 \\
random & 0.10 & 116.824 & 5e-5 & 60.440 & 0.192 & 79.152 & 0.047 & 68.641 & 0.023 & 79.321 & 0.033 & -20.991 & 1e-7 & -807.680 & 2163.238 \\
random & 0.20 & 144.962 & 0.062 & 79.849 & 0.812 & 99.552 & 0.282 & 71.821 & 0.0128 & 99.898 & 8e-4 & -21.678 & 0.001 & -227.610 & 153.482 \\
random & 0.50 & 159.807 & 0.142 & 49.364 & 0.030 & 65.213 & 1e-5 & 103.960 & 0.080 & 122.321 & 0.068 & -21.633 & 9e-3 & -100.224 & 23.966 \\ \hline
\dgm & 0.01 & 116.981 & 7e-5 & 42.197 & 2e-5 & 64.876 & 3e-6 & 78.646 & 8e-4 & \textbf{96.831} & \textbf{3e-6} & -20.933 & 1e-5 & -90.556 & 18.721 \\
\dgm & 0.05 & 161.558 & 0.154 & 26.181 & 0.141 & 23.439 & 0.408 & 66.119 & 0.033 & 76.119 & 0.046 & -21.108 & 2e-5 & -61.217 & 6.765 \\
\dgm & 0.10 & 118.609 & 5e-4 & 45.461 & 0.006 & 67.216 & 0.001 & \textbf{80.706} & \textbf{1e-5} & 95.745 & 1e-4 & -20.976 & 1e-6 & -65.385 & 8.101 \\
\dgm & 0.20 & 114.622 & 1e-4 & 42.933 & 4e-4 & 62.627 & 0.001 & 83.379 & 8e-4 & 120.641 & 0.0594 & -20.159 & 0.001 & \textbf{-55.926} & \textbf{5.243} \\
\dgm & 0.50 & 120.361 & 0.001 & 44.472 & 0.003 & 69.287 & 0.004 & 84.870 & 0.002 & 104.333 & 0.005 & -21.009 & 2e-7 & -90.427 & 18.655 \\ \hline
\end{tabular}
}
\end{table}

We present the visual results for CA, SC, and IIS datasets, see Fig.~\ref{fig:downstream1_visualization_ca}, ~\ref{fig:downstream1_visualization_sc}, ~\ref{fig:downstream1_visualization_iis}.
\begin{figure}[t]
  \centering
  \begin{minipage}{\textwidth}
    \centering
    \begin{subfigure}[b]{0.28\textwidth}
      \centering
      \includegraphics[width=\textwidth]{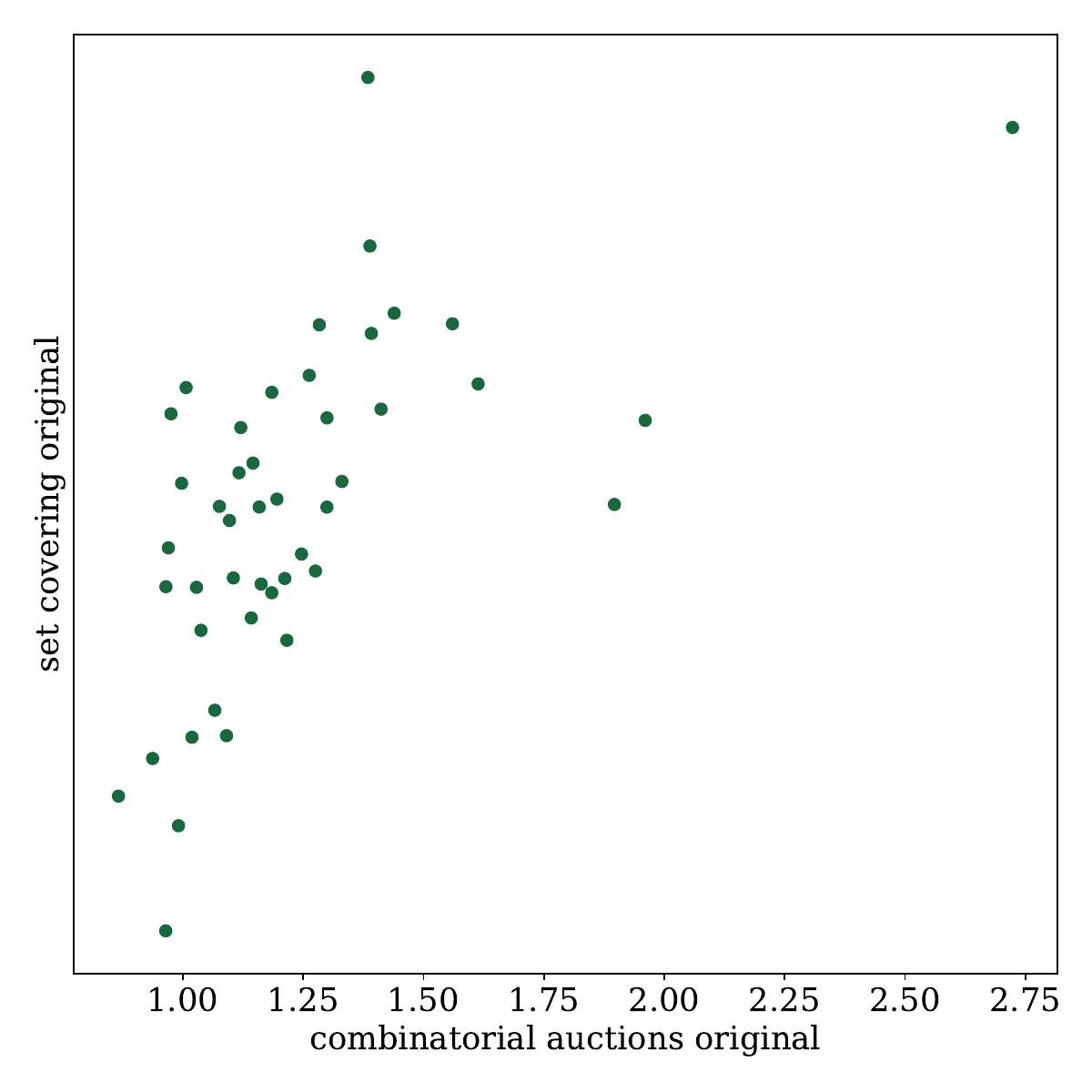}
      \vspace{-0.6cm}
      \caption{CA - SC}
      \label{fig:downstream1_ca_sc}
    \end{subfigure}
    \hspace{-0.3cm}
    \begin{subfigure}[b]{0.28\textwidth}
      \centering
      \includegraphics[width=\textwidth]{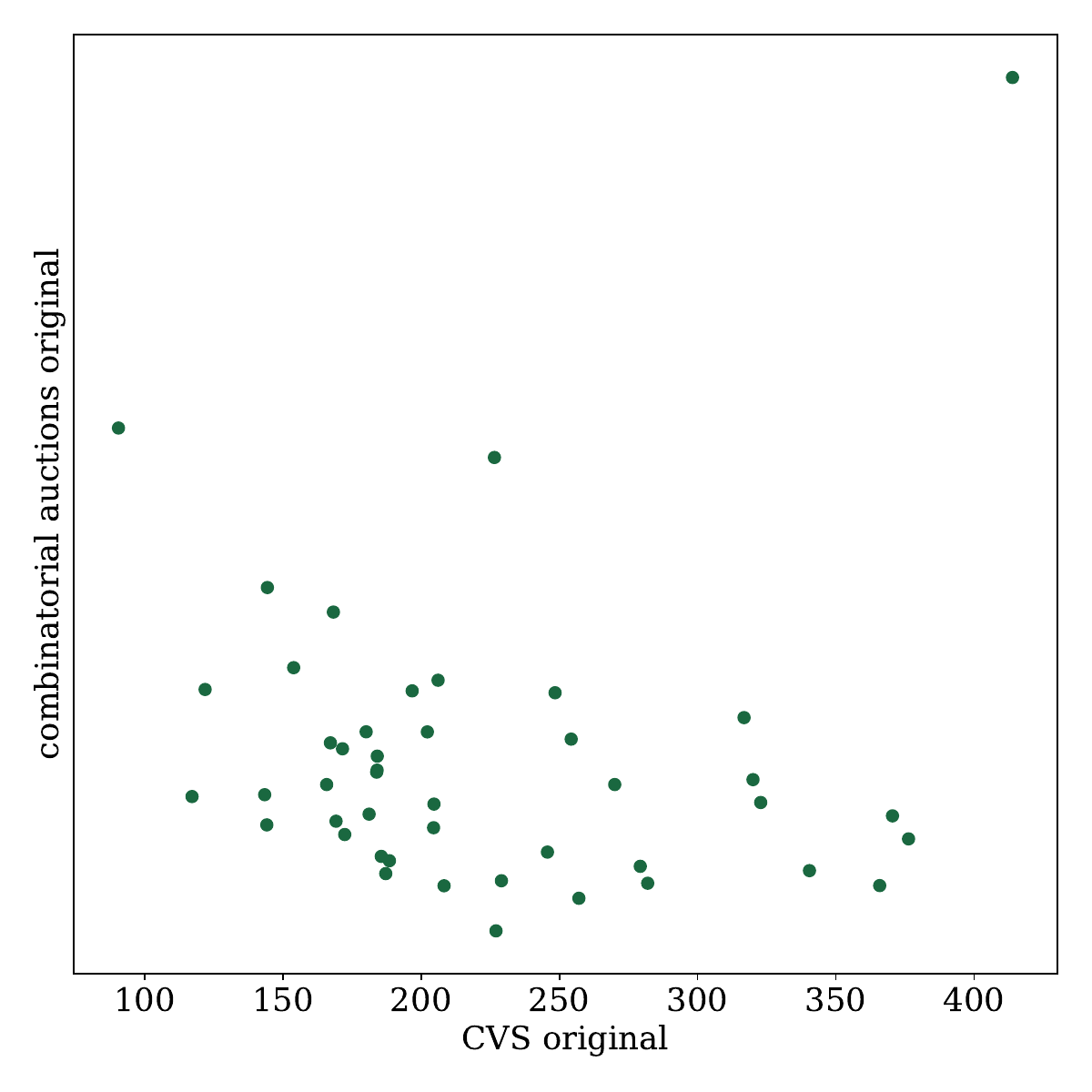}
      \vspace{-0.6cm}
      \caption{CVS - CA}
      \label{fig:downstream1_cvs_ca}
    \end{subfigure}
    \hspace{-0.3cm}
    \begin{subfigure}[b]{0.28\textwidth}
      \centering
      \includegraphics[width=\textwidth]{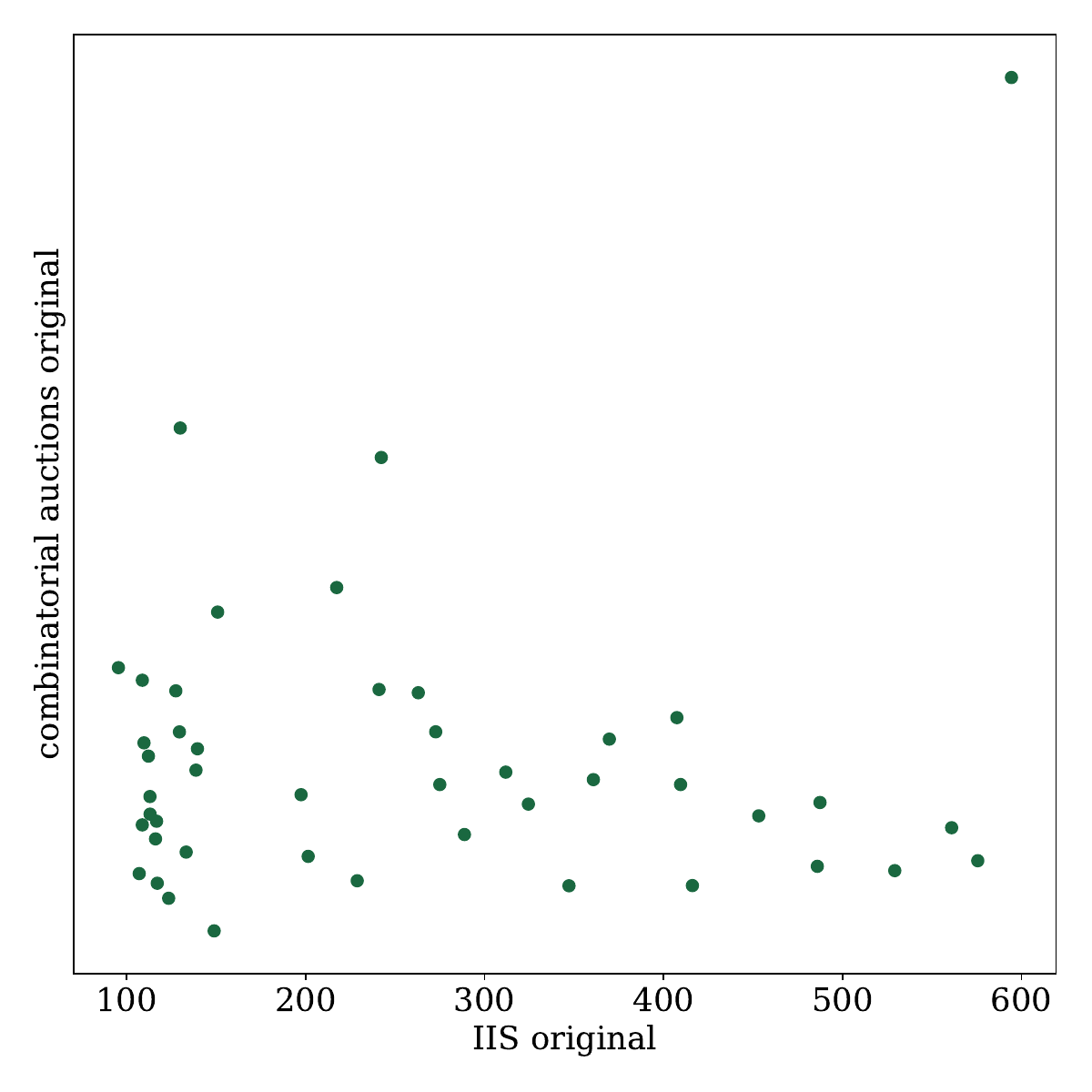}
      \vspace{-0.6cm}
      \caption{IIS - CA}
      \label{fig:downstream1_iis_ca}
    \end{subfigure}
    \vfill
    \vspace{-0.1cm} 
    \begin{subfigure}[b]{0.28\textwidth}
      \centering
      \includegraphics[width=\textwidth]{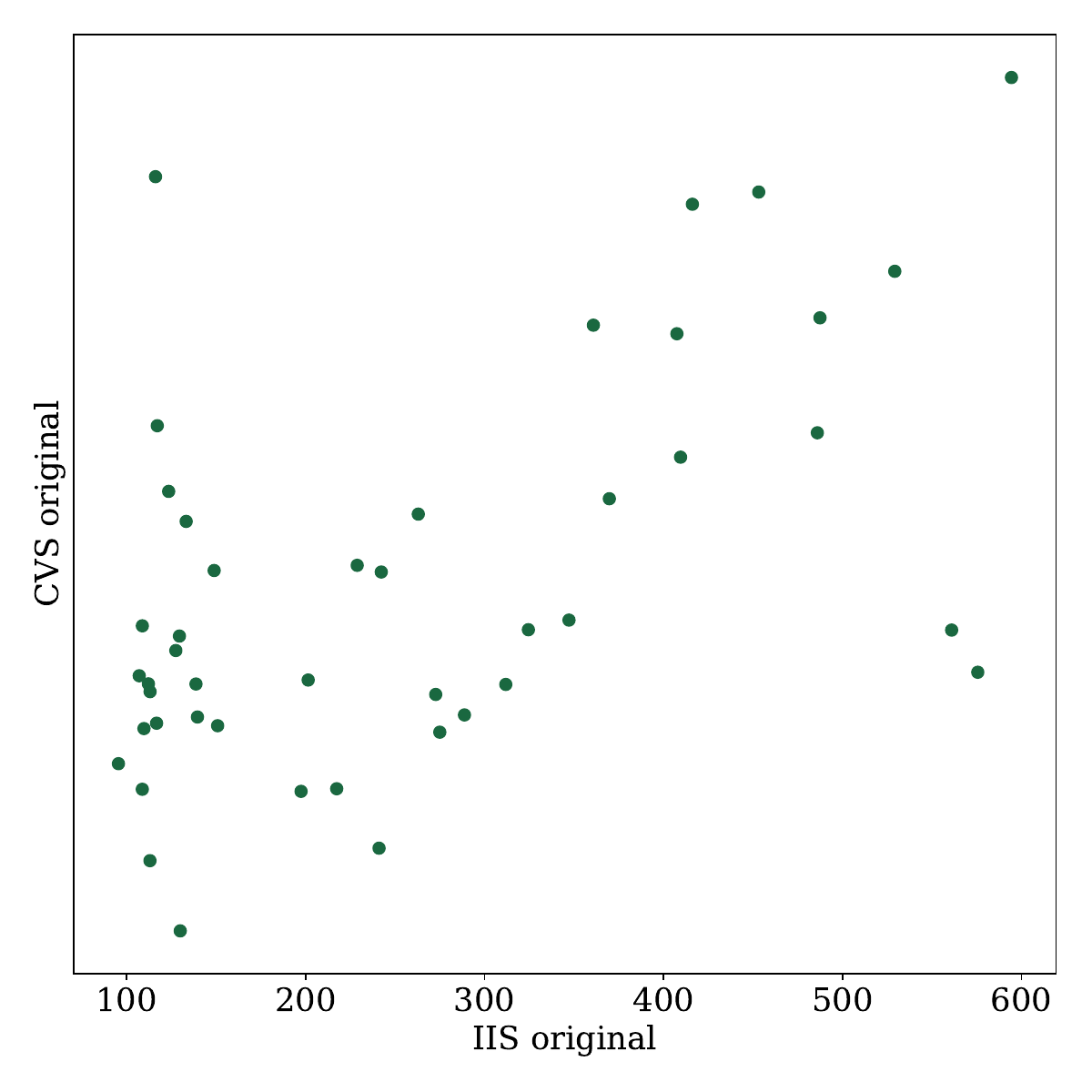}
      \vspace{-0.6cm}
      \caption{IIS - CVS}
      \label{fig:downstream1_iis_cvs}
    \end{subfigure}
     \hspace{-0.3cm}
    \begin{subfigure}[b]{0.28\textwidth}
      \centering
      \includegraphics[width=\textwidth]{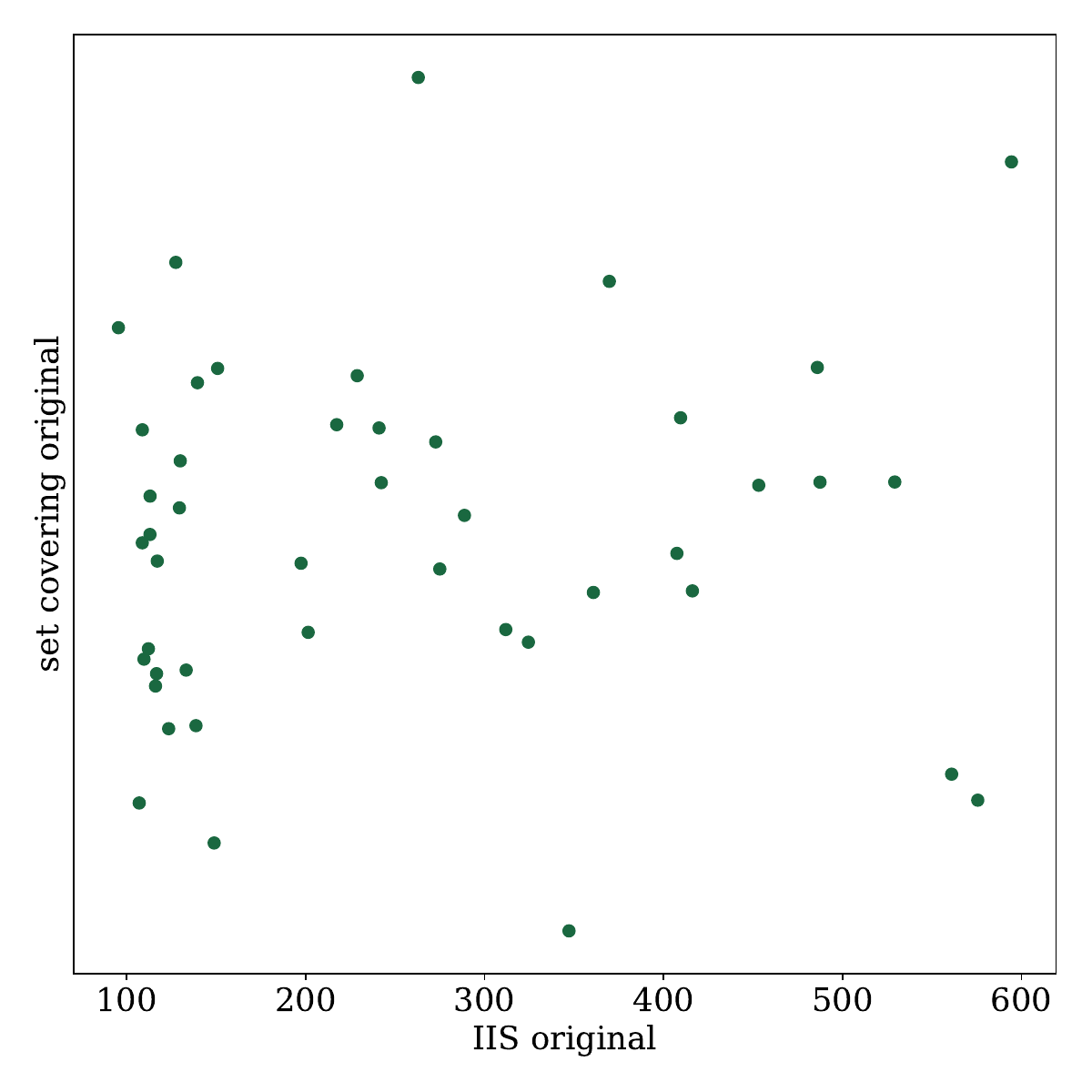}
      \vspace{-0.6cm}
      \caption{IIS - SC}
      \label{fig:downstream1_iis_sc}
    \end{subfigure}
     \hspace{-0.3cm}
    \begin{subfigure}[b]{0.28\textwidth}
      \centering
      \includegraphics[width=\textwidth]{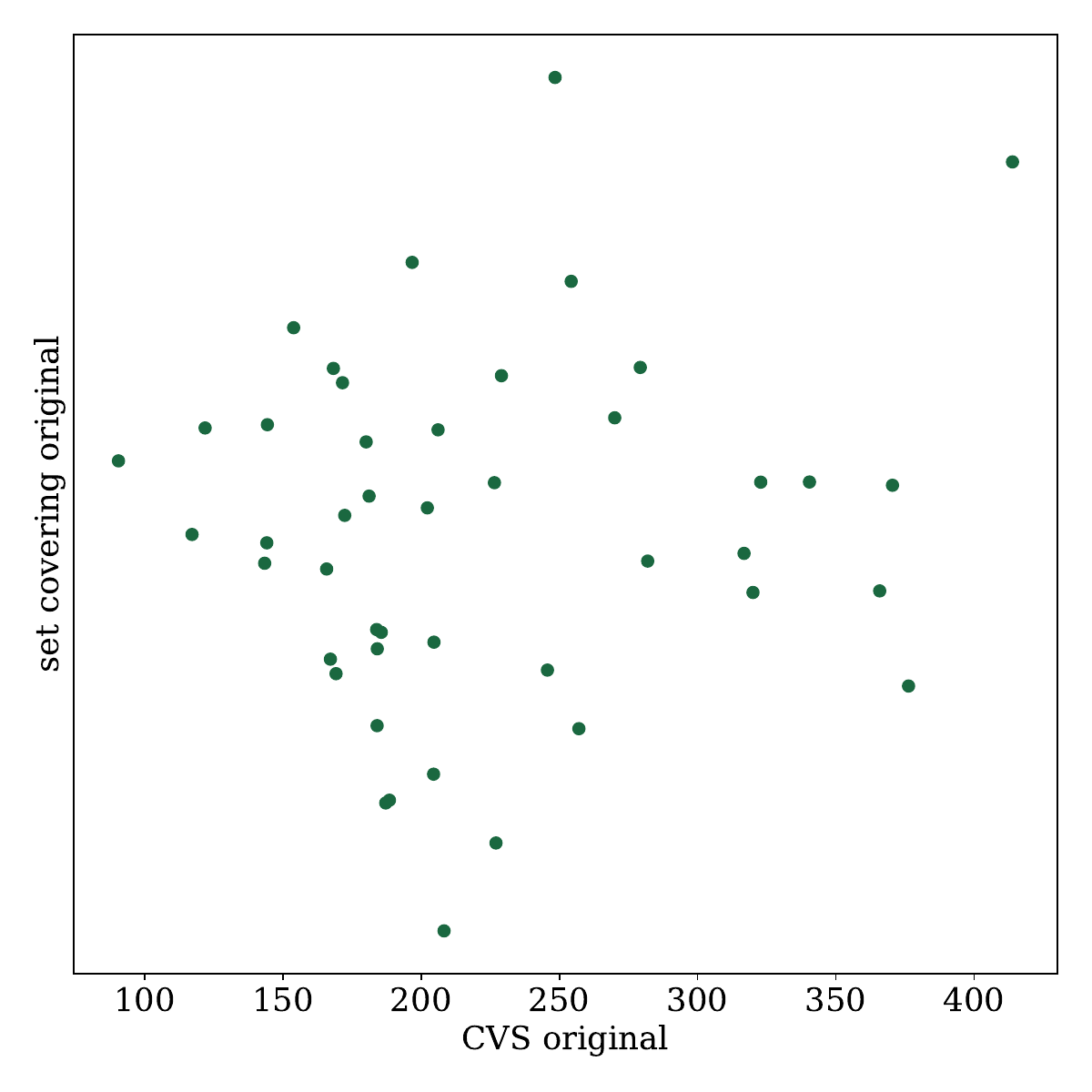}
      \vspace{-0.6cm}
      \caption{CVS - SC}
      \label{fig:downstream1_cvs_sc}
    \end{subfigure}
  \end{minipage}
  \caption{The solution time of SCIP with different parameter sets across different original datasets.}
    \label{fig:downstream1_visualization_cross_datasets}
\end{figure}

\begin{figure}[t]
  \centering
  \begin{minipage}{\textwidth}
    \centering
    \begin{subfigure}[b]{0.20\textwidth}
      \centering
      \includegraphics[width=\textwidth]{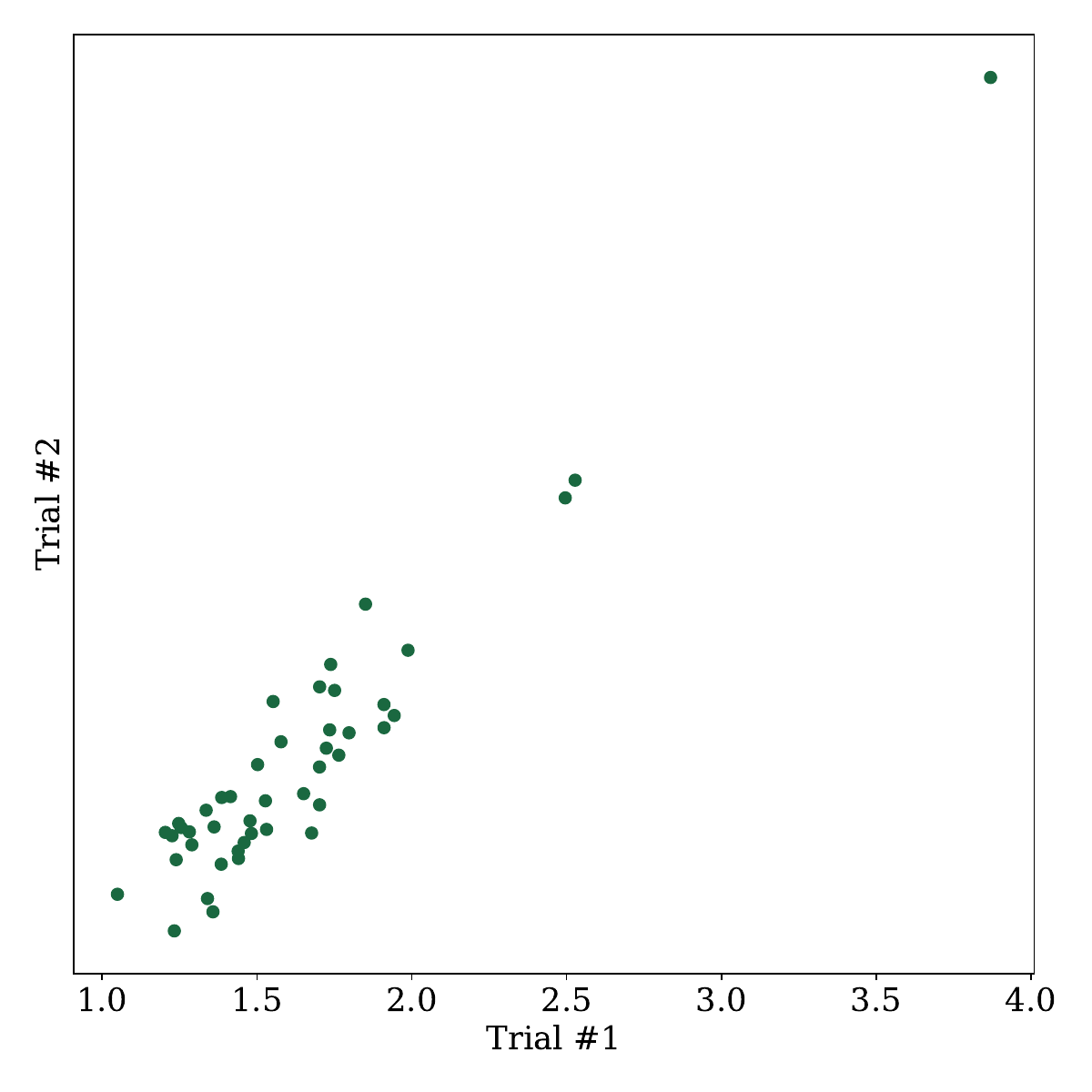}
      \vspace{-0.6cm}
      \caption{two trials}
      \label{fig:downstream1_ca_ca}
    \end{subfigure}
    \centering
    \begin{subfigure}[b]{0.20\textwidth}
      \centering
      \includegraphics[width=\textwidth]{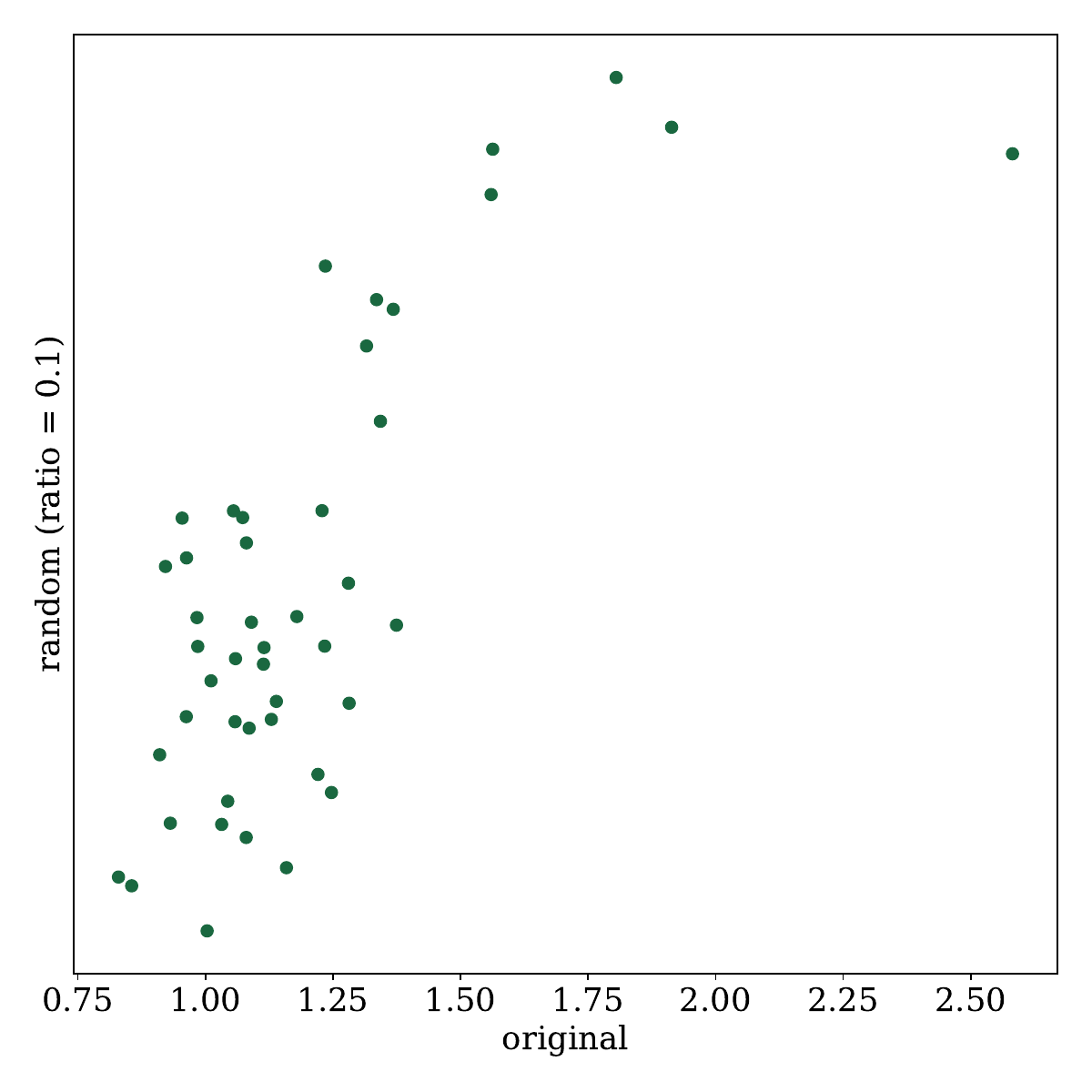}
      \vspace{-0.6cm}
      \caption{random ($\gamma$ = 0.1)}
      \label{fig:downstream1_ca_random010}
    \end{subfigure}
    \begin{subfigure}[b]{0.20\textwidth}
      \centering
      \includegraphics[width=\textwidth]{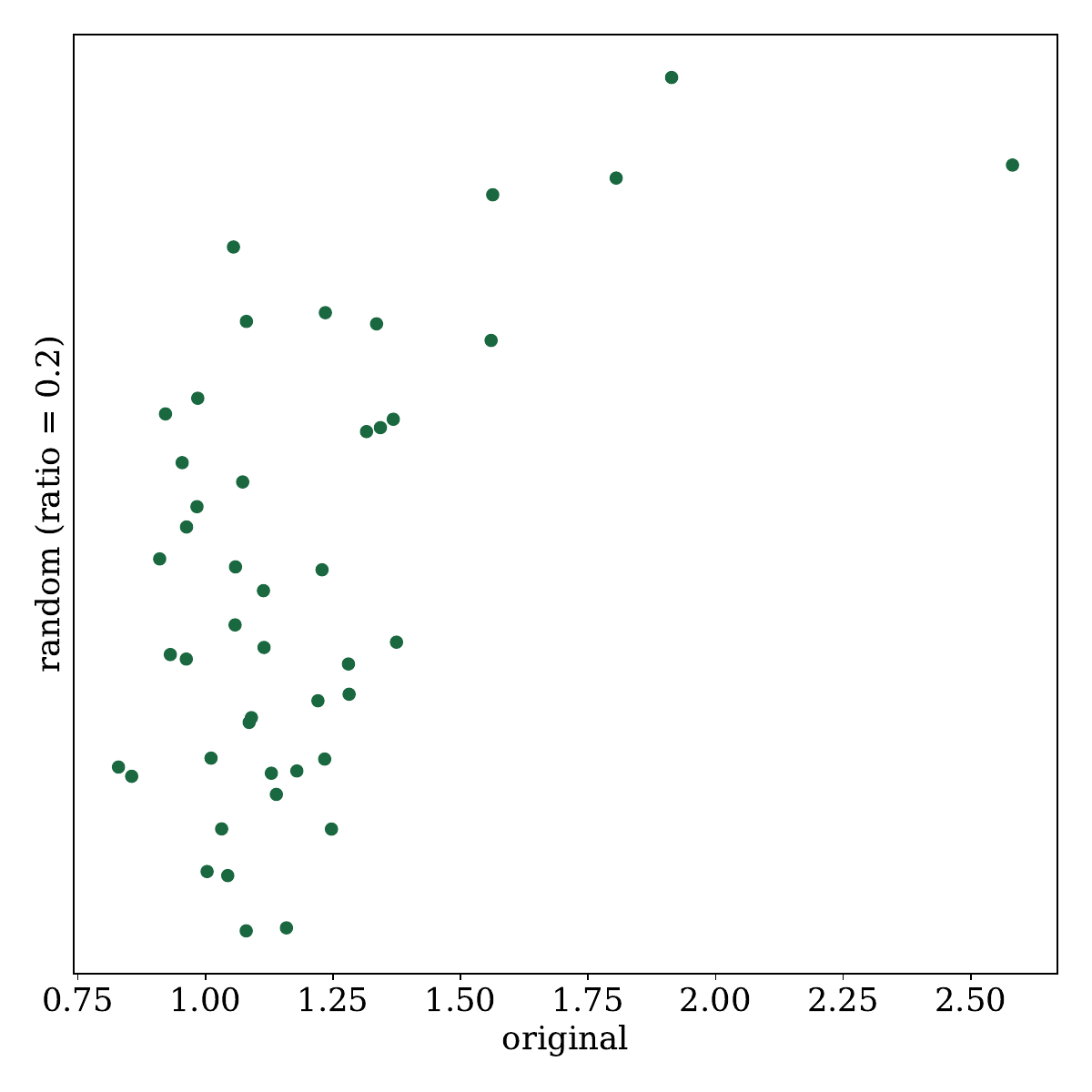}
      \vspace{-0.6cm}
      \caption{random ($\gamma$ = 0.2)}
      \label{fig:downstream1_ca_random020}
    \end{subfigure}
    \begin{subfigure}[b]{0.20\textwidth}
      \centering
      \includegraphics[width=\textwidth]{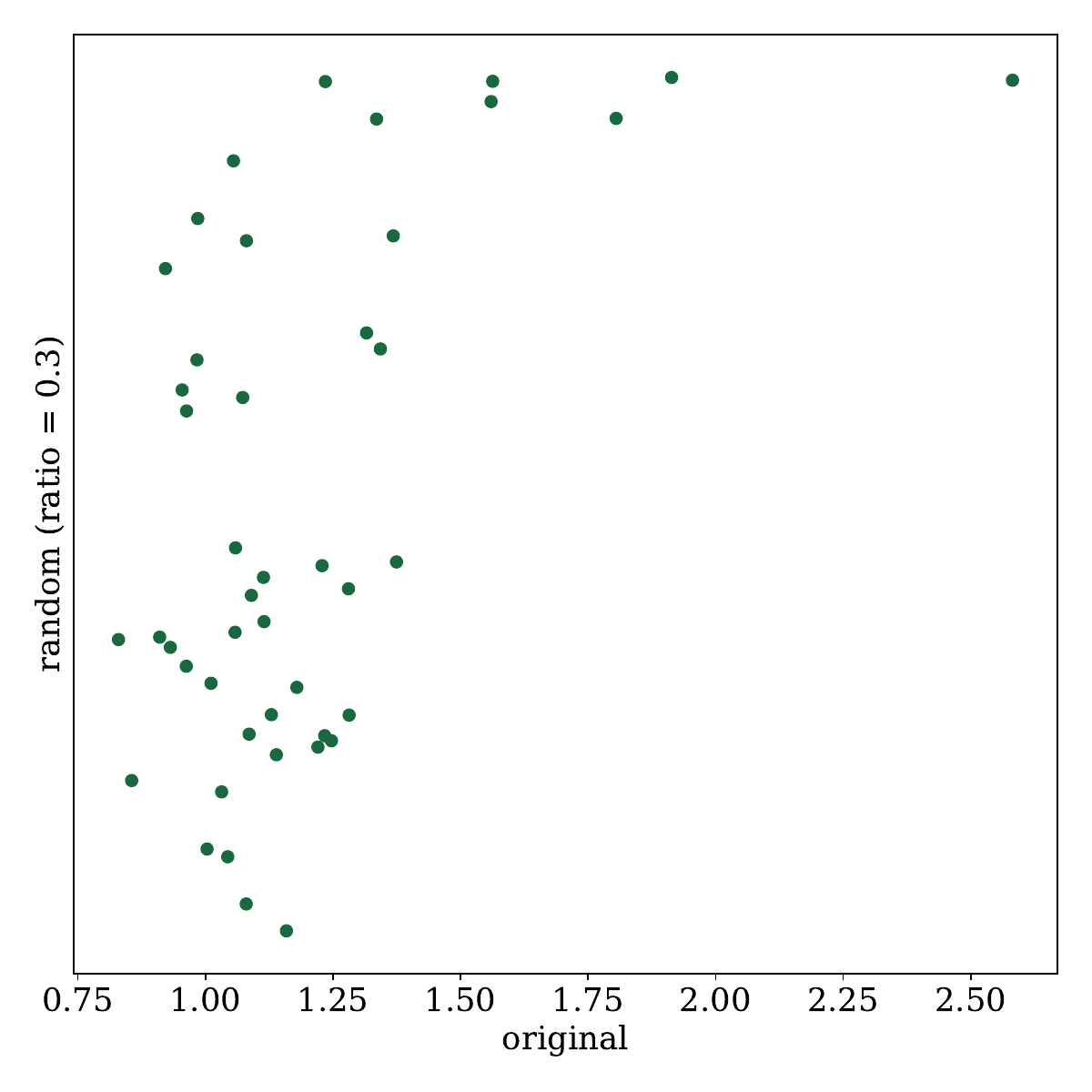}
      \vspace{-0.6cm}
      \caption{random ($\gamma$ = 0.3)}
      \label{fig:downstream1_ca_random030}
    \end{subfigure}
    \vfill
    \vspace{0.1cm} 
    \begin{subfigure}[b]{0.20\textwidth}
      \centering
      \includegraphics[width=\textwidth]{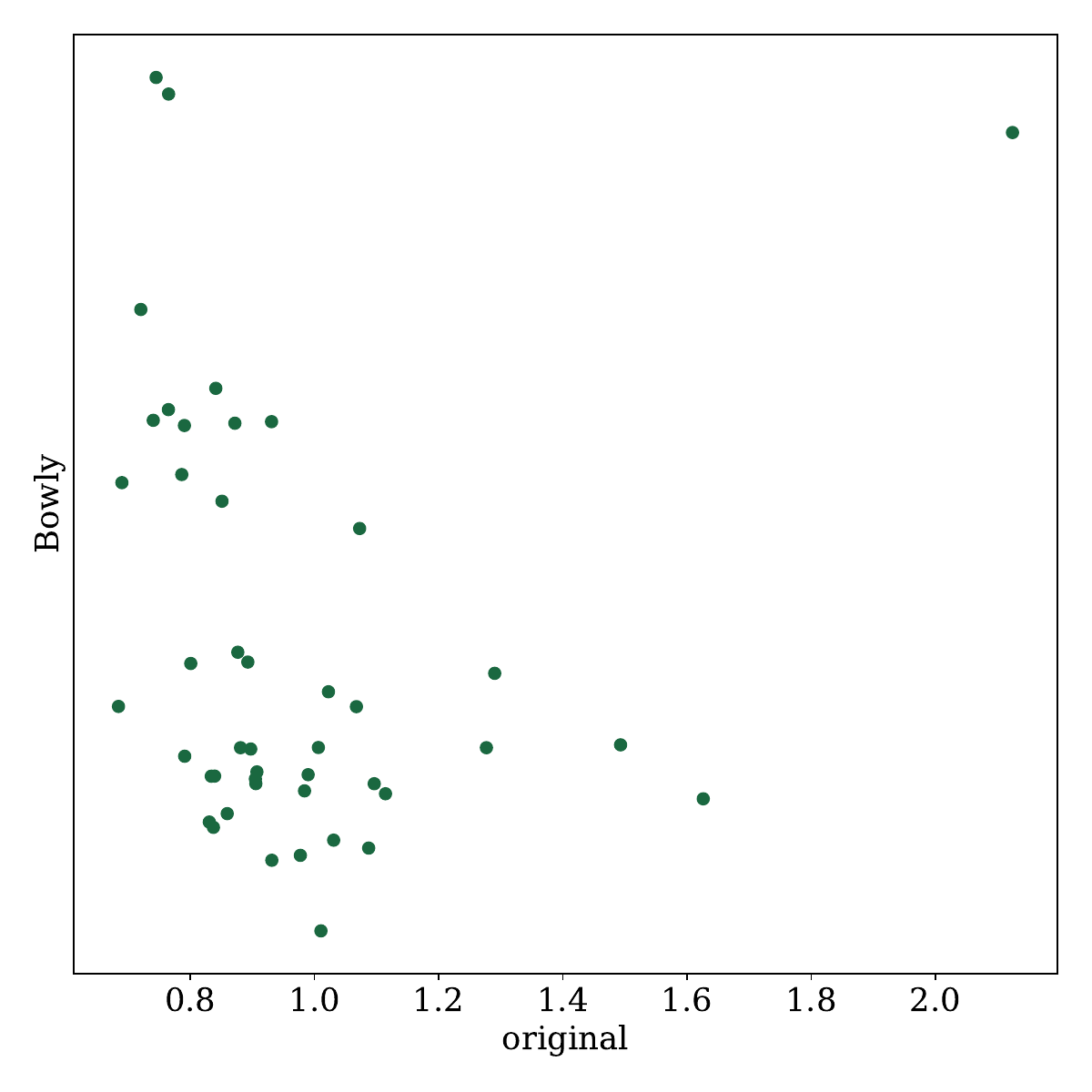}
      \vspace{-0.6cm}
      \caption{Bowly}
      \label{fig:downstream1_ca_bowly}
    \end{subfigure}
    \begin{subfigure}[b]{0.20\textwidth}
      \centering
      \includegraphics[width=\textwidth]{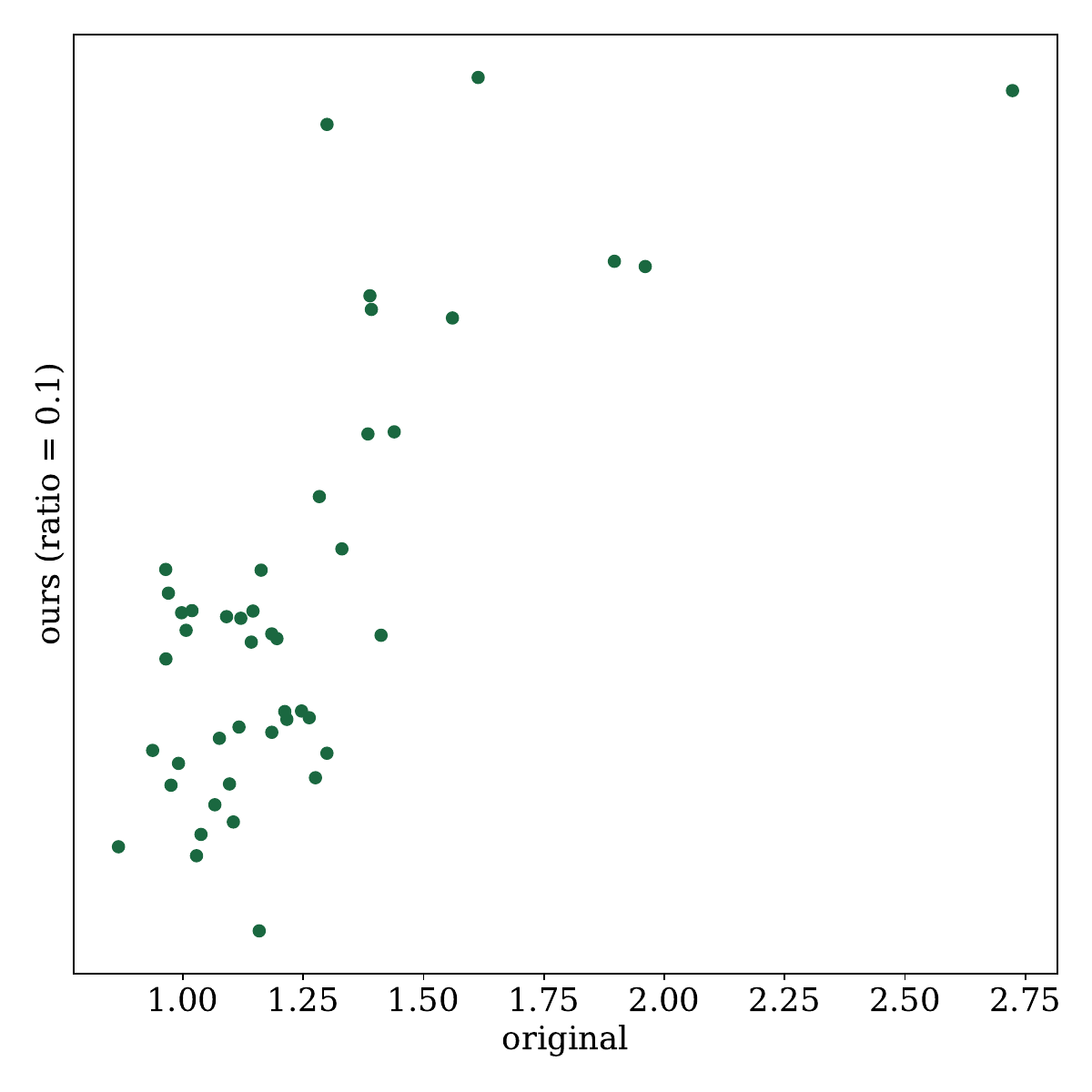}
      \vspace{-0.6cm}
      \caption{ours ($\gamma$ = 0.1)}
      \label{fig:downstream1_ca_ours010}
    \end{subfigure}
    \begin{subfigure}[b]{0.20\textwidth}
      \centering
      \includegraphics[width=\textwidth]{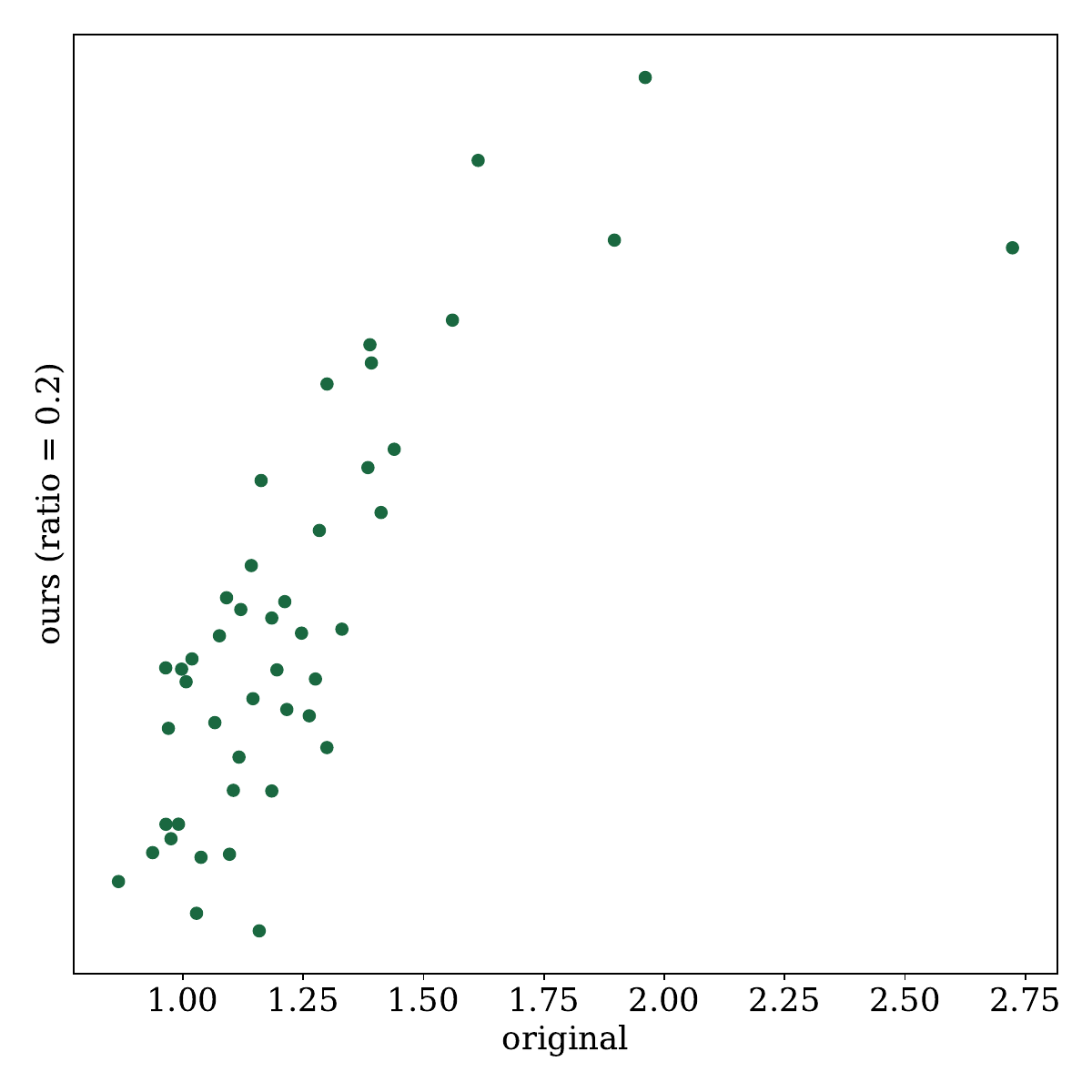}
      \vspace{-0.6cm}
      \caption{ours ($\gamma$ = 0.2)}
      \label{fig:downstream1_ca_ours020}
    \end{subfigure}
    \begin{subfigure}[b]{0.20\textwidth}
      \centering
      \includegraphics[width=\textwidth]{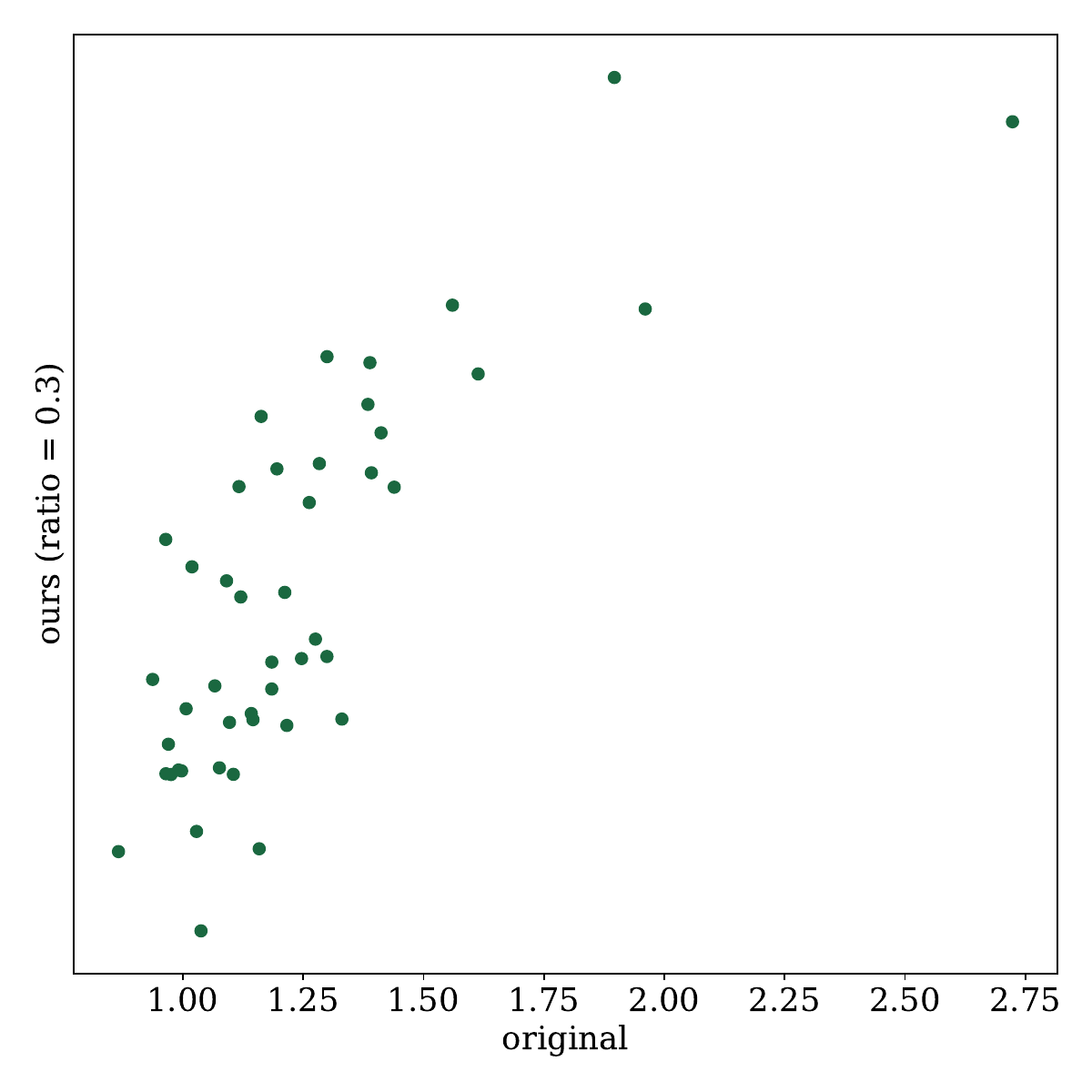}
      \vspace{-0.6cm}
      \caption{ours ($\gamma$ = 0.3)}
      \label{fig:downstream1_ca_ours030}
    \end{subfigure}
  \end{minipage}
  \caption{The solution time of SCIP on the CA with $45$ different hyper-parameter sets.}
    \label{fig:downstream1_visualization_ca}
\end{figure}

\begin{figure}[t]
  \centering
  \begin{minipage}{\textwidth}
    \centering
    \begin{subfigure}[b]{0.20\textwidth}
      \centering
      \includegraphics[width=\textwidth]{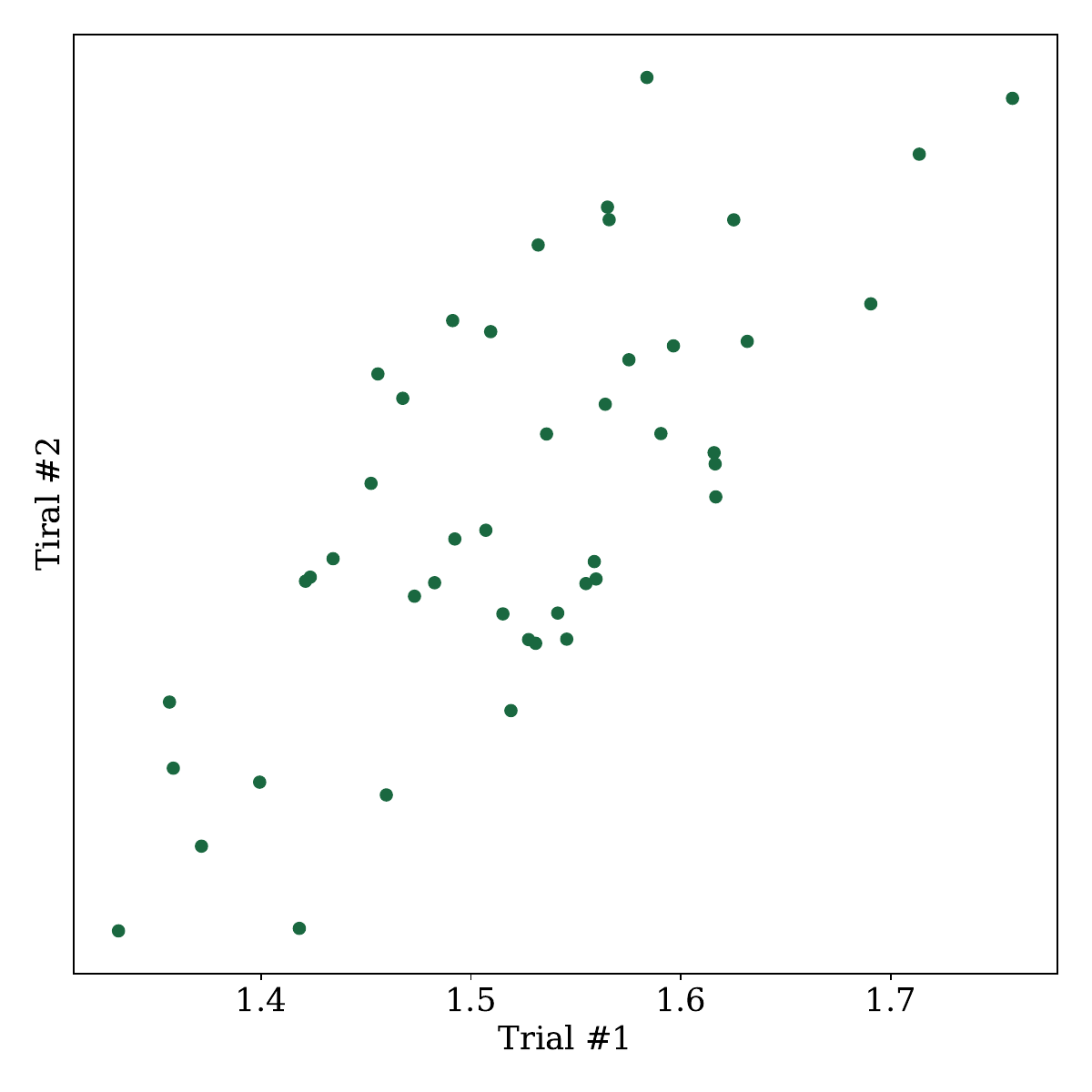}
      \vspace{-0.6cm}
      \caption{two trials}
      \label{fig:downstream1_sc_sc}
    \end{subfigure}
    \centering
    \begin{subfigure}[b]{0.20\textwidth}
      \centering
      \includegraphics[width=\textwidth]{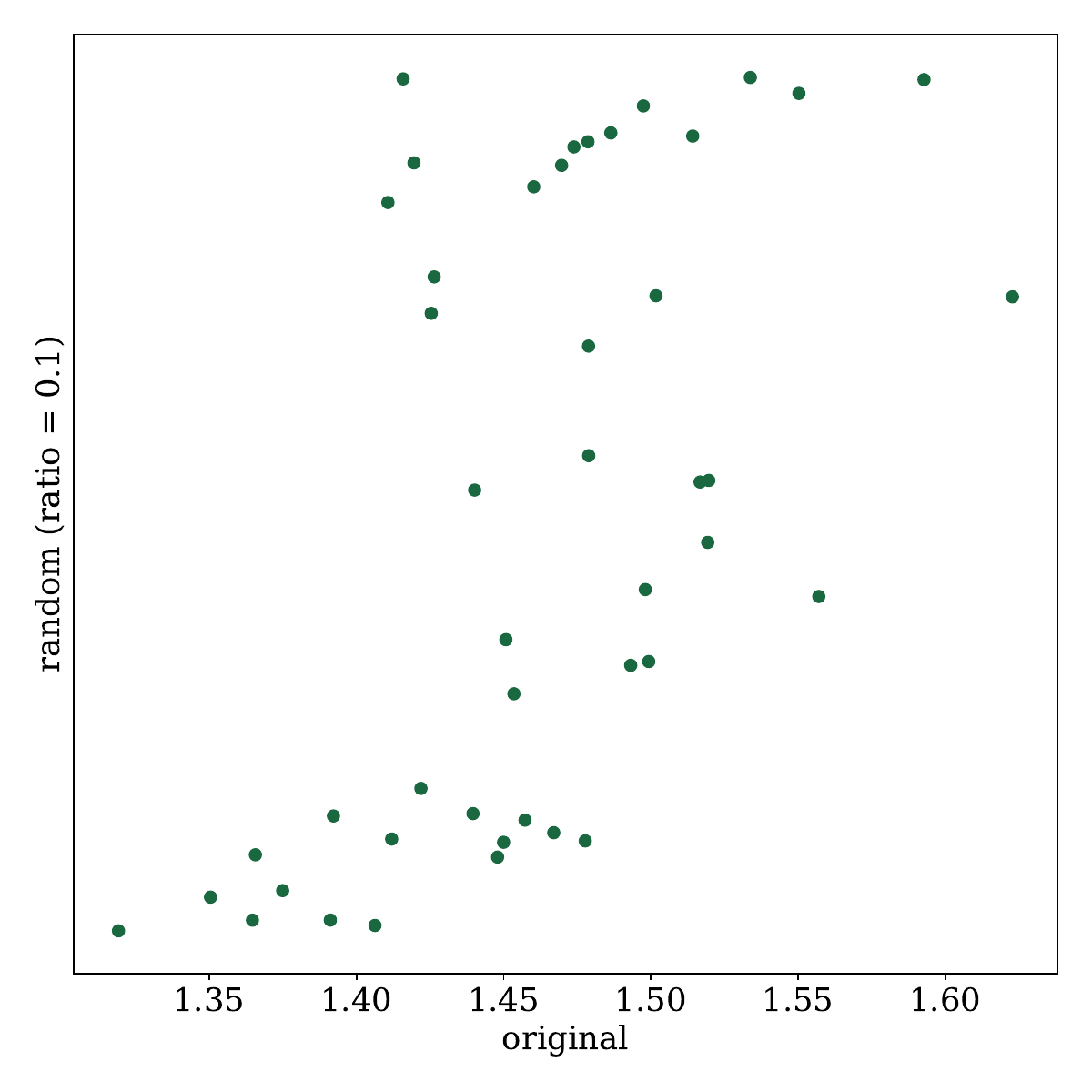}
      \vspace{-0.6cm}
      \caption{random ($\gamma$ = 0.1)}
      \label{fig:downstream1_sc_random010}
    \end{subfigure}
    \begin{subfigure}[b]{0.20\textwidth}
      \centering
      \includegraphics[width=\textwidth]{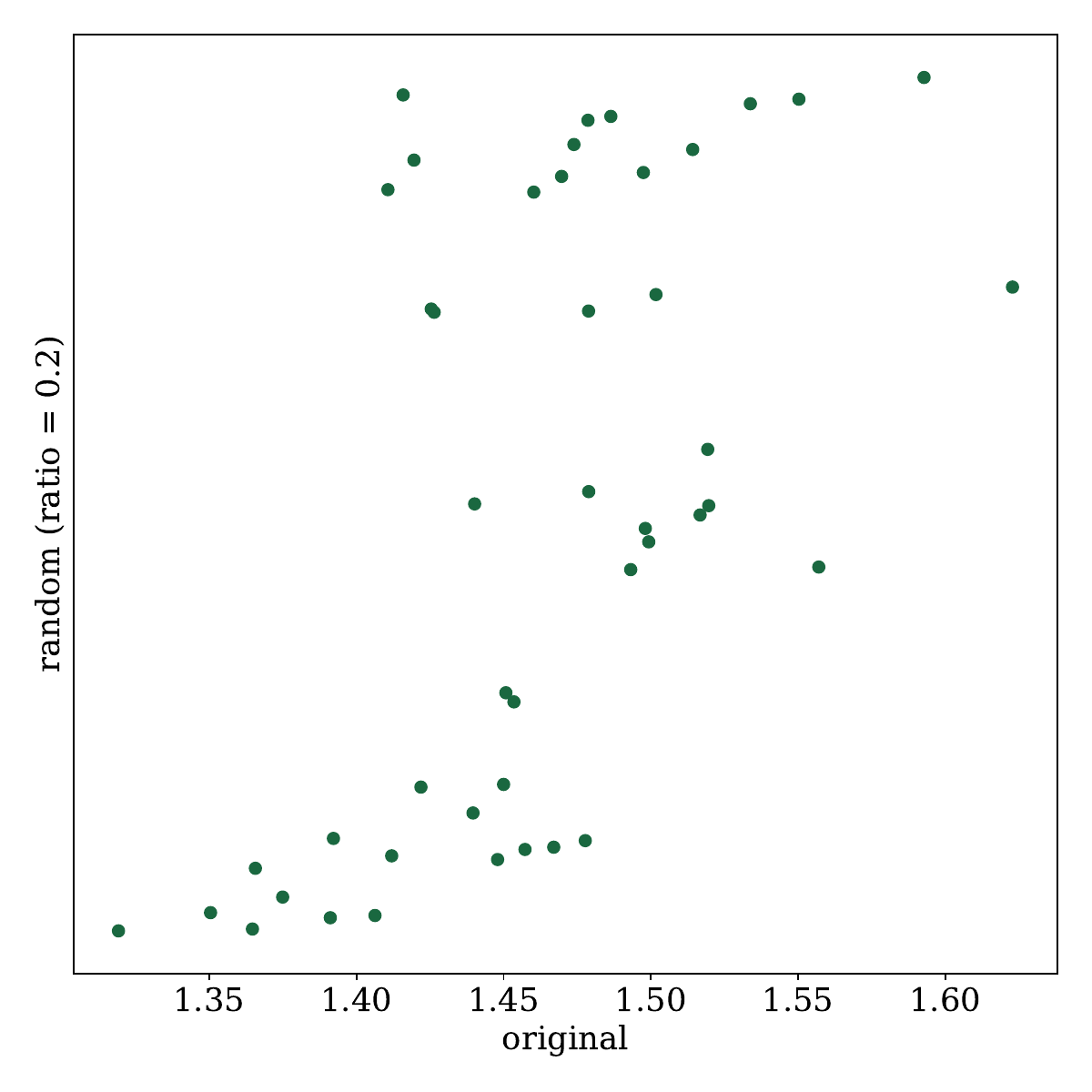}
      \vspace{-0.6cm}
      \caption{random ($\gamma$ = 0.2)}
      \label{fig:downstream1_sc_random020}
    \end{subfigure}
    \begin{subfigure}[b]{0.20\textwidth}
      \centering
      \includegraphics[width=\textwidth]{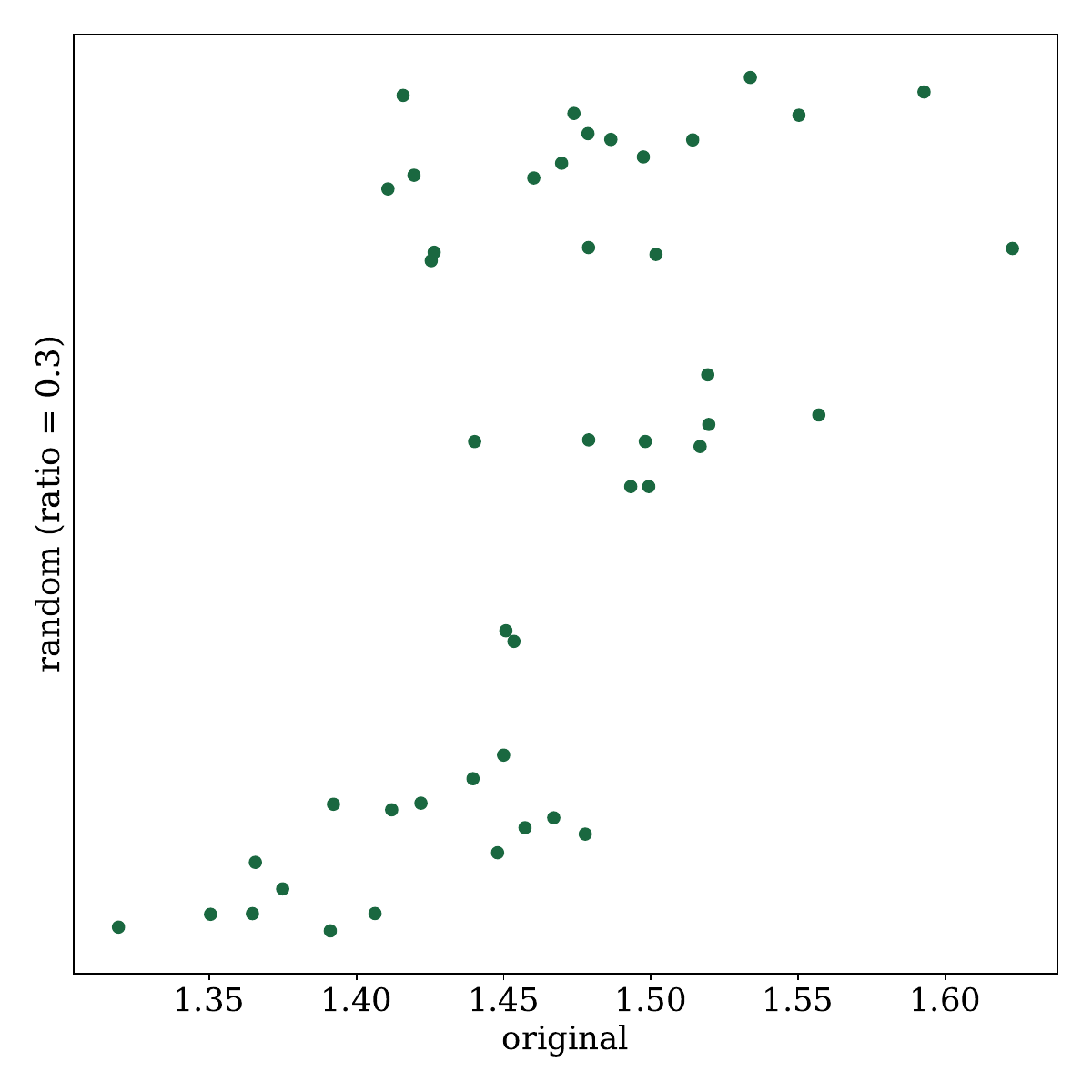}
      \vspace{-0.6cm}
      \caption{random ($\gamma$ = 0.3)}
      \label{fig:downstream1_sc_random030}
    \end{subfigure}
    \vfill
    \vspace{0.0cm} 
    \begin{subfigure}[b]{0.20\textwidth}
      \centering
      \includegraphics[width=\textwidth]{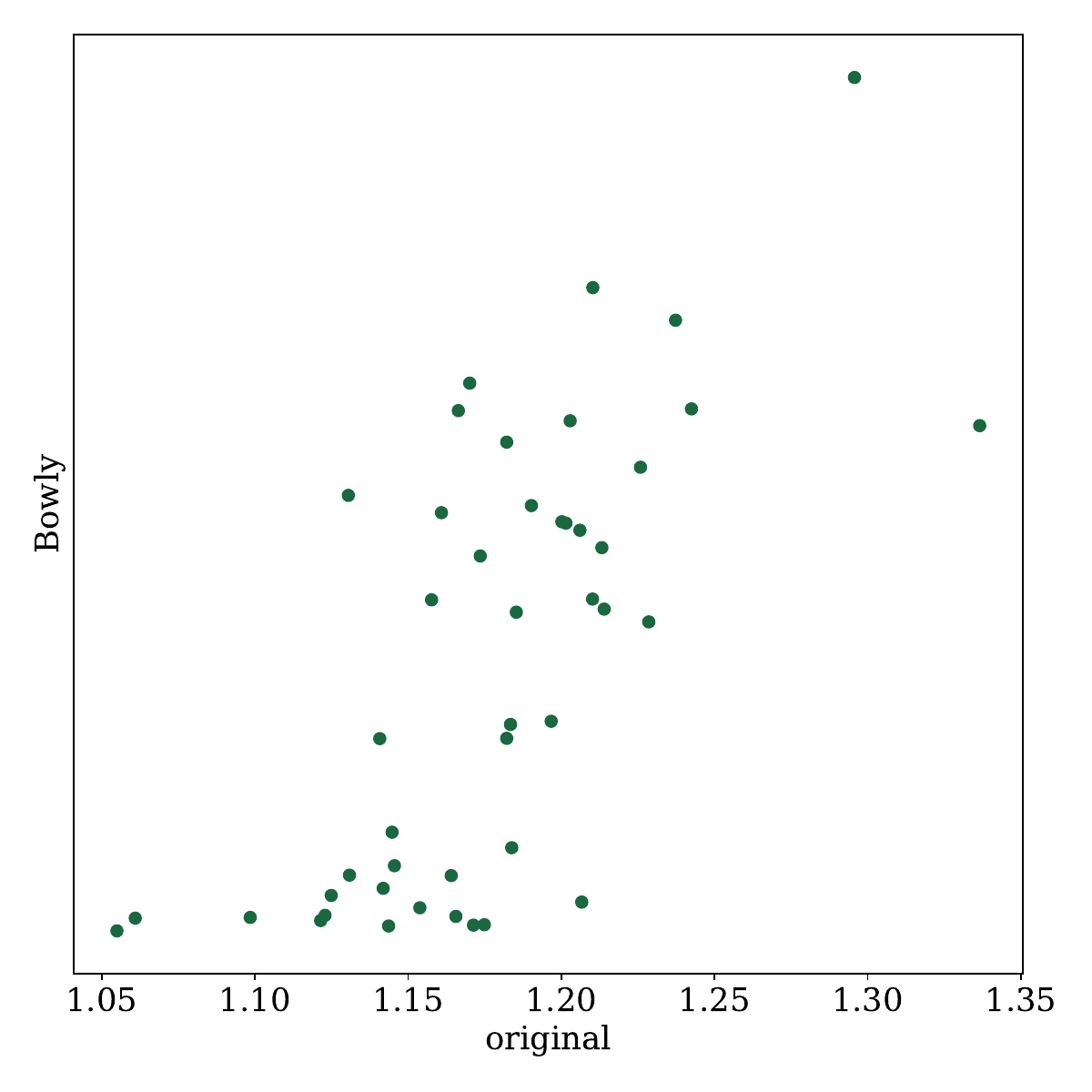}
      \vspace{-0.6cm}
      \caption{Bowly}
      \label{fig:downstream1_sc_bowly}
    \end{subfigure}
    \begin{subfigure}[b]{0.20\textwidth}
      \centering
      \includegraphics[width=\textwidth]{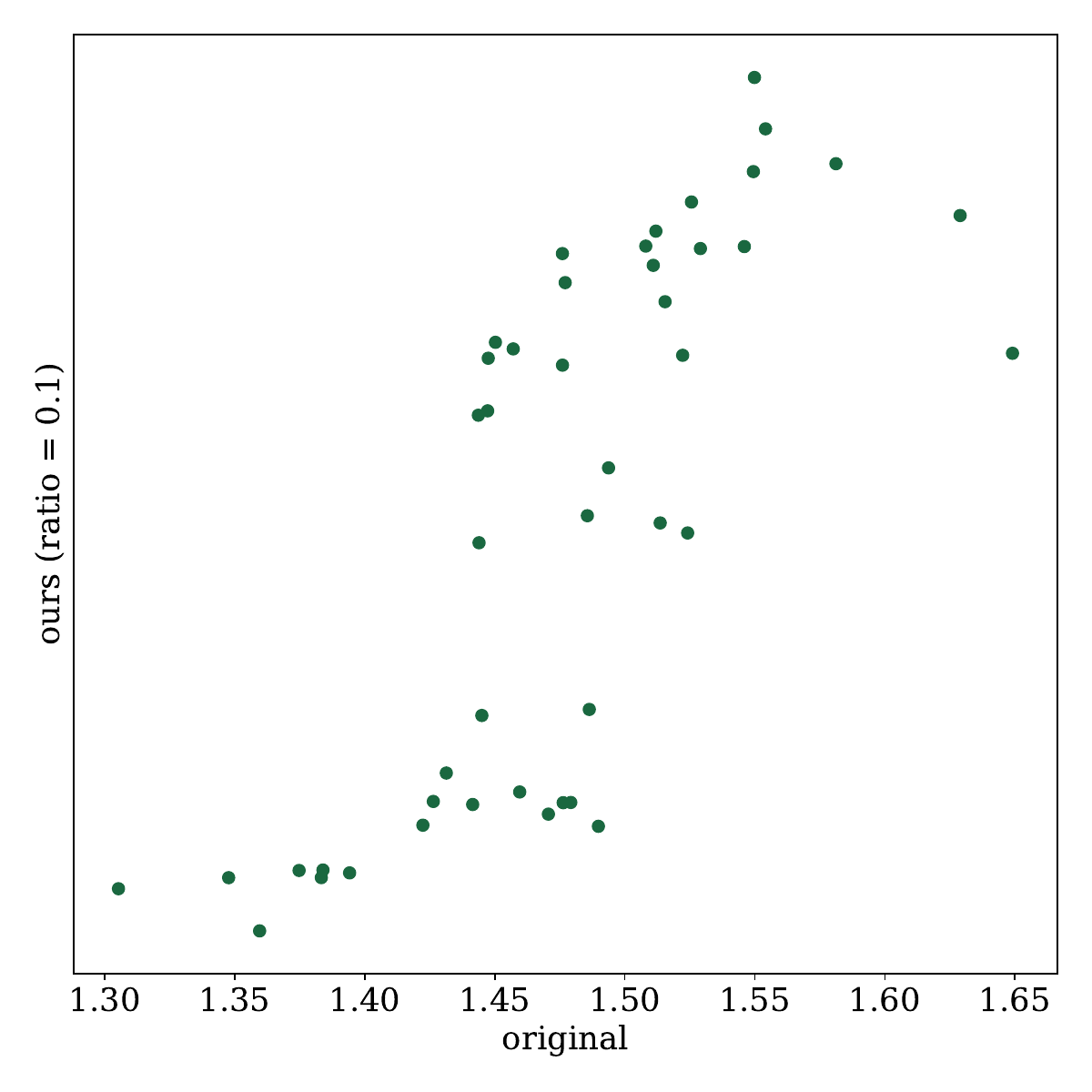}
      \vspace{-0.6cm}
      \caption{ours ($\gamma$ = 0.1)}
      \label{fig:downstream1_sc_ours010}
    \end{subfigure}
    \begin{subfigure}[b]{0.20\textwidth}
      \centering
      \includegraphics[width=\textwidth]{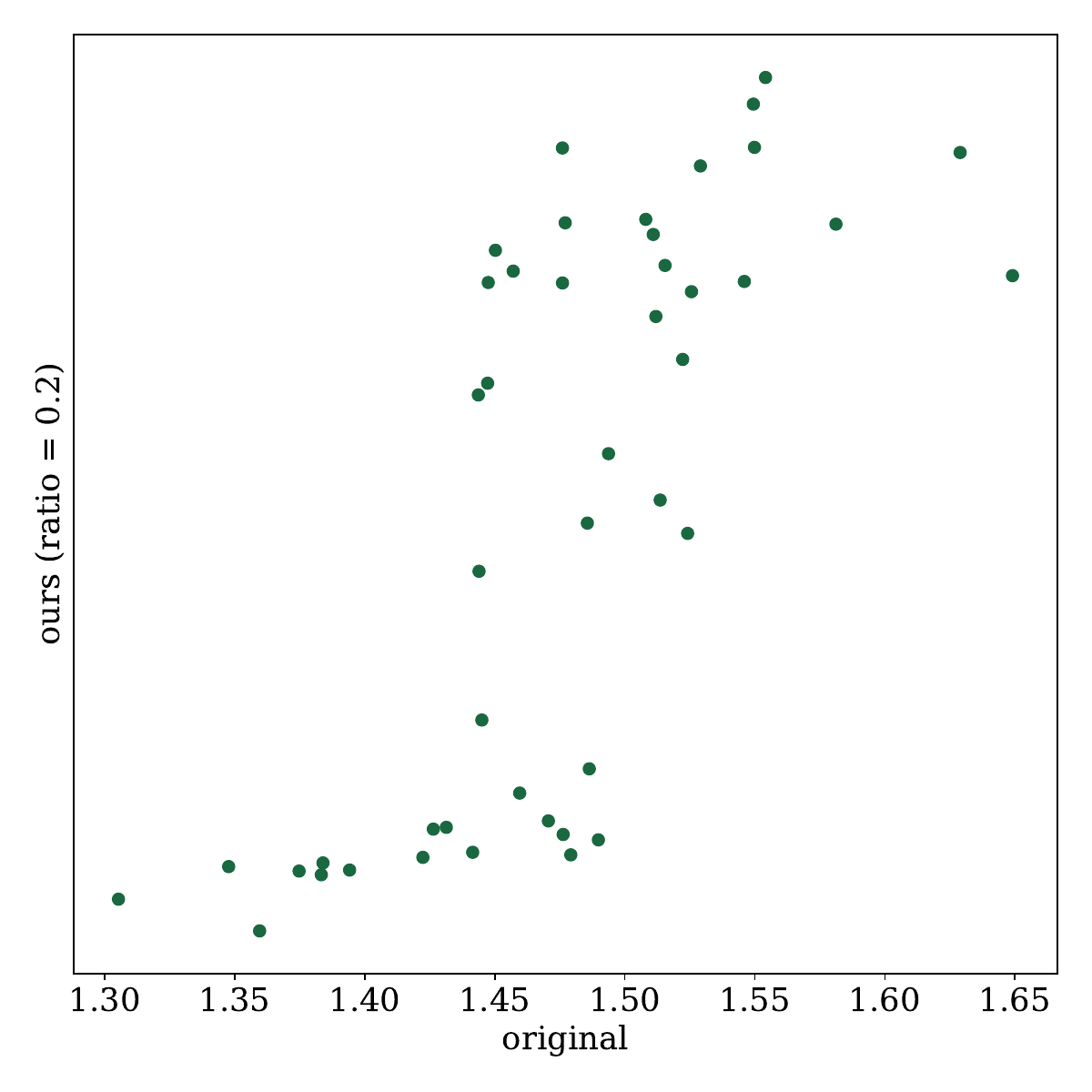}
      \vspace{-0.6cm}
      \caption{ours ($\gamma$ = 0.2)}
      \label{fig:downstream1_sc_ours020}
    \end{subfigure}
    \begin{subfigure}[b]{0.20\textwidth}
      \centering
      \includegraphics[width=\textwidth]{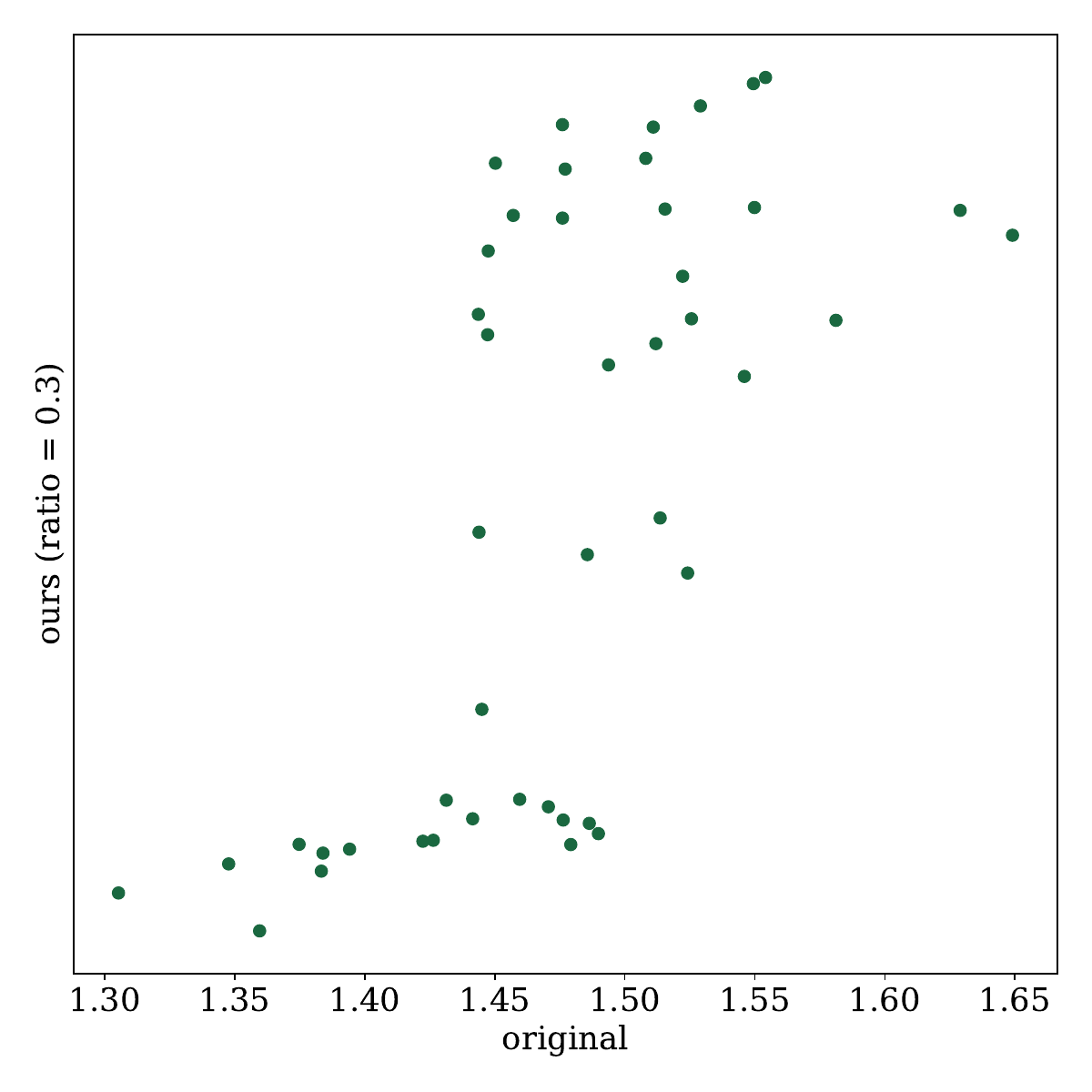}
      \vspace{-0.6cm}
      \caption{ours ($\gamma$ = 0.3)}
      \label{fig:downstream1_sc_ours030}
    \end{subfigure}
  \end{minipage}
  \caption{The solution time of SCIP on the SC with $45$ different hyper-parameter sets.}
    \label{fig:downstream1_visualization_sc}
\end{figure}

\begin{figure}[t]
  \centering
  \begin{minipage}{\textwidth}
    \centering
    \begin{subfigure}[b]{0.20\textwidth}
      \centering
      \includegraphics[width=\textwidth]{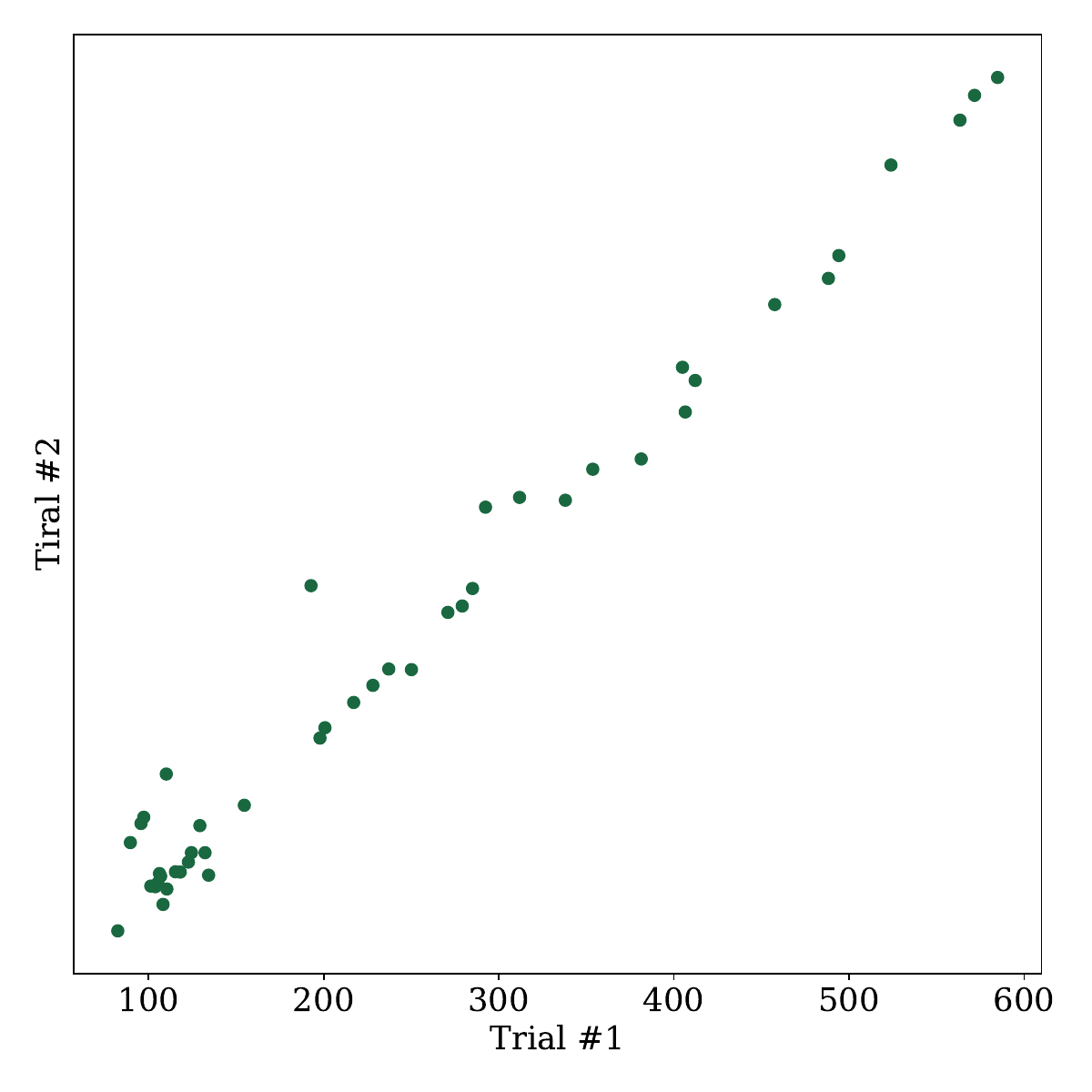}
      \vspace{-0.6cm}
      \caption{two trials}
      \label{fig:downstream1_iis_iis}
    \end{subfigure}
    \centering
    \begin{subfigure}[b]{0.20\textwidth}
      \centering
      \includegraphics[width=\textwidth]{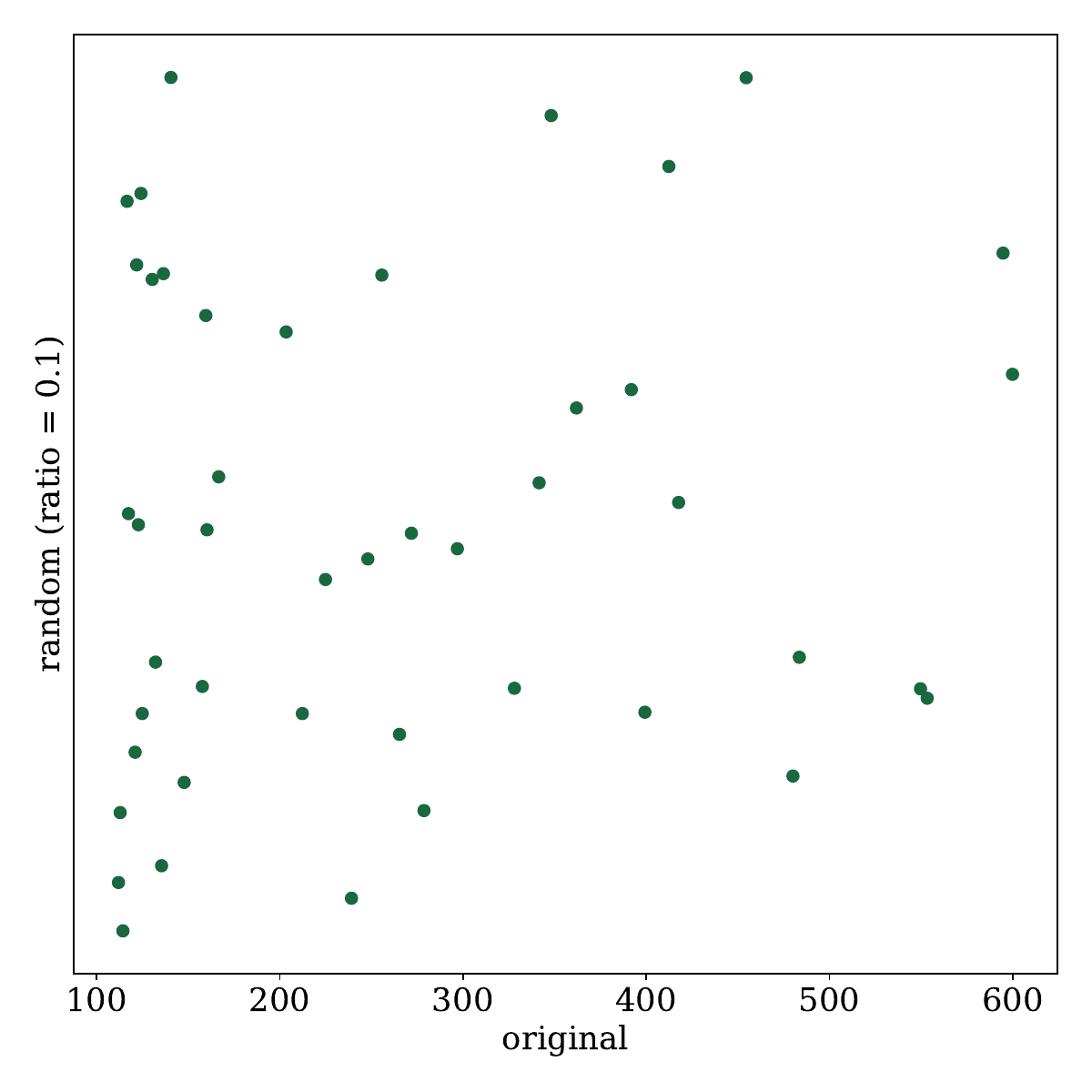}
      \vspace{-0.6cm}
      \caption{random ($\gamma$ = 0.1)}
      \label{fig:downstream1_iis_random010}
    \end{subfigure}
    \begin{subfigure}[b]{0.20\textwidth}
      \centering
      \includegraphics[width=\textwidth]{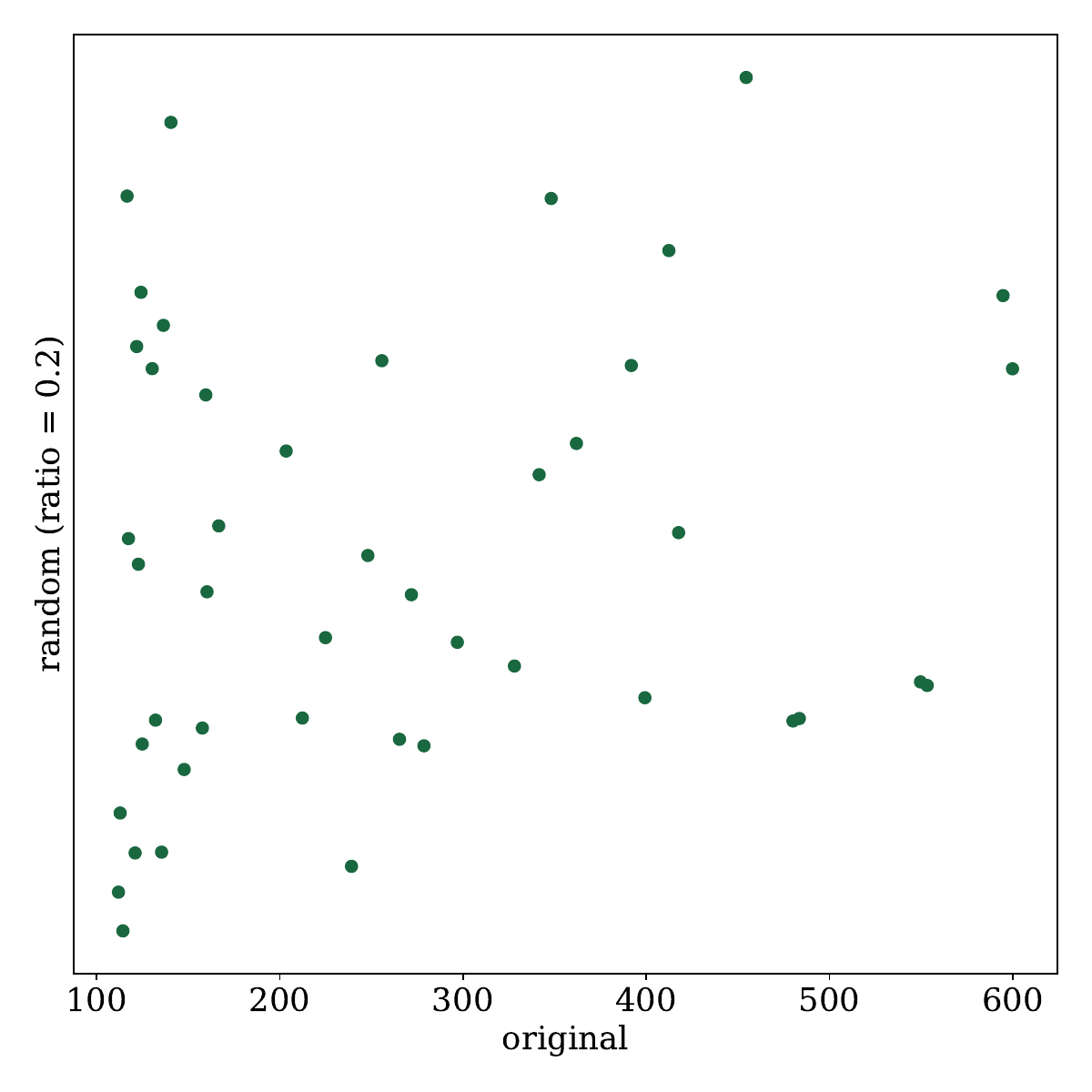}
      \vspace{-0.6cm}
      \caption{random ($\gamma$ = 0.2)}
      \label{fig:downstream1_iis_random020}
    \end{subfigure}
    \begin{subfigure}[b]{0.20\textwidth}
      \centering
      \includegraphics[width=\textwidth]{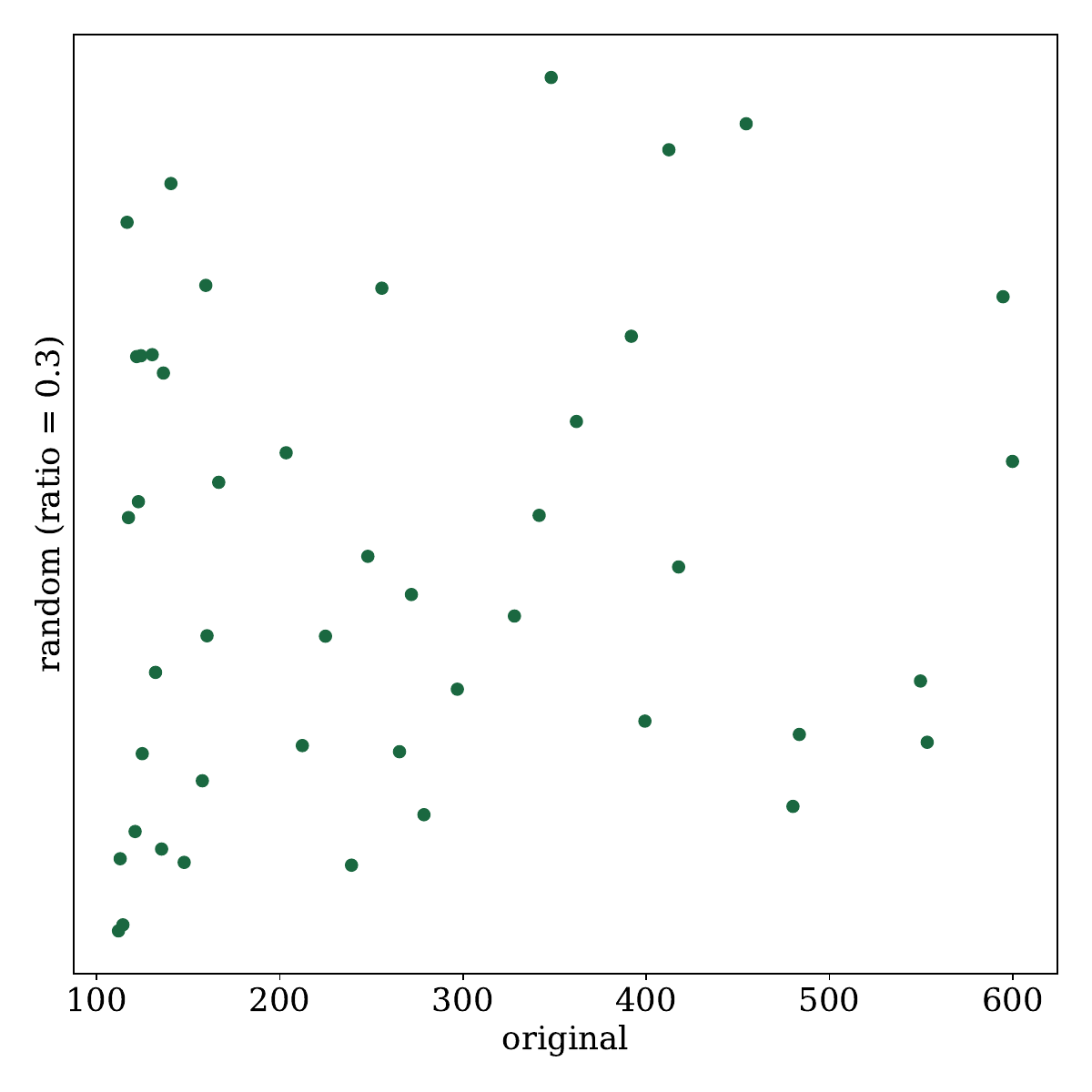}
      \vspace{-0.6cm}
      \caption{random ($\gamma$ = 0.3)}
      \label{fig:downstream1_iis_random030}
    \end{subfigure}
    \vfill
    \vspace{0.0cm} 
    \begin{subfigure}[b]{0.20\textwidth}
      \centering
      \includegraphics[width=\textwidth]{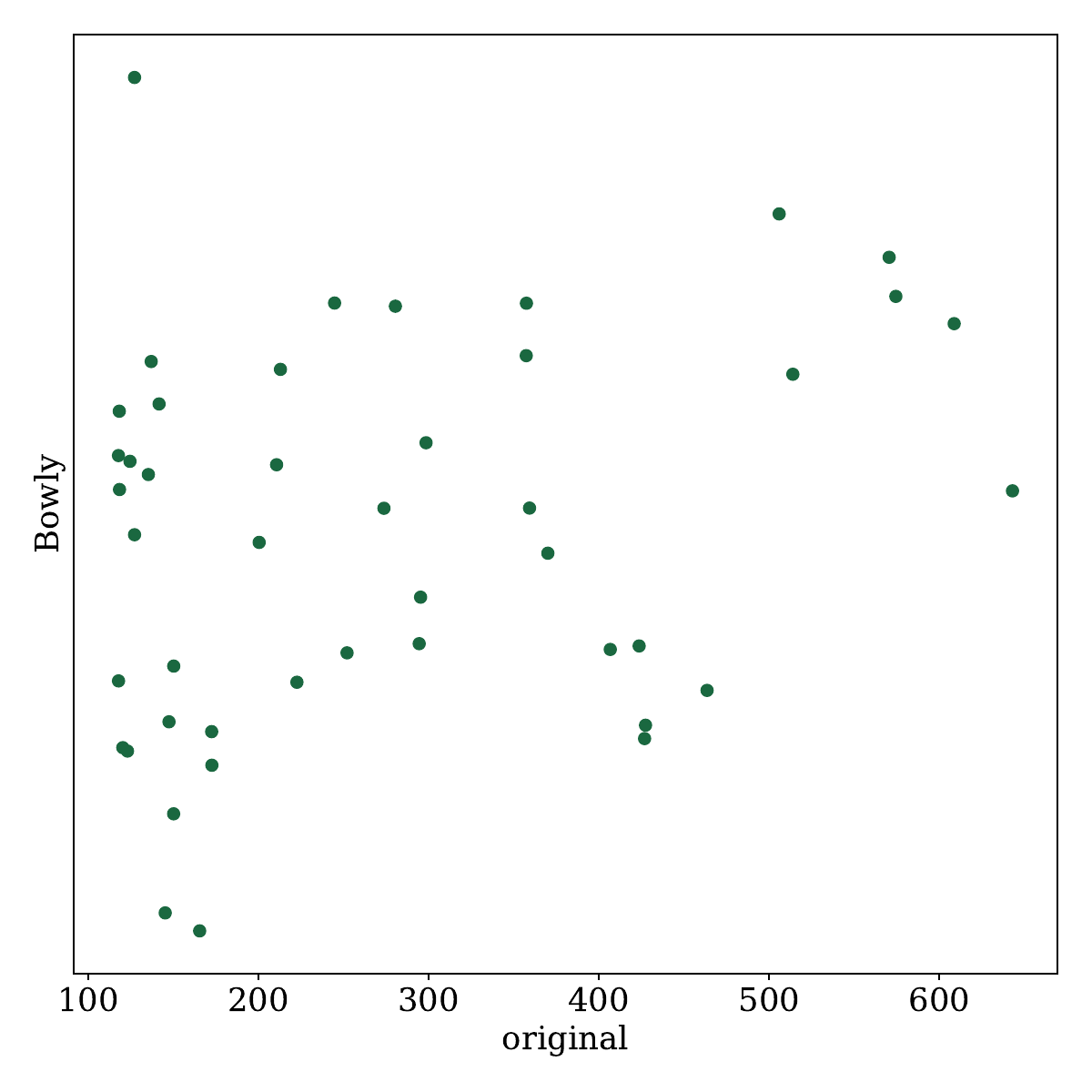}
      \vspace{-0.6cm}
      \caption{Bowly}
      \label{fig:downstream1_iis_bowly}
    \end{subfigure}
    \begin{subfigure}[b]{0.20\textwidth}
      \centering
      \includegraphics[width=\textwidth]{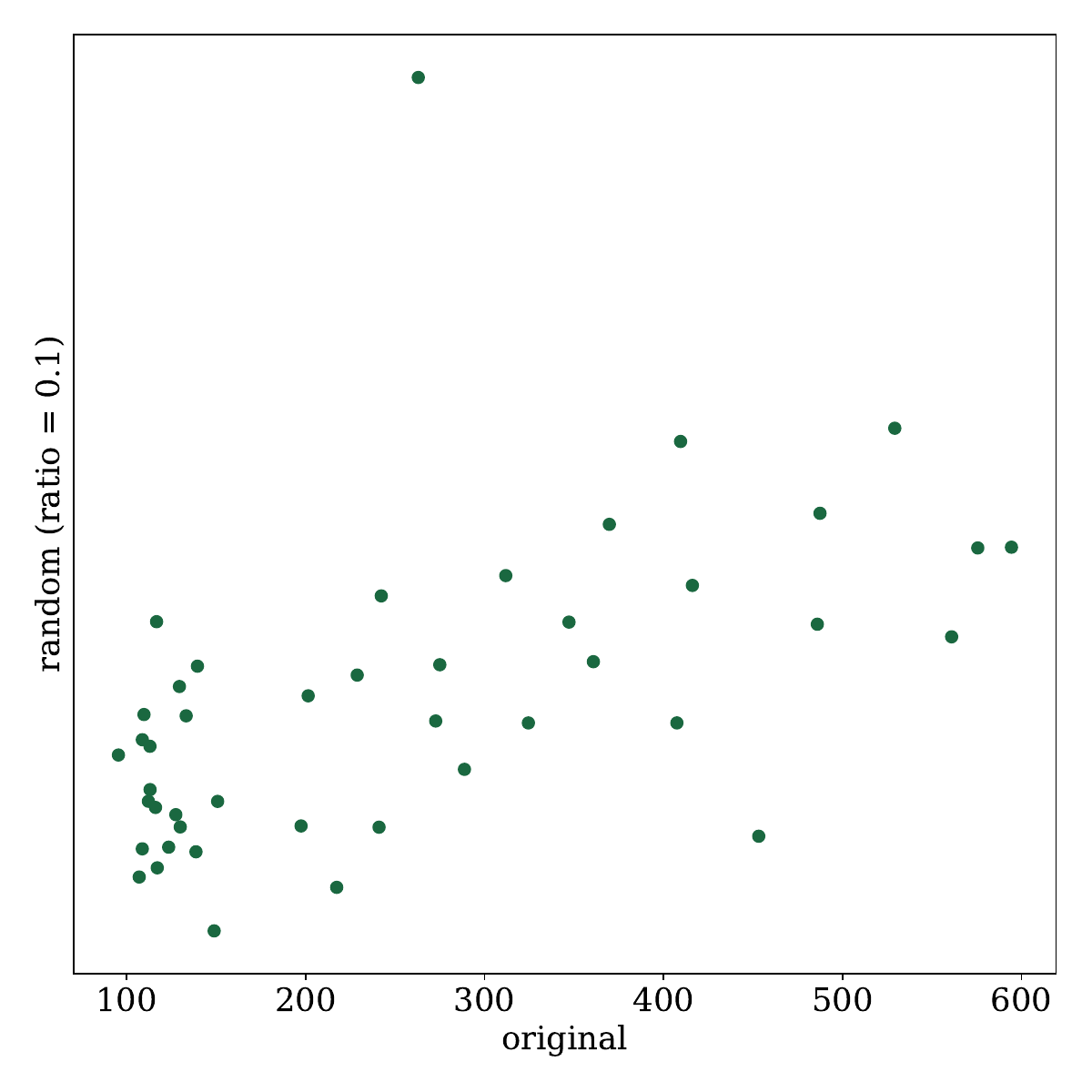}
      \vspace{-0.6cm}
      \caption{ours ($\gamma$ = 0.1)}
      \label{fig:downstream1_iis_ours010}
    \end{subfigure}
    \begin{subfigure}[b]{0.20\textwidth}
      \centering
      \includegraphics[width=\textwidth]{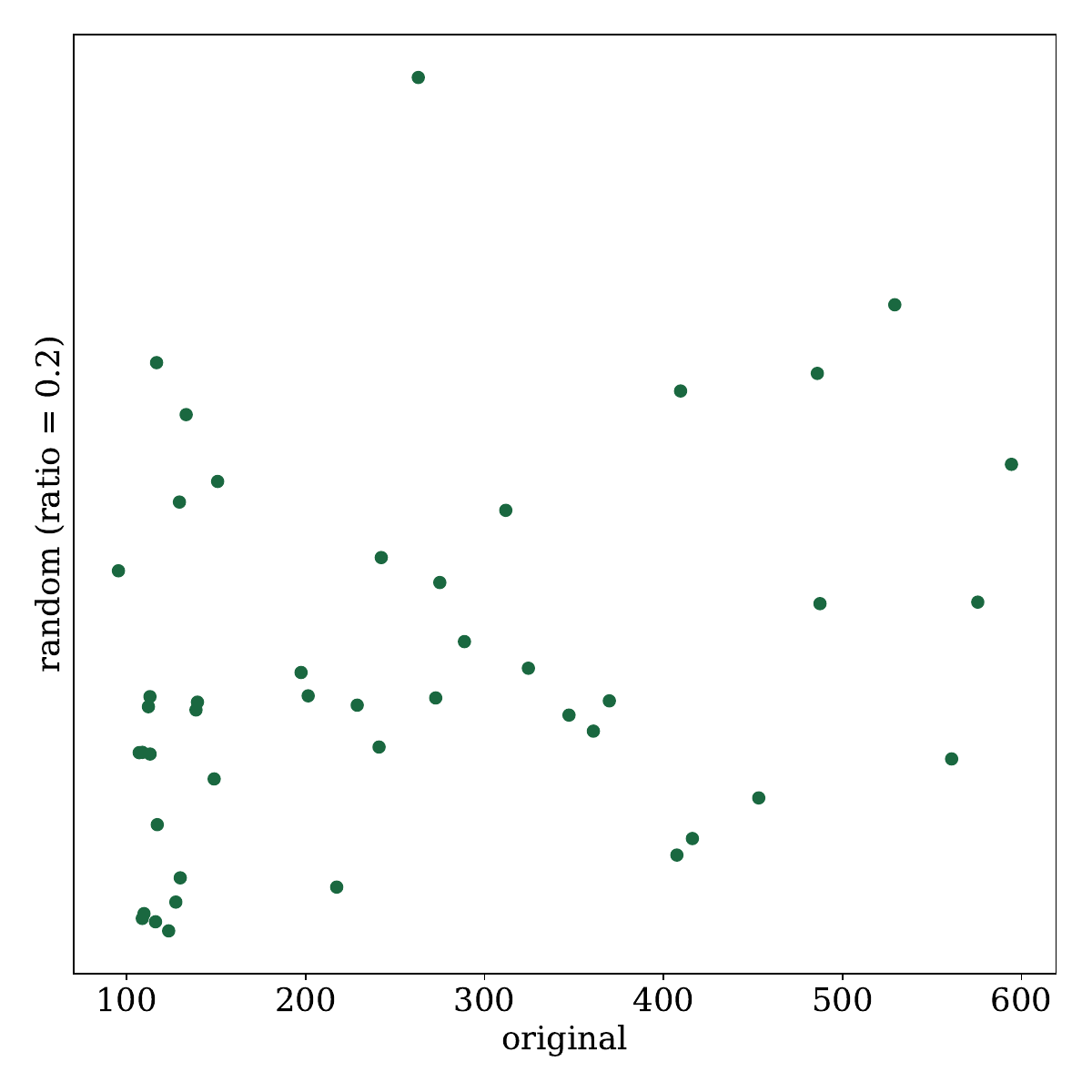}
      \vspace{-0.6cm}
      \caption{ours ($\gamma$ = 0.2)}
      \label{fig:downstream1_iis_ours020}
    \end{subfigure}
    \begin{subfigure}[b]{0.20\textwidth}
      \centering
      \includegraphics[width=\textwidth]{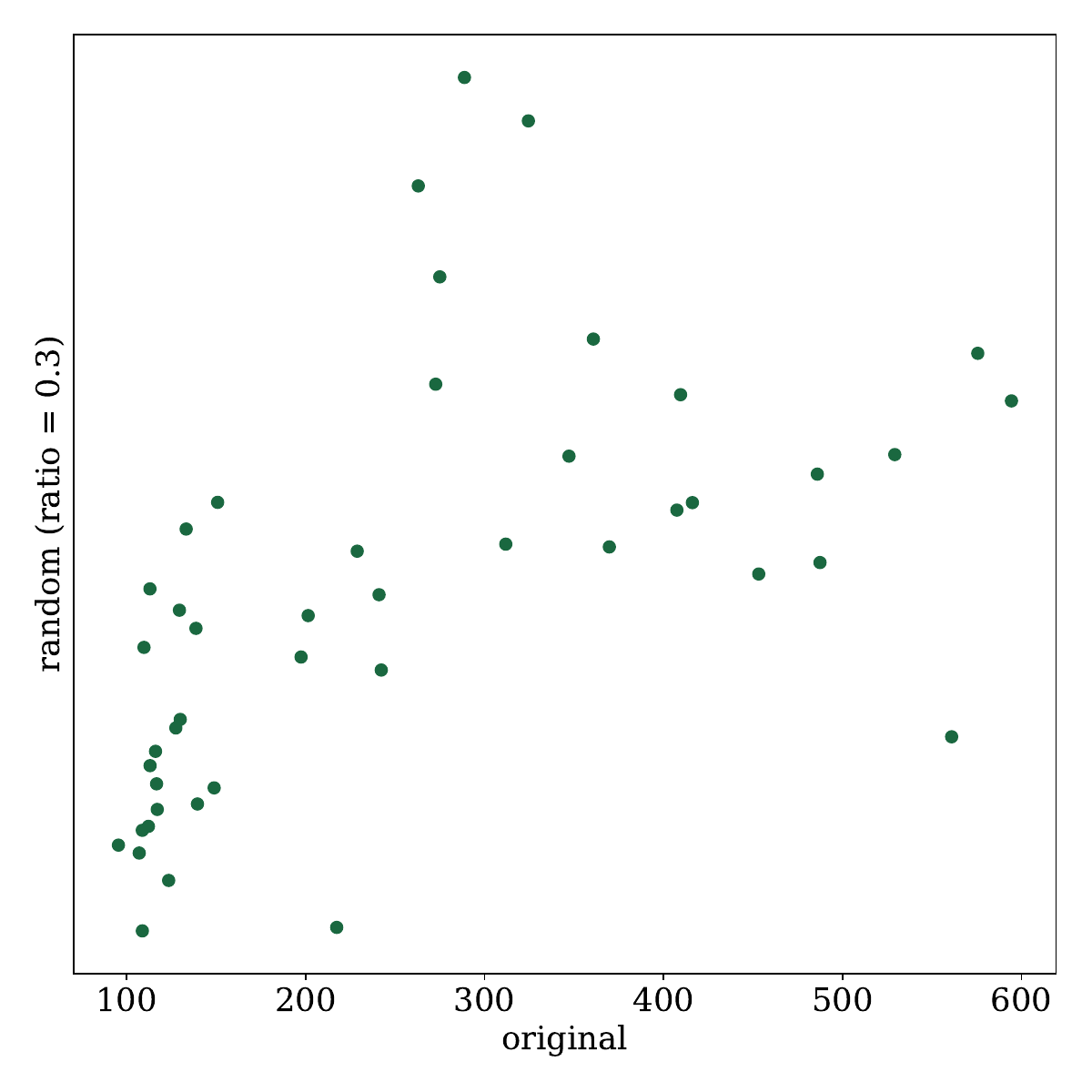}
      \vspace{-0.6cm}
      \caption{ours ($\gamma$ = 0.3)}
      \label{fig:downstream1_iis_ours030}
    \end{subfigure}
  \end{minipage}
  \caption{The solution time of SCIP on the IIS with $45$ different hyper-parameter sets.}
    \label{fig:downstream1_visualization_iis}
\end{figure}

\end{document}